\newcolumntype{M}[1]{>{\centering\arraybackslash}m{#1}}
\LetLtxMacro{\oldtextsc}{\textsc}
\renewcommand{\textsc}[1]{\oldtextsc{\scalefont{1.10}#1}}
\definecolor{shadecolor}{gray}{0.9}
\DeclareRobustCommand{\parhead}[1]{\textbf{#1}~}
\newcounter{parcount}
\lstdefinestyle{mystyle}{
    commentstyle=\color{OliveGreen},
    numberstyle=\tiny\color{black!60},
    stringstyle=\color{BrickRed},
    basicstyle=\ttfamily\scriptsize,
    breakatwhitespace=false,
    breaklines=true,
    captionpos=b,
    keepspaces=true,
    numbers=none,
    numbersep=5pt,
    showspaces=false,
    showstringspaces=false,
    showtabs=false,
    tabsize=2
}
\DeclareMathOperator*{\argmax}{arg\,max}
\DeclareMathOperator*{\argmin}{arg\,min}
\crefname{lemma}{lemma}{lemmas}
\Crefname{lemma}{Lemma}{Lemmas}
\crefname{thm}{theorem}{theorems}
\Crefname{thm}{Theorem}{Theorems}
\crefname{prop}{proposition}{propositions}
\Crefname{prop}{Proposition}{Propositions}
\crefname{assumption}{assumption}{assumptions}
\Crefname{assumption}{Assumption}{Assumptions}
\crefname{equation}{eq.}{eqs.}
\Crefname{equation}{Eq.}{Eqs.}
\Crefname{section}{\S}{\S}
\newtheorem{thm}{Theorem} % reset theorem numbering for each chapter
\newtheorem{lemma}[thm]{Lemma}
\newtheorem{assumption}{Assumption}
\newtheorem{corollary}[thm]{Corollary}
\renewcommand{\mid}{~\vert~}
\newcommand\dif{\mathop{}\!\mathrm{d}}
\newcommand{\cN}{\mathcal{N}}
\newcommand{\g}{\, | \,}
\newcommand{\s}{\, ; \,}
\newcommand{\E}[2]{\mathbb{E}_{#1}\left[#2\right]}
\def\adl@drawiv#1#2#3{%
        \hskip.5\tabcolsep
        \xleaders#3{#2.5\@tempdimb #1{1}#2.5\@tempdimb}%
                #2\z@ plus1fil minus1fil\relax
        \hskip.5\tabcolsep}
\newcommand{\cdashlinelr}[1]{%
  \noalign{\vskip\aboverulesep
           \global\let\@dashdrawstore\adl@draw
           \global\let\adl@draw\adl@drawiv}
  \cdashline{#1}
  \noalign{\global\let\adl@draw\@dashdrawstore
           \vskip\belowrulesep}}
\newenvironment{proofsk}{%
  \proof}{\endproof}
\newacronym{ELBO}{elbo}{evidence lower bound}
\newacronym{GMM}{gmm}{Gaussian mixture model}
\newacronym{KL}{kl}{Kullback-Leibler}
\newacronym{LDA}{lda}{latent Dirichlet allocation}
\newacronym{SVI}{svi}{stochastic variational inference}
\newacronym{VB}{vb}{variational Bayes}
\newacronym{TV}{tv}{total variation}
\newacronym{VBE}{vbe}{variational Bayes estimate}
\newacronym{VFE}{vfe}{variational frequentist estimate}
\newacronym{LAN}{lan}{local asymptotic normality}
\newacronym{MLE}{mle}{maximum likelihood estimate}
\newacronym{MAP}{map}{maximum-a-posterior}
\newacronym{MCMC}{mcmc}{Markov chain Monte Carlo}
\newacronym{EM}{em}{expectation maximization}
\newacronym{LBFGS}{l-bfgs}{limited-memory Broyden-Fletcher-Goldfarb-Shanno}
\newacronym{ADVI}{advi}{automatic differentiation variational inference}
\newacronym{NUTS}{nuts}{No-U-Turn sampler}
\newacronym{HMC}{hmc}{Hamiltonian Monte Carlo}
\newacronym{GLM}{glm}{generalized linear model}
\newacronym{GLMM}{glmm}{generalized linear mixed model}
\newacronym{LMM}{lmm}{linear mixed model}
\newacronym{SBM}{sbm}{stochastic block model}
\newacronym{IF}{if}{influence function}
\newacronym{PF}{pf}{Poisson factorization}
\newacronym[\glsshortpluralkey={rpm}]
{RPM}{rpm}{reweighted probabilistic model}
\newacronym{SNR}{snr}{signal-to-noise ratio}
\title{Variational Bayes under Model Misspecification}
\author{
  Yixin Wang\\
  Columbia University\\
  \And
  David M.~Blei\\
  Columbia University
}
\begin{document}

\maketitle

\begin{bibunit}[alp]
% !TEX root = vbmisspec.tex
\begin{abstract}
\Gls{VB} is a scalable alternative to \gls{MCMC} for Bayesian
posterior inference. Though popular, \gls{VB} comes with few
theoretical guarantees, most of which focus on well-specified models.
However, models are rarely well-specified in practice. In this work,
we study \gls{VB} under model misspecification. We prove the \gls{VB}
posterior is asymptotically normal and centers at the value that
\emph{minimizes} the \gls{KL} divergence to the true data-generating
distribution. Moreover, the \gls{VB} posterior mean centers at the
same value and is also asymptotically normal. These results generalize
the variational Bernstein--von Mises theorem
\citep{wang2018frequentist} to misspecified models. As a consequence
of these results, we find that the model misspecification error
\emph{dominates} the variational approximation error in \gls{VB}
posterior predictive distributions. It explains the widely observed
phenomenon that \gls{VB} achieves comparable predictive accuracy with
\gls{MCMC} even though \gls{VB} uses an approximating family. As
illustrations, we study \gls{VB} under three forms of model
misspecification, ranging from model over-/under-dispersion to latent
dimensionality misspecification. We conduct two simulation studies
that demonstrate the theoretical results.
\end{abstract}

% !TEX root = vbmisspec.tex
\section{Introduction}
\label{sec:introduction}
\glsreset{VB}

Bayesian modeling uses posterior inference to discover patterns in
data.  Begin by positing a probabilistic model that describes the
generative process; it is a joint distribution of latent variables and
the data. The goal is to infer the posterior, the conditional
distribution of the latent variables given the data. The inferred
posterior reveals hidden patterns of the data and helps form
predictions about new data.~\looseness=-1

For many models, however, the posterior is computationally
difficult---it involves a marginal probability that takes the form of
an integral. Unless that integral admits a closed-form expression (or
the latent variables are low-dimensional) it is intractable to
compute.

To circumvent this intractability, investigators rely on approximate
inference strategies such as \gls{VB}.  \gls{VB} approximates the
posterior by solving an optimization problem. First propose an
approximating family of distributions that contains all factorizable
densities; then find the member of this family that minimizes the
\gls{KL} divergence to the (computationally intractable) exact
posterior. Take this minimizer as a substitute for the posterior and
carry out downstream data analysis.~\looseness=-1

\gls{VB} scales to large datasets and works empirically in many
difficult models.  However, it comes with few theoretical guarantees,
most of which focus on well-specified models. For example,
\citet{wang2018frequentist} establish the consistency and asymptotic
normality of the \gls{VB} posterior, assuming the data is generated by
the probabilistic model. Under a similar assumption of a
well-specified model, \citet{zhang2017convergence} derive the
convergence rate of the \gls{VB} posterior in settings with
high-dimensional latent variables.~\looseness=-1

But as George Box famously quipped, ``all models are wrong.''
Probabilistic models are rarely well-specified in practice. Does
\gls{VB} still enjoy good theoretical properties under model
misspecification? What about the \gls{VB} posterior predictive
distributions? These are the questions we study in this paper.

\parhead{Main idea.} We study \gls{VB} under model misspecification.
Under suitable conditions, we show that (1) the \gls{VB} posterior is
asymptotically normal, centering at the value that \emph{minimizes}
the \gls{KL} divergence from the true distribution; (2) the \gls{VB}
posterior mean centers at the same value and is asymptotically normal;
(3) in the variational posterior predictive, the error due to model
misspecification dominates the error due to the variational
approximation.

Concretely, consider $n$ data points $x_{1:n}$ independently and
identically distributed with a true density
$\prod_{i=1}^np_0(x_i)$. Further consider a parametric probabilistic
model with a $d$-dimensional latent variable $\theta = \theta_{1:d}$;
its density belongs to the family
$\{\prod_{i=1}^np(x_i \g \theta):\theta \in
\mathbb{R}^d\}$.\footnote{A parametric probabilistic model means the
  dimensionality of the latent variables do not grow with the number
  of data points. We extend these results to more general
  probabilistic models in \Cref{sec:misspecify-gen}.}  When the model
is misspecified, it does not contain the true density,
$p_0(x)\notin \{p(x \g \theta):\theta\in\Theta\}$.

Placing a prior $p(\theta)$ on the latent variable $\theta$, we infer
its posterior $p(\theta\g x_{1:n})$ using \gls{VB}. Mean-field
\gls{VB} considers an approximating family $\mathcal{Q}$ that includes
all factorizable densities
\begin{align*}
\mathcal{Q} = \left\{q(\theta): q(\theta) = \textstyle \prod_{i=1}^dq_i(\theta_i)\right\}.
\end{align*}
It then finds the member that minimizes the \gls{KL} divergence to the
exact posterior $p(\theta\g x_{1:n})$,
\begin{align}
  \label{eq:vboptimize}
  q^*(\theta) = \argmin_{q\in\mathcal{Q}}\gls{KL}(q(\theta)|| p(\theta \g x_{1:n})).
\end{align}
The global minimizer $q^*(\theta)$ is called the \gls{VB} posterior.
(Here we focus on mean-field \gls{VB}. The results below apply to \gls{VB}
with more general approximating families as well.)

We first study the asymptotic properties of the \gls{VB} posterior and
its mean. Denote $\theta^*$ as the value of $\theta$ that
minimizes the
\gls{KL} divergence to the true distribution,
\begin{align}
\label{eq:paramoptimaltheta}
\theta^* = \argmin_{\theta}\gls{KL}(p_0(x)|| p(x \g \theta)).
\end{align}
Note this KL divergence is different from the variational objective
(\Cref{eq:vboptimize}); it is a property of the model class's
relationship to the true density.  We show that, under standard
conditions, the \gls{VB} posterior $q^*(\theta)$ converges in
distribution to a point mass at $\theta^*$. Moreover, the \gls{VB}
posterior of the rescaled and centered latent variable $\tilde{\theta}
= \sqrt{n}(\theta - \theta^*)$ is asymptotically normal.  Similar
asymptotics hold for the \gls{VB} posterior mean
$\hat{\theta}_{\mathrm{VB}} = \int \theta\cdot q^*(\theta)\dif
\theta$: it converges almost surely to $\theta^*$ and is
asymptotically normal.~\looseness=-1

Why does the \gls{VB} posterior converge to a point mass at
$\theta^*$? The reason rests on three observations. (1) The classical
Bernstein--von Mises theorem under model misspecification
\citep{kleijn2012bernstein} says that the exact posterior
$p(\theta\g x_{1:n})$ converges to a point mass at $\theta^*$. (2)
Because point masses are factorizable, this limiting exact posterior
belongs to the approximating family $\mathcal{Q}$: if
$\theta^* = (\theta^*_1, \theta^*_2, \theta^*_3)$, then
$\delta_{\theta^*}(\theta) = \delta_{\theta^*_1}(\theta_1)\cdot
\delta_{\theta^*_2}(\theta_2) \cdot \delta_{\theta^*_3}(\theta_3)$.
(3) \gls{VB} seeks the member in $\mathcal{Q}$ that is closest to the
exact posterior (which also belongs to $\mathcal{Q}$, in the
limit). Therefore, the \gls{VB} posterior also converges to a point
mass at $\theta^*$.  \Cref{fig:intution} illustrates this
intuition---as we see more data, the posterior gets closer to the
variational family. We make this argument rigorous in
\Cref{sec:parametric}.

\begin{figure}[t]
\centering

\includegraphics[width=0.73\textwidth]{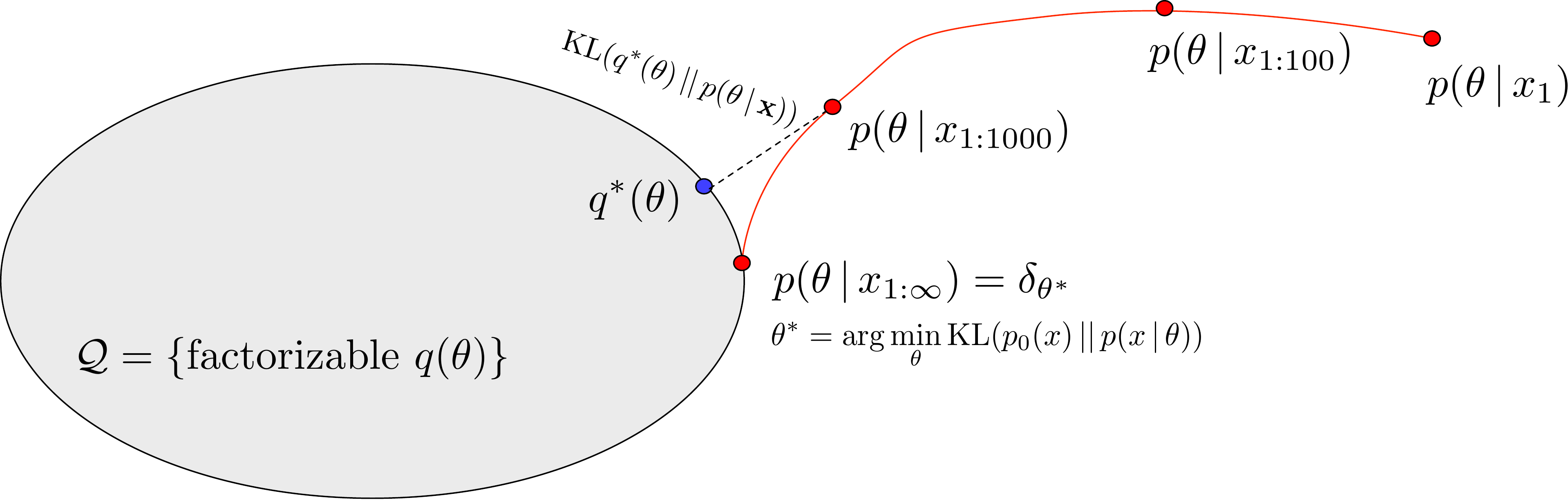}

\caption{Why does the \gls{VB} posterior converge to a point mass at
  $\theta^*$?  The intuition behind this figure is described in
  \Cref{sec:introduction}. In the figure, $q^*(x)$ is the optimal \gls{VB}
  posterior given $x_{1:1000}$.}
\label{fig:intution}
\end{figure}

The asymptotic characterization of the \gls{VB} posterior leads to an
interesting result about the \gls{VB} approximation of the posterior
predictive.  Consider two posterior predictive distributions under the
misspecified model.  The \gls{VB} predictive density is formed with
the \gls{VB} posterior,
\begin{align}
  \label{eq:vb-predictive}
  {p_{\gls{VB}}^{\mathrm{pred}}(x_\mathrm{new}\g x_{1:n}) = \int
  p(x_\mathrm{new} \g \theta)\cdot q^*(\theta)\dif \theta}.
\end{align}
The exact posterior predictive density is formed with the exact
posterior,
\begin{align}
  \label{eq:exact-predictive}
  p^{\mathrm{pred}}_{\mathrm{exact}}(x_\mathrm{new}\g x_{1:n}) = \int
  p(x_\mathrm{new}\g \theta)\cdot p(\theta\g x_{1:n})\dif \theta.
\end{align}

Now define the \textit{model misspecification error} to be the
\gls{TV} distance between the exact posterior predictive and the true
density $p_0(x)$.  (When the model is well-specified, it converges to
zero \citep{van2000asymptotic}.)  Further define the
\textit{variational approximation error} is the \gls{TV} distance
between the variational predictive and the exact predictive; it
measures the price of the approximation when using the \gls{VB}
posterior to form the predictive. Below we prove that the model
misspecification error dominates the variational approximation
error---the variational approximation error vanishes as the number of
data points increases. This result explains a widely observed
phenomenon: \gls{VB} achieves comparable predictive accuracy as
\gls{MCMC} even though \gls{VB} uses an approximating family
\citep{kucukelbir2016automatic,blei2017variational,blei2006variational,braun2010variational}.

The contributions of this work are to generalize the variational
Bernstein--von Mises theorem \citep{wang2018frequentist} to
misspecified models and to further study the \gls{VB} posterior
predictive distribution.  \Cref{sec:misspecify,sec:vbppd} details the
results around \gls{VB} in parametric probabilistic
models. \Cref{sec:misspecify-gen} generalizes the results to
probabilistic models where the dimensionality of latent variables can
grow with the number of data points.  \Cref{sec:appmain} illustrates
the results in three forms of model misspecification, including
underdispersion and misspecification of the latent dimensionality.
\Cref{sec:simulation} corroborates the theoretical findings with
simulation studies on \gls{GLMM} and \gls{LDA}.

\parhead{Related work.} This work draws on two themes around \gls{VB}
and model misspecification.

The first theme is a body of work on the theoretical guarantees of
\gls{VB}.  Many researchers have studied the properties of \gls{VB}
posteriors on particular Bayesian models, including linear models
\citep{you2014variational,ormerod2014variational}, exponential family
models \citep{wang2004convergence,wang2005inadequacy}, generalized
linear mixed models
\citep{hall2011theory,hall2011asymptotic,ormerod2012gaussian},
nonparametric regression \citep{faes2011variational}, mixture models
\citep{wang2006convergence,westling2015establishing}, stochastic block
models \citep{zhang2017theoretical,bickel2013asymptotic}, latent
Gaussian models \citep{sheth2017excess}, and latent Dirichlet
allocation \citep{ghorbani2018instability}. Most of these works assume
well-specified models, with a few exceptions including
\citep{sheth2017excess,ormerod2014variational,sheth2019pseudo}.

In another line of work, \citet{wang2018frequentist} establish the
consistency and asymptotic normality of \gls{VB} posteriors;
\citet{zhang2017convergence} derive their convergence rate; and
\citet{pati2017statistical} provide risk bounds of \gls{VB} point
estimates.  Further,
\citet{alquier2016properties,yang2017alpha}
study risk bounds for variational approximations of Gibbs posteriors
and fractional posteriors, 
\citet{jaiswal2019asymptotic} study $\alpha$-R\'{e}nyi-approximate
posteriors, and \citet{fan2018tap} and \citet{ghorbani2018instability}
study generalizations of \gls{VB} via TAP free energy. Again, most of
these works focus on well-specified models. In contrast, we focus on
\gls{VB} in general misspecified Bayesian models and characterize the
asymptotic properties of the \gls{VB} posterior and the \gls{VB}
posterior predictive.  Note, when the model is well-specified, our
results recover the variational Bernstein--von Mises theorem
of~\citep{wang2018frequentist}, but we further generalize their theory
and extend it to analyzing the posterior predictive distribution.
Finally,
\citet{alquier2017concentration,cherief2018consistency,alquier2016properties}
include results around variational approximations in misspecified
models. These results focus mostly on risk bounds while we focus on
the asymptotic distribution of variational posteriors.

The second theme is about characterizing posterior distributions under
model misspecification. Allowing for model misspecification,
\citet{kleijn2012bernstein} establishes consistency and asymptotic
normality of the exact posterior in parametric Bayesian models;
\citet{kleijn2006misspecification} studies exact posteriors in
infinite-dimensional Bayesian models. We leverage these results around
exact posteriors to characterize \gls{VB} posteriors and \gls{VB}
posterior predictive distributions under model misspecification. 

%%% Local Variables:
%%% mode: latex
%%% TeX-master: "vbmisspec"
%%% End:

% !TEX root = vbmisspec.tex

\section{\glsreset{VB}\Gls{VB} under model misspecification}

\label{sec:parametric}

\Cref{sec:misspecify,sec:vbppd} examine the asymptotic properties of
\gls{VB} under model misspecification and for parametric
models. \Cref{sec:misspecify-gen} extends these results to more
general models, where the dimension of the latent variables grows with
the data.  \Cref{sec:appmain} illustrates the results with three types
of model misspecification.

\subsection{The \gls{VB} posterior and the \gls{VB} posterior mean}
\label{sec:misspecify}

We first study the \gls{VB} posterior $q^*(\theta)$ and its mean
$\hat{\theta}_{\gls{VB}}$. Assume iid data from a density $x_i \sim
p_0$ and a parametric model $p(x\g \theta)$, i.e., a model where the
dimension of the latent variables does not grow with the data.  We
show that the optimal variational distribution $q^*(\theta)$
(\Cref{eq:vboptimize}) is asymptotically normal and centers at
$\theta^*$ (\Cref{eq:paramoptimaltheta}), which minimizes the \gls{KL}
between the model $p_\theta$ and the true data generating distribution
$p_0$. The \gls{VB} posterior mean $\hat{\theta}_{\gls{VB}}$ also
converges to $\theta^*$ and is asymptotically normal.

Before stating these asymptotic results, we make a few assumptions
about the prior $p(\theta)$ and the probabilistic model $\{p(x\g
\theta):\theta\in\Theta\}$. These assumptions resemble the classical
assumptions in the Bernstein--von Mises theorems
\citep{van2000asymptotic,kleijn2012bernstein}.

\begin{assumption}[Prior mass]
\label{assumption:paramprior}
The prior density $p(\theta)$ is continuous and positive in a
neighborhood of $\theta^*$. There exists a constant $M_p>0$ such that
$|(\log p(\theta))''|\leq M_pe^{|\theta|^2}$.
\end{assumption}

\Cref{assumption:paramprior} roughly requires that the prior has some
mass around the optimal $\theta^*$. It is a necessary assumption: if
$\theta^*$ does not lie in the prior support then the posterior cannot
be centered there. \Cref{assumption:paramprior} also requires a tail
condition on $\log p(\theta)$: the second derivative of
$\log p(\theta)$ can not grow faster than $\exp(|\theta|^2)$. This is
a technical condition that many common priors satisfy.

\begin{assumption}[Consistent testability]
\label{assumption:paramtest} For every $\epsilon > 0$ there exists
    a sequence of tests $\phi_n$ such that
    \begin{align}
        \int \phi_n(x_{1:n})\prod_{i=1}^np_0(x_i)
        \dif x_{1:n}
        \rightarrow 0,
    \end{align}
    \begin{align}
        \sup_{\{\theta:||\theta -\theta^*||\geq \epsilon\}} 
        \int (1-\phi_n(x_{1:n}))\cdot\left[
        \prod_{i=1}^n\frac{p(x_i\g \theta)}{p(x_i\g \theta^*)}
        p_0(x_i)\right]
        \dif x_{1:n}
        \rightarrow 0.
    \end{align}
\end{assumption}

\Cref{assumption:paramtest} roughly requires $\theta^*$ to be the
unique optimum of the \gls{KL} divergence to the truth
(\Cref{eq:paramoptimaltheta}). In other words, $\theta^*$ is
identifiable from fitting the probabilistic model $p(x\g \theta)$ to
the data drawn from $p_0(x)$. To satisfy this condition, it suffices
to have the likelihood ratio $p(x\g \theta_1)/ p(x\g \theta_2)$ be a
continuous function of $x$ for all $\theta_1, \theta_2 \in \Theta$
(Theorem 3.2 of \citep{kleijn2012bernstein}).

\Cref{assumption:paramprior} and \Cref{assumption:paramtest} are
classical conditions required for the asymptotic normality of the
exact posterior \citet{kleijn2012bernstein}. They ensure that, for
every sequence $M_n\rightarrow \infty$,
\begin{align}
\int_\Theta \mathbb{1}(||\theta - \theta^*||>\delta_nM_n) \cdot p(\theta \g
x_{1:n}) \dif \theta \stackrel{P_0}{\rightarrow} 0,
\end{align}
for some constant sequence $\delta_n\rightarrow 0$. In other words,
the exact posterior $p(\theta\g x)$ occupies vanishing mass outside of
the $\delta_nM_n$-sized neighborhood of $\theta^*$. We note that the
sequence $\delta_n$ also plays a role in the following
\glsreset{LAN}\gls{LAN} assumption.

\begin{assumption}[\glsreset{LAN}\Gls{LAN}]
\label{assumption:paramlan}
For every compact set
    $K\subset
    \mathbb{R}^d$, there exist random vectors $\Delta_{n,\theta^*}$ bounded
    in probability and nonsingular matrices $V_{\theta^*}$ such that
    \begin{align}
      \sup_{h\in K}
      \left|\log \frac{p(x \g \theta^* + \delta_nh)}{p(x\g \theta^*)
      }- h^\top V_{\theta^*}\Delta_{n,\theta^*}
      + \frac{1}{2}h^\top V_{\theta^*}h\right|
      \stackrel{P_0}{\rightarrow} 0,
    \end{align}
    where $\delta_n$ is a $d\times d$ diagonal matrix that describes
    how fast each  dimension of the $\theta$ posterior converges to a
    point mass. We note that $\delta_n\rightarrow 0$ as $n
    \rightarrow \infty$.
\end{assumption}
This is a key assumption that characterizes the limiting normal
distribution of the \gls{VB} posterior. The quantities
$\Delta_{n,\theta^*}$ and $V_{\theta^*}$ determine the normal
distribution that the \gls{VB} posterior will converge to. The
constant $\delta_n$ determines the convergence rate of the \gls{VB}
posterior to a point mass.  Many parametric models with a
differentiable likelihood satisfy \gls{LAN}. We provide a more
technical description on how to verify \Cref{assumption:paramlan} in
\Cref{sec:paramlansatisfy}.

With these assumptions, we establish the asymptotic properties of the
\gls{VB} posterior and the \gls{VB} posterior mean.

\begin{thm}
  \label{thm:parammain} \emph{(Variational Bernstein--von Mises
    Theorem under model misspecification, parametric model version)} Under
  \Cref{assumption:paramprior,assumption:paramtest,assumption:paramlan},
\begin{enumerate}[leftmargin=*]
\item The \gls{VB} posterior converges to a point mass at $\theta^*$:
\begin{align}
q^*(\theta)\stackrel{d}{\rightarrow}\delta_{\theta^*}.
\end{align}

\item Denote $\tilde{\theta} = \delta_n^{-1} (\theta - \theta^*)$ as
the re-centered and re-scaled version of $\theta$. The \gls{VB}
posterior of $\tilde{\theta}$ is asymptotically normal:
\begin{align}
  \left\|q^*({\tilde{\theta}}) -
\cN({\tilde{\theta}} \s\Delta_{n,\theta^*},
    V'^{-1}_{\theta^*}))
   \right\|
  _{\mathrm{TV}}\stackrel{P_0}{\rightarrow} 0.
\end{align}
where $V'_{\theta^*}$ is diagonal and has the same diagonal terms as
the exact posterior precision matrix $V_{\theta^*}$.~\looseness=-1
\item The \gls{VB} posterior mean converges to $\theta^*$ almost
surely:
\begin{align}
\hat{\theta}_{\mathrm{VB}}\stackrel{a.s.}{\rightarrow} \theta^*.
\end{align}
\item The \gls{VB} posterior mean is also asymptotically normal:
\begin{align}
\delta_n^{-1}(\hat{\theta}_{\mathrm{VB}} -
\theta^*)\stackrel{d}{\rightarrow}\Delta_{\infty,\theta^*},
\end{align}
where $\Delta_{\infty,\theta^*}$ is the limiting distribution of the
random vectors $\Delta_{n,\theta^*}$: $\Delta_{n,\theta^*}
\stackrel{d}{\rightarrow}
\Delta_{\infty,\theta^*}$. Its distribution
is~${\Delta_{\infty,\theta^*} \sim
\cN\left(0, V_{\theta^*}^{-1}\E{P_0}{(\log
p(x\g\theta^*))'(\log
p(x\g\theta^*))'^\top}V_{\theta^*}^{-1}\right)}$.
\end{enumerate}
\end{thm}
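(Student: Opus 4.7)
The plan is to lift the classical misspecified Bernstein--von Mises theorem for the exact posterior to the variational setting, using the fact that the reverse-KL projection of a multivariate normal onto the mean-field family is explicit. First I would invoke the Kleijn--van der Vaart misspecified BvM (via \Cref{assumption:paramprior,assumption:paramtest,assumption:paramlan}) to conclude that, in the rescaled coordinate $\tilde\theta = \delta_n^{-1}(\theta - \theta^*)$,
\begin{equation*}
\bigl\|\tilde p(\tilde\theta \mid x_{1:n}) - \cN\bigl(\tilde\theta \s \Delta_{n,\theta^*},\, V_{\theta^*}^{-1}\bigr)\bigr\|_{\mathrm{TV}} \stackrel{P_0}{\to} 0.
\end{equation*}
A separate direct calculation shows that, for any multivariate normal with mean $\mu$ and precision $V$, the mean-field minimizer of reverse KL is the factorized normal with mean $\mu$ and precision $V'$, where $V'$ is the diagonal part of $V$. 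Call this ``asymptotic target'' $q_\infty(\tilde\theta) = \cN(\tilde\theta \s \Delta_{n,\theta^*}, V'^{-1}_{\theta^*})$; by construction $q_\infty \in \mathcal Q$.

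The heart of the argument (Part~2) is to show that the VB posterior $q^*(\tilde\theta)$ is total-variation close to $q_\infty$. I would combine two ingredients: the KL optimality $\mathrm{KL}(q^*\|\tilde p(\cdot|x_{1:n})) \leq \mathrm{KL}(q_\infty\|\tilde p(\cdot|x_{1:n}))$, and the TV closeness of $\tilde p$ to its Gaussian limit just established. Since the LAN expansion makes $\log \tilde p$ nearly quadratic on any compact set, the reverse-KL landscape over $\mathcal{Q}$ is essentially that of the limiting normal, so any minimizer must share the coordinatewise mean and variance of $q_\infty$; converting this near-optimality into a TV bound between $q^*$ and $q_\infty$ (via Pinsker together with the product structure of $\mathcal Q$) gives Part~2. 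Part~1 then follows immediately: tightness of $q^*(\tilde\theta)$ together with $\delta_n \to 0$ force $q^*(\theta)$ to collapse to the Dirac mass at $\theta^*$.

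For Parts~3 and 4 I would promote the TV statement to convergence of the posterior mean. Because TV alone does not imply moment convergence, I would establish a uniform integrability statement for $\tilde\theta$ under $q^*$, using the tail bound $|(\log p(\theta))''|\leq M_p e^{|\theta|^2}$ from \Cref{assumption:paramprior} together with the exact-posterior concentration granted by \Cref{assumption:paramtest,assumption:paramlan} to control the mass of $q^*$ outside a compact set. This upgrades Part~2 to $\mathbb{E}_{q^*}[\tilde\theta] - \Delta_{n,\theta^*} \stackrel{P_0}{\to} 0$. Rescaling back and applying the CLT implicit in LAN---which yields $\Delta_{n,\theta^*} \stackrel{d}{\to} \Delta_{\infty,\theta^*}$ with the stated sandwich-form covariance---delivers Part~4, and the almost-sure consistency in Part~3 follows by combining the in-probability statement with a Borel--Cantelli-style strengthening along subsequences.

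The hard step will be the comparison in Part~2: reverse KL is neither symmetric nor continuous in its second argument, so transferring TV-closeness of $\tilde p$ to the asymptotic normal into TV-closeness of $q^*$ to $q_\infty$ cannot be done by a generic continuity argument. It requires exploiting both the product structure of $\mathcal{Q}$ and the near-quadratic form of $\log \tilde p$ guaranteed by \Cref{assumption:paramlan}, together with a coordinatewise optimality calculation for the factorized normal. A secondary technical point is the uniform integrability needed to pass from TV convergence to mean convergence; here the heavy lifting is done by the exponential-quadratic tail condition in \Cref{assumption:paramprior}.
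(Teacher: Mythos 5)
Your architecture coincides with the paper's: invoke the Kleijn--van der Vaart misspecified Bernstein--von Mises theorem for the exact posterior in the rescaled coordinate, compute explicitly that the mean-field reverse-\gls{KL} projection of $\cN(\Delta_{n,\theta^*}, V_{\theta^*}^{-1})$ is the factorized normal with the same mean and precision $V'_{\theta^*} = \diag(V_{\theta^*})$, transfer minimizers, and then handle the posterior mean. The genuine gap is in the step you yourself flag as the crux. \Cref{assumption:paramlan} gives the quadratic approximation of the log posterior only \emph{uniformly over compact sets} of $\tilde\theta$, and \gls{TV} closeness says nothing about density ratios in the tails; hence neither $\mathrm{KL}(q_\infty \,\|\, \tilde p)$ nor the optimality inequality $\mathrm{KL}(q^*\|\tilde p) \leq \mathrm{KL}(q_\infty\|\tilde p)$ can be compared to the Gaussian \gls{KL} landscape without \emph{separate} tail control on the competing variational densities. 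The tail condition $|(\log p(\theta))''|\leq M_p e^{|\theta|^2}$ in \Cref{assumption:paramprior} constrains the prior, not the members of $\mathcal{Q}$, so it cannot supply this; the paper has to impose additional conditions on the variational family (\Cref{assumption:scoreint}: finite entropies, moment bounds, and growth bounds on derivatives of the log likelihood relative to the tails of $q$), restrict attention to families shrinking at rate $\delta_n$, first prove $\limsup_n \min_q \mathrm{KL}(q\|\pi^*(\cdot\mid x)) \leq M$ together with concentration of the minimizer (\Cref{lemma:ivbconsist}), and only then pass minimizers to the limit via $\Gamma$-convergence of the \gls{KL} functionals (\Cref{lemma:ivbnormal}). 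Your Pinsker-plus-strong-convexity route could plausibly be completed, but only after adding assumptions of this kind and the compactification step; as written, the assertion that ``the reverse-\gls{KL} landscape over $\mathcal{Q}$ is essentially that of the limiting normal'' is unjustified exactly where it matters, in the tails.

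A second, smaller flaw is your Part 3: convergence in probability yields almost-sure convergence only along subsequences, and there is no generic ``Borel--Cantelli-style strengthening'' to the full sequence without summable deviation probabilities, which \gls{TV} convergence does not provide. The paper instead derives Parts 3 and 4 jointly through the Bayes-estimator argument of Theorem 2.3 of \citet{kleijn2012bernstein} and Theorem 10.8 of \citet{van2000asymptotic}, comparing the truncated quadratic-loss processes $Z_{n,M}$, $W_{n,M}$, $W_M$ and applying the argmax theorem, with the second-moment tail condition $\int_{\|\tilde\theta\|>M}\|\tilde\theta\|^2 q^*(\tilde\theta)\dif\tilde\theta \stackrel{P_0}{\rightarrow} 0$ (from \Cref{assumption:vbpost_tech}) playing the role of your uniform-integrability step. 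Your Part 4 plan---uniform integrability upgrading the \gls{TV} statement to $\E{q^*}{\tilde\theta} - \Delta_{n,\theta^*} \stackrel{P_0}{\rightarrow} 0$, then $\Delta_{n,\theta^*} \stackrel{d}{\rightarrow} \Delta_{\infty,\theta^*}$ with the sandwich covariance from the CLT implicit in \gls{LAN}---is sound and matches the paper in substance.
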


\begin{proofsk}
The proof structure of \Cref{thm:parammain} mimics
\citet{wang2018frequentist} but extends it to allow for model
misspecification. In particular, we take care of the extra
technicality due to the difference between the true data-generating
measure $p_0(x)$ and the probabilistic model we fit
$\{p(x \g \theta):\theta\in\Theta\}.$

The proof proceeds in three steps:
\begin{enumerate}[leftmargin=*]
\item Characterize the asymptotic properties of the exact posterior:
\begin{align*}
p(\theta \g x)&\stackrel{d}{\rightarrow}\delta_{\theta^*},\\
\left\|p(\tilde{\theta}\g x) - \cN(\Delta_{n,\theta^*}, V^{-1}_{\theta^*})\right \|_{\mathrm{TV}}&\stackrel{P_0}{\rightarrow}0.
\end{align*}
This convergence is due to
\Cref{assumption:paramprior,assumption:paramtest}, and the classical
Bernstein--von Mises theorem under model misspecification
\citep{kleijn2012bernstein}.
\item Characterize the \gls{KL} minimizer of the limiting exact
posterior in the variational approximating family $\mathcal{Q}$:
\begin{align*}
\argmin_{q\in\mathcal{Q}} \gls{KL}(q(\theta)\,||\, p(\theta \g x)) &\stackrel{d}{\rightarrow}\delta_{\theta^*},\nonumber\\
\left\|\argmin_{q\in\mathcal{Q}} \gls{KL}(q(\tilde{\theta})\,||\, p(\tilde{\theta} \g x)) - \cN(\tilde{\theta} \s\Delta_{n,\theta^*},
    V'^{-1}_{\theta^*})\right \|_{\mathrm{TV}}&\stackrel{P_0}{\rightarrow}0,
\end{align*}
where $V'$ is diagonal and shares the same diagonal terms as $V$. The
intuition of this step is due to the observation that the point mass
is factorizable: $\delta_{\theta^*}\in \mathcal{Q}$. We prove it via
bounding the mass outside a neighborhood of $\theta^*$ under the
\gls{KL} minimizer $q^*(\theta)$.
\item Show that the \gls{VB} posterior approaches the \gls{KL}
minimizer of the limiting exact posterior as the number of data points
increases:
\begin{align*}
    \left\|
    q^*(\theta)
-\argmin_{q\in\mathcal{Q}^d} 
    \gls{KL}(q(\cdot)|| \delta_{\theta^*})
    \right\|
    _{\textrm{TV}}\stackrel{P_{0}}{\rightarrow} 0.\\
    \left\|
    q^*(\tilde{\theta})
-\argmin_{q\in\mathcal{Q}^d} 
    \gls{KL}(q(\cdot)|| \cN(\cdot \s\Delta_{n,\theta^*}, V^{-1}_{\theta^*}))
    \right\|
    _{\textrm{TV}}\stackrel{P_{0}}{\rightarrow} 0.
\end{align*}
The intuition of this step is that if two distributions are close,
then their \gls{KL} minimizer should also be close. In addition, the
\gls{VB} posterior is precisely the \gls{KL} minimizer to the exact
posterior: $q^*(\theta) =
\argmin_{q\in\mathcal{Q}^d}
    \gls{KL}(q(\theta)||p(\theta\mid x))$. We leverage
$\Gamma$-convergence to prove this claim.
\end{enumerate}

These three steps establish the asymptotic properties of the \gls{VB}
posterior under model misspecification (\Cref{thm:parammain}.1 and
\Cref{thm:parammain}.2): the \gls{VB} posterior converges to
$\delta_{\theta^*}$ and is asymptotically normal.

To establish the asymptotic properties of the \gls{VB} posterior mean
(\Cref{thm:parammain}.3 and \Cref{thm:parammain}.4), we follow the
classical argument in Theorem 2.3 of \citet{kleijn2012bernstein},
which leverages that the posterior mean is the Bayes estimator under
squared loss. The full proof is in
\Cref{sec:mainthmproof}.~\looseness=-1
\end{proofsk}

\Cref{thm:parammain} establishes the asymptotic properties of the
\gls{VB} posterior under model misspecification: it is asymptotically
normal and converges to a point mass at $\theta^*$, which minimizes
the \gls{KL} divergence to the true data-generating distribution. It
also shows that the \gls{VB} posterior mean shares similar convergence
and asymptotic normality.

\Cref{thm:parammain} states that, in the infinite data limit, the
\gls{VB} posterior and the exact posterior converge to the same point
mass.  The reason for this coincidence is (1) the limiting exact
posterior is a point mass and (2) point masses are factorizable and
hence belong to the variational approximating family $\mathcal{Q}$. In
other words, the variational approximation has a negligible effect on
the limiting posterior.

\Cref{thm:parammain} also shows that the \gls{VB} posterior has a
different covariance matrix from the exact posterior. The \gls{VB}
posterior has a diagonal covariance matrix but the covariance of the
exact posterior is not necessarily diagonal. However, the inverse of
the two covariance matrices match in their diagonal terms. This fact
implies that the entropy of the limiting \gls{VB} posterior is always
smaller than or equal to that of the limiting exact posterior (Lemma 8
of \citet{wang2018frequentist}), which echoes the fact that the
\gls{VB} posterior is under-dispersed relative to the exact posterior.

We remark that the under-dispersion of the \gls{VB} posterior does not
necessarily imply under-coverage of the \gls{VB} credible
intervals. The reason is that, under model misspecification, even the
credible intervals of the exact posterior cannot guarantee coverage
\citep{kleijn2012bernstein}.  Depending on how the model is
misspecified, the credible intervals derived from the exact posterior
can be arbitrarily under-covering or over-covering. Put differently,
under model misspecification, neither the \gls{VB} posterior nor the
exact posterior are reliable for uncertainty quantification.

Consider a well-specified model, where $p_0(x) = p(x\g {\theta_0})$
for some $\theta_0\in\Theta$ and $\theta^* = \theta_0$.  In this case,
\Cref{thm:parammain} recovers the variational Bernstein--von Mises
theorem \citep{wang2018frequentist}. That said,
\Cref{assumption:paramtest,assumption:paramlan} are
stronger than their counterparts for well-specified models; the reason
is that $P_0$ is usually less well-behaved than $P_{\theta_0}$.
\Cref{assumption:paramtest,assumption:paramlan} more
closely align with those required in characterizing the exact
posteriors under misspecification (Theorem 2.1 of
\citep{kleijn2012bernstein}).

\subsection{The \gls{VB} posterior predictive distribution}
\label{sec:vbppd}

We now study the posterior predictive induced by the \gls{VB}
posterior. As a consequence of \Cref{thm:parammain}, the error due to
model misspecification dominates the error due to the variational
approximation.

Recall that $p^{\mathrm{pred}}_{\gls{VB}}(x_{\mathrm{new}}\g x_{1:n})$
is the
\gls{VB} posterior predictive (\Cref{eq:vb-predictive}),
$p^{\mathrm{pred}}_{\mathrm{true}}(x_{\mathrm{new}}\g x_{1:n})$ is the
exact posterior predictive (\Cref{eq:exact-predictive}), $p_0(\cdot)$
is the true data generating density, and the \gls{TV} distance between
two densities $q_1$ and $q_2$ is
$\|q_1(x)-q_2(x)\|_{\mathrm{TV}}\triangleq \frac{1}{2}\int
|q_1(x)-q_2(x)|\dif x.$
\begin{thm}
  \label{corollary:paramposteriorpred}
  (The \gls{VB} posterior predictive distribution) If the
  probabilistic model is misspecified, i.e.
  $\left\|p_0(x) - p(x\g {\theta^*})\right\|_{\mathrm{TV}} > 0$, then
  the model approximation error dominates the variational
  approximation error:
  \begin{align}
    \label{eq:paramposteriorpred}
    \frac{\left\|p^{\mathrm{pred}}_{\gls{VB}}(x_\mathrm{new}\g x_{1:n}) - p^{\mathrm{pred}}_{\mathrm{exact}}(x_\mathrm{new}\g x_{1:n})\right\|_{\mathrm{TV}}}{\left\|p_0(x_\mathrm{new}) - p^{\mathrm{pred}}_{\mathrm{exact}}(x_\mathrm{new}\g x_{1:n})\right\|_{\mathrm{TV}}}\stackrel{P_0}{\rightarrow} 0,
  \end{align}
  under the regularity condition $\int\nabla^2_\theta p(x\g
  \theta^*)\dif x < \infty$ and
  \Cref{assumption:paramprior,assumption:paramlan,assumption:paramtest}.
\end{thm}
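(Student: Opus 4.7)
I would prove the ratio vanishes by showing the numerator is $o_p(1)$ while the denominator is bounded below in probability by the positive constant $\|p_0 - p(\cdot\g\theta^*)\|_{\mathrm{TV}}$. Both analyses pass through the limiting predictive $p(\cdot\g\theta^*)$, since both the \gls{VB} and the exact posteriors collapse to $\delta_{\theta^*}$ (by \Cref{thm:parammain}.1 and the classical Bernstein--von Mises theorem of \citet{kleijn2012bernstein}, respectively).

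\textbf{Key inequality.} The central tool is the Jensen-type bound
\begin{align*}
\left\|\int p(\cdot\g\theta)\, q(\theta)\, \dif\theta - p(\cdot\g\theta^*)\right\|_{\mathrm{TV}} \leq \int \|p(\cdot\g\theta) - p(\cdot\g\theta^*)\|_{\mathrm{TV}}\, q(\theta)\, \dif\theta,
\end{align*}
valid for any density $q$ (moving the absolute value inside the $\theta$-integral and using Fubini). Applied with $q = q^*$ and with $q = p(\cdot\g x_{1:n})$, it reduces the two auxiliary quantities $\|p^{\mathrm{pred}}_{\gls{VB}} - p(\cdot\g\theta^*)\|_{\mathrm{TV}}$ and $\|p^{\mathrm{pred}}_{\mathrm{exact}} - p(\cdot\g\theta^*)\|_{\mathrm{TV}}$ to controlling $\mathbb{E}_q[\|p(\cdot\g\theta)-p(\cdot\g\theta^*)\|_{\mathrm{TV}}]$ under posteriors that concentrate at $\theta^*$.

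\textbf{Completing the bounds.} A second-order Taylor expansion of $p(x\g\theta)$ around $\theta^*$, justified by $\int \nabla^2_\theta p(x\g\theta^*)\dif x < \infty$, yields $\|p(\cdot\g\theta)-p(\cdot\g\theta^*)\|_{\mathrm{TV}} \leq C\|\theta-\theta^*\|$ on a neighborhood of $\theta^*$. I split each expectation at a shrinking radius $M_n\delta_n$ with $M_n \to \infty$ slowly: on the neighborhood the integral is at most $C\cdot\mathbb{E}_q[\|\theta-\theta^*\|] = O_p(\delta_n)$ by \Cref{thm:parammain}.2 (for $q = q^*$) or the classical BvM (for $q = p(\cdot\g x_{1:n})$); on the complement the contribution is bounded by the tail mass $q(\|\theta-\theta^*\| > M_n\delta_n)$, which is $o_p(1)$ by the same concentration results, since $\|\cdot\|_{\mathrm{TV}}\leq 1$. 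Hence both auxiliary TV distances are $o_p(1)$. The triangle inequality
\begin{align*}
\|p^{\mathrm{pred}}_{\gls{VB}} - p^{\mathrm{pred}}_{\mathrm{exact}}\|_{\mathrm{TV}} \leq \|p^{\mathrm{pred}}_{\gls{VB}} - p(\cdot\g\theta^*)\|_{\mathrm{TV}} + \|p^{\mathrm{pred}}_{\mathrm{exact}} - p(\cdot\g\theta^*)\|_{\mathrm{TV}}
\end{align*}
then makes the numerator $o_p(1)$, while the reverse triangle inequality sends the denominator in probability to $\|p_0 - p(\cdot\g\theta^*)\|_{\mathrm{TV}} > 0$.

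\textbf{Main obstacle.} The delicate step is upgrading the pointwise vanishing $\|p(\cdot\g\theta)-p(\cdot\g\theta^*)\|_{\mathrm{TV}} \to 0$ as $\theta \to \theta^*$ to the $O(\|\theta-\theta^*\|)$ rate uniformly on a neighborhood, which amounts to uniform integrability of $\|\nabla_\theta p(x\g\theta)\|$ in $x$ for $\theta$ near $\theta^*$. Since $\int \nabla_\theta p(x\g\theta^*)\dif x = 0$ gives no control of $\int \|\nabla_\theta p(x\g\theta^*)\|\dif x$, I would use the Hessian-integrability assumption together with a Taylor remainder argument to extract this bound. Everything else reduces to the posterior concentration already supplied by \Cref{thm:parammain} combined with Jensen's and the (reverse) triangle inequalities.
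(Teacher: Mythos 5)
Your proposal is correct, but it reaches the conclusion by a genuinely different route than the paper's proof of \Cref{corollary:paramposteriorpred}. The denominator treatment is identical: both you and the paper apply the reverse triangle inequality through $p(\cdot \g \theta^*)$ and use posterior concentration to get a limit of $\left\|p_0 - p(\cdot\g\theta^*)\right\|_{\mathrm{TV}} > 0$. For the numerator, however, the paper never passes through the two auxiliary distances $\left\|p^{\mathrm{pred}}_{\gls{VB}} - p(\cdot\g\theta^*)\right\|_{\mathrm{TV}}$ and $\left\|p^{\mathrm{pred}}_{\mathrm{exact}} - p(\cdot\g\theta^*)\right\|_{\mathrm{TV}}$ as you do. Instead it works directly with the signed difference $q^*(\theta) - p(\theta\g x)$ inside the $\theta$-integral: it subtracts $p(x_{\mathrm{new}}\g\theta^*)$ inside, observes that the zeroth-order term vanishes \emph{identically} because both $q^*$ and the exact posterior integrate to one, then inserts two intermediate Gaussians $\cN(\theta^*, \delta_n^\top V'^{-1}_{\theta^*}\delta_n)$ and $\cN(\theta^*, \delta_n^\top V^{-1}_{\theta^*}\delta_n)$ and Taylor-expands $p(x_{\mathrm{new}}\g\theta) - p(x_{\mathrm{new}}\g\theta^*)$ to second order, using the fact that $q^*$ asymptotically matches the first and second moments of its Gaussian limit so that the first-order terms cancel and the second-order terms are of size $\delta_n^\top(V^{-1}_{\theta^*} - V'^{-1}_{\theta^*})\delta_n \to 0$. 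Your route---convexity of \gls{TV} under mixtures (the Jensen-type bound), truncation at radius $M_n\delta_n$, and a local \gls{TV}-Lipschitz bound---is more elementary and more robust: it needs only tail-mass concentration plus \gls{TV}-continuity of $\theta \mapsto p(\cdot\g\theta)$ at $\theta^*$ (indeed, for the stated ratio-to-zero conclusion you do not even need the Lipschitz rate; continuity on shrinking balls suffices), and it sidesteps the moment-matching step, whose rigorous justification requires moment convergence beyond the \gls{TV} convergence supplied by \Cref{thm:parammain}.2. What the paper's cancellation argument buys in exchange is a sharper estimate: the VB-versus-exact predictive gap is driven by the $O(\delta_n^2)$ covariance mismatch rather than your $O(M_n\delta_n)$ bound, which matters if one wants rates rather than mere domination.

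Two small cautions. First, your intermediate claim $\mathbb{E}_{q^*}\left[\|\theta - \theta^*\|\right] = O_p(\delta_n)$ does not follow from \gls{TV} convergence of the rescaled posterior alone (\gls{TV} control gives no first-moment control); but this is harmless because your own truncation version---Lipschitz bound inside the $M_n\delta_n$ ball, $\|\cdot\|_{\mathrm{TV}} \leq 1$ times vanishing tail mass outside---is self-contained and is the argument you should keep. Second, the obstacle you flag is real: the stated condition $\int \nabla^2_\theta p(x\g\theta^*)\dif x < \infty$ at the single point $\theta^*$ does not by itself yield $\int \|\nabla_\theta p(x\g\theta^*)\|\dif x < \infty$ or a locally uniform Taylor remainder; your finite-difference extraction works only under Hessian integrability that is locally uniform in $\theta$. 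You should note that the paper's own proof shares exactly this looseness---it tacitly assumes $\int |\nabla_\theta p(x_{\mathrm{new}}\g\theta)|_{\theta = \theta^*}|\dif x_{\mathrm{new}} < \infty$ and integrability of the remainder function $h_{\theta^*}$---so this is a gap in the stated hypotheses rather than a defect specific to your argument.
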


\begin{proofsk}
  \Cref{corollary:paramposteriorpred} is due to two observations: (1)
  in the infinite data limit, the \gls{VB} posterior predictive
  converges to the exact posterior predictive and (2) in the infinite
  data limit, the exact posterior predictive does \textit{not}
  converge to the true data-generating distribution because of model
  misspecification. Taken together, these two observations give
  \Cref{eq:paramposteriorpred}.

  The first observation comes from \Cref{thm:parammain}, which implies
  that both the \gls{VB} posterior and the exact posterior converge to
  the same point mass $\delta_{\theta^*}$ in the infinite data limit.
  Thus, they lead to similar posterior predictive distributions, which
  gives
\begin{align}
\label{eq:vberror}
  \left\|p^{\mathrm{pred}}_{\gls{VB}}(x_\mathrm{new}\g x_{1:n}) - p^{\mathrm{pred}}_{\mathrm{true}}(x_\mathrm{new}\g x_{1:n})\right\|_{\mathrm{TV}}\stackrel{P_0}{\rightarrow} 0.
\end{align}
Moreover, the model is assumed to be misspecified
$\left\|p_0(x) - p(x\g {\theta^*})\right\|_{\mathrm{TV}} > 0$, which
implies
\begin{align}
\label{eq:modelerror}
\left\|p_0(x_\mathrm{new}) -
p^{\mathrm{pred}}_{\mathrm{exact}}(x_\mathrm{new}\g
x_{1:n})\right\|_{\mathrm{TV}} \rightarrow c_0 > 0.
\end{align}
This fact shows that the model misspecification error does not vanish
in the infinite data limit. \Cref{eq:vberror} and \Cref{eq:modelerror}
imply \Cref{corollary:paramposteriorpred}. The full proof of
\Cref{corollary:paramposteriorpred} is in \Cref{sec:corollaryproof}.
\end{proofsk}

As the number of data points increases,
\Cref{corollary:paramposteriorpred} shows that the model
misspecification error dominates the variational approximation error.
The reason is that both the \gls{VB} posterior and the exact posterior
converge to the same point mass.  So, even though the \gls{VB}
posterior has an under-dispersed covariance matrix relative to the
exact posterior, both covariance matrices shrink to zero in the
infinite data limit; they converge to the same posterior predictive
distributions.

\Cref{corollary:paramposteriorpred} implies that when the model is
misspecified, \gls{VB} pays a negligible price in its posterior
predictive distribution. In other words, if the goal is prediction, we
should focus on finding the correct model rather than on correcting
the variational approximation. For the predictive ability of the
posterior, the problem of an incorrect model outweighs the problem of
an inexact inference.

\Cref{corollary:paramposteriorpred} also explains the phenomenon that
\gls{VB} predicts well despite being an approximate inference
method. As models are rarely correct in practice, the error due to
model misspecification often dominates the variational approximation
error.  Thus, on large datasets, \gls{VB} can achieve comparable
predictive performance, even when compared to more exact Bayesian
inference algorithms (like long-run \gls{MCMC}) that do not use
approximating families
\citep{kucukelbir2016automatic,blei2017variational,blei2006variational,braun2010variational}.

%%% Local Variables:
%%% mode: latex
%%% TeX-master: "vbmisspec"
%%% End:

% !TEX root = vbmisspec.tex

\subsection{\glsreset{VB}\Gls{VB} in misspecified general
  probabilistic models}

\label{sec:misspecify-gen}

\Cref{sec:misspecify,sec:vbppd} characterize the \gls{VB} posterior,
the \gls{VB} posterior mean, and the \gls{VB} posterior predictive
distribution in misspecified parametric models. Here we extend these
results to a more general class of (misspecified) models with both
global latent variables $\theta = \theta_{1:d}$ and local latent
variables $z = z_{1:n}$.  This more general class allows the local
latent variables to grow with the size of the data.  The key idea is
to reduce this class to the simpler parametric models, via what we
call the ``variational model.''

Consider the following probabilistic model with both global and local
latent variables for $n$ data points $x=x_{1:n}$,
\begin{align}
\label{eq:model}
  p(\theta, x, z) = p(\theta) \textstyle \prod_{i=1}^n p(z_i\g \theta) p(x_i \g
z_i, \theta).
\end{align}
The goal is to infer $p(\theta\g x)$, the posterior of the global
latent variables.\footnote{This model has one local latent variable
  per data point.  But the results here extend to probabilistic models
  with $z=z_{1:d_n}$ and non i.i.d data $x=x_{1:n}$. We only require
  that $d$ stays fixed as $n$ grows but $d_n$ grows with $n$.}

\gls{VB} approximates the posterior of both global and local latent
variables $p(\theta, z\g x)$ by minimizing its \gls{KL} to the exact
posterior:
\begin{align}
\label{eq:vbobjective}
q^*(\theta)q^*(z)=q^*(\theta, z) = \argmin_{q\in\mathcal{Q}}\gls{KL}(q(\theta, z)||
p(\theta, z\g x)),
\end{align}
where
$\mathcal{Q} = \{q: q(\theta, z) = \prod^d_{i=1}q_{\theta_i}(\theta_i)
\prod^n_{j=1}q_{z_j}(z_j)\}$ is the approximating family that contains
all factorizable densities.  (The first equality is because
$q^*(\theta, z)$ belongs to the factorizable family $\mathcal{Q}$.)
The \gls{VB} posterior of the global latent variables $\theta_{1:d}$
is $q^*(\theta)$.

\gls{VB} for general probabilistic models operates in the same way as
for parametric models, except we must additionally approximate the
posterior of the local latent variables. Our strategy is to reduce the
general probabilistic model with \gls{VB} to a parametric model
(\Cref{sec:misspecify}). Consider the so-called variational
log-likelihood \citep{wang2018frequentist},
\begin{align}
\label{eq:var-loglike}
  \log p^{\mathrm{VB}}(x\g\theta) = \eta(\theta) + 
  \max_{q(z)\in\mathcal{Q}} \, \, \E{q(z)}
{\log p(x, z \g \theta)
  - \log q(z)},
\end{align}
where $\eta(\theta)$ is a log normalizer. Now construct the
\emph{variational model} with $p^{\mathrm{VB}}(x\g\theta)$ as the
likelihood and $\theta$ as the global latent variable. This model no
longer contains local latent variables; it is a parametric model.

Using the same prior $p(\theta)$, the variational model leads to a
posterior on the global latent variable
\begin{align}
  \pi^*(\theta \g x)
  &
    \triangleq
    \frac{p(\theta) p^{\mathrm{VB}}(x \g \theta)}
    {\int p(\theta) p^{\mathrm{VB}}(x \g \theta) \dif \theta}.
\end{align}
As shown in \citep{wang2018frequentist}, the \gls{VB} posterior, which
optimizes the variational objective, is close to $\pi^*(\theta \g x)$,
\begin{align}
  \label{eq:generalvbred}
  q^*(\theta) = \argmin_{q\in\mathcal{Q}}\gls{KL}(q(\theta)||\pi^*(\theta \g x)) + o_{P_0}(1).
\end{align}
Notice that \Cref{eq:generalvbred} resembles \Cref{eq:vboptimize}.
This observation leads to a reduction of \gls{VB} in general
probabilistic models to \gls{VB} in parametric probabilistic models
with an alternative likelihood $p^{\mathrm{VB}}(x\g\theta)$. This
perspective then allows us to extend
\Cref{thm:parammain,corollary:paramposteriorpred} in \Cref{sec:misspecify}
to general probabilistic models. 

More specifically, we define the optimal value $\theta^*$ as in
parametric models:
\begin{align}
\label{eq:optimaltheta}
\theta^* \stackrel{\Delta}{=} \argmax \gls{KL}(p_0(x)|| p^{\mathrm{VB}}(\theta \, ; \,
x)).
\end{align} 
This definition of $\theta^*$ coincides with the definition in
parametric models (\Cref{eq:paramoptimaltheta}) when the model is
indeed parametric.

Next we state the assumptions and results for the \gls{VB} posterior
and the \gls{VB} posterior mean for general probabilistic models.

\begin{assumption}[Consistent testability]
\label{assumption:gentest} For every $\epsilon > 0$ there exists
    a sequence of tests $\phi_n$ such that
    \begin{align}
        \int \phi_n(x)p_0(x)
        \dif x
        \rightarrow 0,
    \end{align}
    \begin{align}
        \sup_{\{\theta:||\theta -\theta^*||\geq \epsilon\}} 
        \int (1-\phi_n(x))
        \frac{p^{\mathrm{VB}}(x\g \theta)}{p^{\mathrm{VB}}(x\g \theta^*)}
        p_0(x)
        \dif x
        \rightarrow 0.
    \end{align}
\end{assumption}

\begin{assumption}[\glsreset{LAN}\Gls{LAN}]
\label{assumption:genlan}
For every compact set
    $K\subset
    \mathbb{R}^d$, there exist random vectors $\Delta_{n,\theta^*}$ bounded
    in probability and nonsingular matrices $V_{\theta^*}$ such that
    \begin{align}
        \sup_{h\in K}
        \left|\log \frac{p^{\mathrm{VB}}(x\g \theta^* + \delta_nh)}{p^{\mathrm{VB}}(x\g \theta^*) }- h^\top V_{\theta^*}\Delta_{n,\theta^*} 
        + \frac{1}{2}h^\top V_{\theta^*}h\right|
        \stackrel{P_0}{\rightarrow} 0,
    \end{align}
    where $\delta_n$ is a $d\times d$ diagonal matrix, where
    $\delta_n\rightarrow 0$ as $n \rightarrow \infty$. \\
\end{assumption}

\Cref{assumption:gentest,assumption:genlan} are analogous to
\Cref{assumption:paramtest,assumption:paramlan} except that we replace
the model $p(x\g \theta)$ with the variational model
$p^{\mathrm{VB}}(x\g \theta)$. In particular,
\Cref{assumption:genmodellan} is a \gls{LAN} assumption on
probabilistic models with local latent variables, i.e. nonparametric
models. While the \gls{LAN} assumption does not hold generally in
nonparametric models with infinite-dimensional
parameters~\citep{freedman1999wald}, there are a few nonparametric
models that have been shown to satisfy the \gls{LAN} assumption,
including generalized linear mixed models \citep{hall2011asymptotic},
stochastic block models \citep{bickel2013asymptotic}, and mixture
models \citep{westling2015establishing}. We illustrate how to verify
\Cref{assumption:gentest,assumption:genlan} for specific models in
\Cref{sec:app}. We refer the readers to Section 3.4 of
\citet{wang2018frequentist} for a detailed discussion on these
assumptions about the variational model.

Under
\Cref{assumption:paramprior,assumption:gentest,assumption:genlan},
\Cref{thm:parammain,corollary:paramposteriorpred} can be generalized
to general probabilistic models. The full details of these results
(\Cref{thm:genmain,corollary:genposteriorpred}) are in
\Cref{sec:extensions}.

\subsection{Applying the theory}
\label{sec:appmain}
To illustrate the theorems, we apply
\Cref{thm:parammain,corollary:paramposteriorpred,thm:genmain,corollary:genposteriorpred}
to three types of model misspecification: underdispersion in Bayesian
regression of count data, component misspecification in Bayesian
mixture models, and latent dimensionality misspecification with
Bayesian stochastic block models. For each model, we verify the
assumptions of the theorems and then characterize the limiting
distribution of their \gls{VB} posteriors. The details of these
results are in \Cref{sec:app}.

%%% Local Variables:
%%% mode: latex
%%% TeX-master: "vbmisspec"
%%% End:

% \input{sec_app}
% !TEX root = vbmisspec.tex

\section{Simulations}
\label{sec:simulation}

We illustrate the implications of
\Cref{thm:parammain,corollary:paramposteriorpred,thm:genmain,corollary:genposteriorpred}
with simulation studies.  We studied two models, Bayesian \gls{GLMM}
\citep{mccullagh1984generalized} and \gls{LDA} \citep{blei2003latent}.
To make the models misspecified, we generate datasets from an
``incorrect'' model and then perform approximate posterior inference.
We evaluate how close the approximate posterior is to the limiting
exact posterior $\delta_{\theta^*}$, and how well the approximate
posterior predictive captures the test sets.

To approximate the posterior, we compare \gls{VB} with \gls{HMC},
which draws samples from the exact posterior.  We find that both
achieve similar closeness to $\delta_{\theta^*}$ and comparable
predictive log likelihood on test sets. We use two automated inference
algorithms in Stan \citep{carpenter2015stan}: \gls{ADVI}
\citep{kucukelbir2016automatic} for \gls{VB} and \gls{NUTS}
\citep{hoffman2014nuts} for \gls{HMC}. We lay out the detailed
simulation setup in \Cref{sec:detailsim}.

\begin{figure}[t]
\hspace{-2pt}
\begin{subfigure}[b]{0.25\textwidth}
\centering
\includegraphics[width=\textwidth]{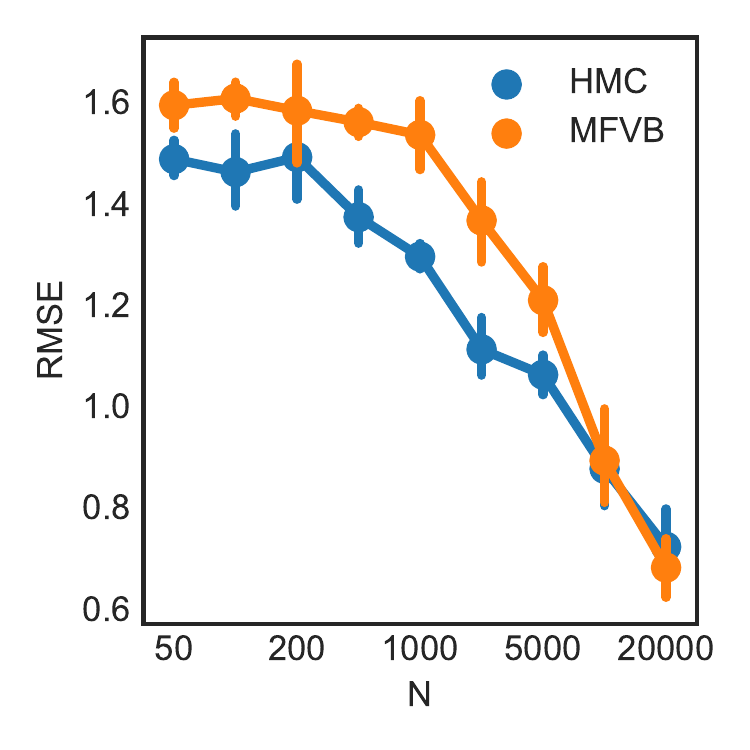}
\caption{\gls{GLMM}: RMSE to $\theta^*$\label{fig:mseplmm}}
\end{subfigure}
\begin{subfigure}[b]{0.25\textwidth}
\centering
\includegraphics[width=\textwidth]{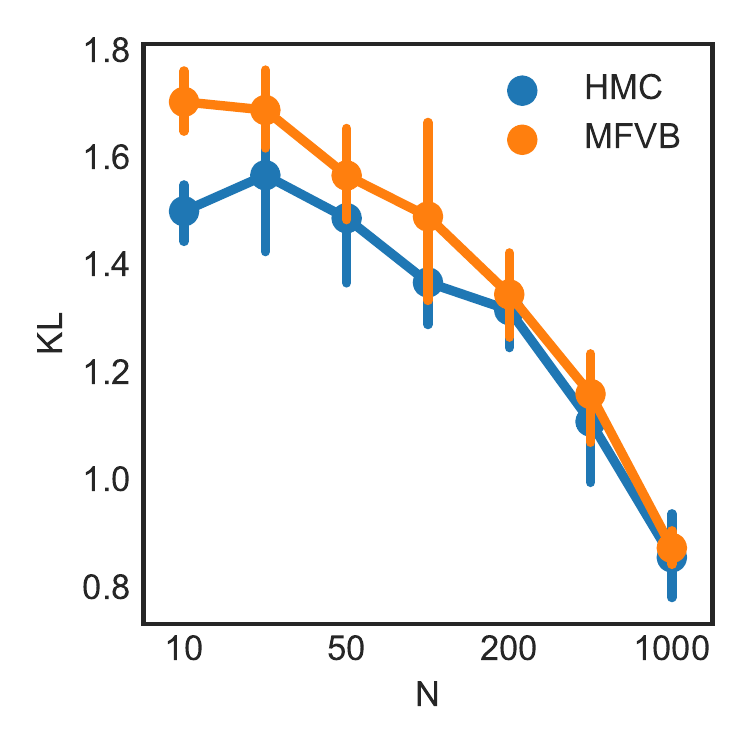}
\caption{\gls{LDA}: Mean \gls{KL} to $\theta^*$ \label{fig:kllda}}
\end{subfigure}%
\begin{subfigure}[b]{0.25\textwidth}
\centering
\includegraphics[width=\textwidth]{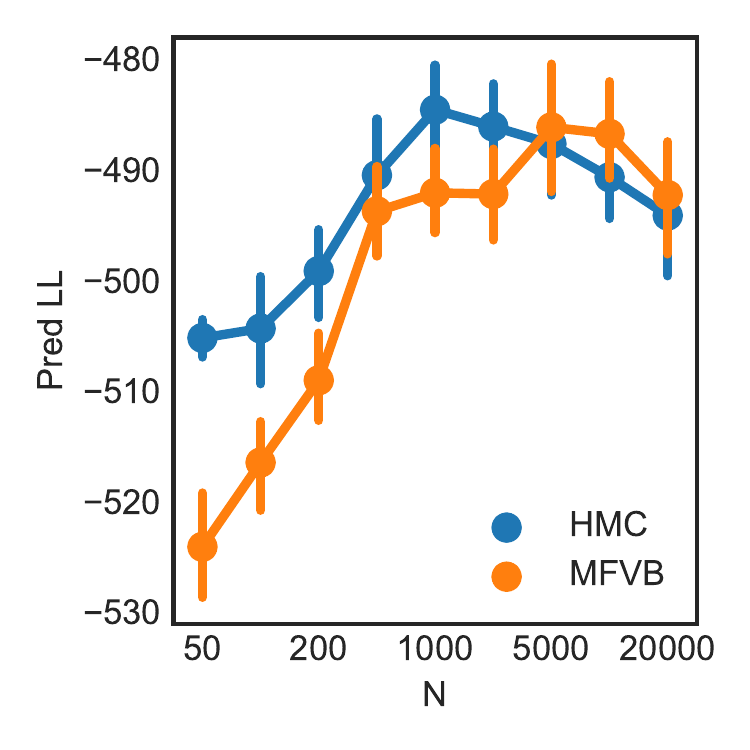}
\caption{\gls{GLMM}: Predictive LL \label{fig:pllplmm}}
\end{subfigure}%
\begin{subfigure}[b]{0.25\textwidth}
\centering
\includegraphics[width=\textwidth]{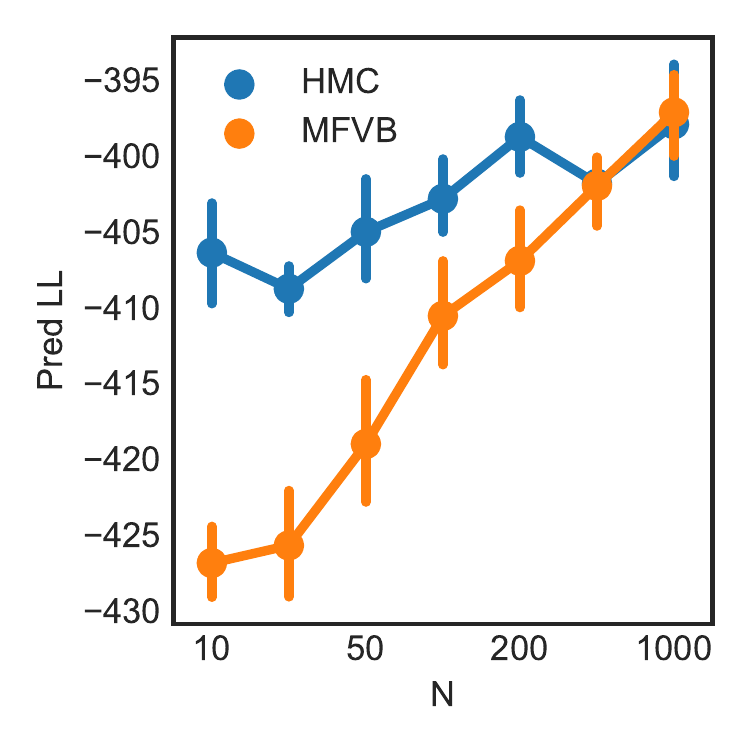}
\caption{\gls{LDA}: Predictive LL \label{fig:plllda}}
\end{subfigure}%
\caption{Dataset size versus closeness to the limiting exact posterior
$\delta_{\theta^*}$ and posterior predictive log likelihood on test
data (mean $\pm$ sd). \gls{VB} posteriors and \gls{MCMC} posteriors
achieve similar closeness to $\delta_{\theta^*}$ and comparable
predictive accuracy.
\label{fig:varygamma}}
\end{figure}

\parhead{Bayesian \gls{GLMM}.} We simulate data from a negative
binomial \gls{LMM}: each individual belongs to one of the ten groups;
each group has $N$ individuals; and the outcome is affected by a
random effect due to this group membership. Then we fit a Poisson
\gls{LMM} with the same group structure, which is misspecified with
respect to the simulated data. \Cref{fig:mseplmm} shows that the RMSE
to $\theta^*$ for the \gls{VB} and \gls{MCMC} posterior converges to
similar values as the number of individuals increases. This simulation
corroborates \Cref{thm:parammain,thm:genmain}: the limiting \gls{VB}
posterior coincide with the limiting exact
posterior. \Cref{fig:pllplmm} shows that \gls{VB} and \gls{MCMC}
achieve similar posterior predictive log likelihood as the dataset
size increases. It echoes
\Cref{corollary:paramposteriorpred,corollary:genposteriorpred}: when
performing prediction, the error due to the variational approximation
vanishes with infinite data.

\parhead{\glsreset{LDA}\Gls{LDA}.} We simulate $N$ documents from a
15-dimensional \gls{LDA} and fit a 10-dimensional \gls{LDA}; the
latent dimensionality of \gls{LDA} is misspecified. \Cref{fig:kllda}
shows the distance between the \gls{VB}/\gls{MCMC} posterior topics to
the limiting exact posterior topics, measured by \gls{KL} averaged
over topics. When the number of documents is at least 200, both
\gls{VB} and \gls{MCMC} are similarly close to the limiting exact
posterior. \Cref{fig:plllda} shows that, again once there are 200
documents, the \gls{VB} and \gls{MCMC} posteriors also achieve similar
predictive ability.  These results are consistent with
\Cref{thm:parammain,thm:genmain,corollary:paramposteriorpred,corollary:genposteriorpred}.

%%% Local Variables:
%%% mode: latex
%%% TeX-master: "vbmisspec"
%%% End:

% !TEX root = vbmisspec.tex
\section{Discussion}
\label{sec:discussion}

In this work, we study \gls{VB} under model misspecification. We show
that the \gls{VB} posterior is asymptotically normal, centering at the
value that \emph{minimizes} the \gls{KL} divergence from the true
distribution. The \gls{VB} posterior mean also centers at the same
value and is asymptotically normal. These results generalize the
variational Bernstein--von Mises theorem \citet{wang2018frequentist}
to misspecified models. We further study the \gls{VB} posterior
predictive distributions. We find that the model misspecification
error dominates the variational approximation error in the \gls{VB}
posterior predictive distributions. These results explain the
empirical phenomenon that \gls{VB} predicts comparably well as
\gls{MCMC} even if it uses an approximating family. It also suggests
that we should focus on finding the correct model rather than
de-biasing the variational approximation if we use \gls{VB} for
prediction.

An interesting direction for future work is to characterize local
optima of the \gls{ELBO}, which is the \gls{VB} posterior we obtain in practice. The results in this work all assume that the
\gls{ELBO} optimization returns global optima. It provides the
possibility for local optima to share these properties, though further
research is needed to understand the precise properties of local
optima. Combining this work with optimization guarantees may lead to a
fruitful further characterization of variational Bayes.

\parhead{Acknowledgments. } We thank Victor Veitch and Jackson Loper
for helpful comments on this article. This work is supported by ONR
N00014-17-1-2131, ONR N00014-15-1-2209, NIH 1U01MH115727-01, NSF
CCF-1740833, DARPA SD2 FA8750-18-C-0130, IBM, 2Sigma, Amazon, NVIDIA,
and Simons Foundation.
\clearpage
{\small\putbib[BIB1]}
\end{bibunit}

\clearpage
\setcounter{page}{1}
\begin{bibunit}[alp]
% !TEX root = vbmisspec.tex

\appendix
\onecolumn
{\Large\textbf{Supplementary Material: Variational Bayes under Model Misspecification}}

\section{The \gls{LAN} assumption (\Cref{assumption:paramlan}) in
parametric models}

\label{sec:paramlansatisfy}

The \gls{LAN} of parametric models (i.e. when parametric models
satisfy \Cref{assumption:paramlan}) has been widely studied in the
literature
\citep{van2000asymptotic,ghosal2017fundamentals,kleijn2012bernstein}.
For example, it suffices for a parametric model to satisfy: (1) the
log density $\log p(x\g \theta)$ is differentiable at $\theta^*$, (2)
the log likelihood ratio is bounded by some square integrable function
$m_{\theta^*}(x)$: $|\log \frac{p(x\g
\theta_1)}{p(x\g \theta_2)}|\leq m_{\theta^*}(x)
\left\|\theta_1-\theta_2\right\|$ $P_0$-almost surely, and (3) the
\gls{KL} divergence has a second order Taylor expansion around
$\theta^*$: $-\int p_0(x)\log \frac{p(x\g\theta)}{p(x\g \theta^*)} =
\frac{1}{2}(\theta-\theta^*)V_{\theta^*}(\theta-\theta^*) +
o(\left\|\theta-\theta^*\right\|)$ when $\theta\rightarrow \theta^*$
(Lemma 2.1 of \citet{kleijn2012bernstein}). Under these conditions,
the parametric model satisfies \Cref{assumption:paramlan} with
$\delta_n = (\sqrt{n})^{-1}$, which leads to the
$\sqrt{n}$-convergence of the exact posterior.

\section{Extending \Cref{thm:parammain,corollary:paramposteriorpred} to general probabilistic models}

\label{sec:extensions}

We study \gls{VB} in general probabilistic models via the above
reduction with the variational model $p^{\mathrm{VB}}(x\g\theta)$. We
posit analogous assumptions on $p^{\mathrm{VB}}(x\g\theta)$ as
\Cref{assumption:paramtest,assumption:paramlan} and extend the results
in \Cref{sec:misspecify} to general probabilistic models. Consider $n$
data points $x=x_{1:n}$, but they no longer need to be i.i.d. We
define the true data-generating density of $x$ as $p_0(x)$.

We state the asymptotic properties of the \gls{VB} posterior in
general probabilistic models.
\begin{thm}
\label{thm:genmain}
\emph{(Variational Bernstein--von Mises Theorem under model
misspecification)} Under
\Cref{assumption:paramprior,assumption:gentest,assumption:genlan},
\begin{enumerate}
\item The \gls{VB} posterior converges to a point mass at $\theta^*$:
\begin{align}
q^*(\theta)\stackrel{d}{\rightarrow}\delta_{\theta^*}.
\end{align}

\item Denote $\tilde{\theta} = \delta_n^{-1} (\theta - \theta^*)$ as
the re-centered and re-scaled version of $\theta$. The \gls{VB}
posterior of $\tilde{\theta}$ is asymptotically normal:
\begin{align}
  \left\|q^*({\tilde{\theta}}) - 
\cN({\tilde{\theta}} \s\Delta_{n,\theta^*},
    V'^{-1}_{\theta^*}))
   \right\|
  _{\mathrm{TV}}\stackrel{P_0}{\rightarrow} 0.
\end{align}
where $V'_{\theta^*}$ is diagonal and has the same diagonal terms as
$V_{\theta^*}$.
\item Denote $\hat{\theta}_{\mathrm{VB}} = \int \theta \cdot
q^*(\theta)\dif \theta$ as the mean of the \gls{VB} posterior. The
\gls{VB} posterior mean converges to $\theta^*$ almost surely:
\begin{align}
\hat{\theta}_{\mathrm{VB}}\stackrel{a.s.}{\rightarrow} \theta^*.
\end{align}
\item The \gls{VB} posterior mean is also asymptotically normal:
\begin{align}
\delta_n^{-1}(\hat{\theta}_{\mathrm{VB}} - \theta^*)\stackrel{d}{\rightarrow}\Delta_{\infty,\theta^*},
\end{align}
where $\Delta_{\infty,\theta^*}$ is the limiting distribution of the random vectors
$\Delta_{n,\theta^*}$: $\Delta_{n,\theta^*} \stackrel{d}{\rightarrow}
\Delta_{\infty,\theta^*}$. Its distribution is~${\Delta_{\infty,\theta^*} \sim
\cN\left(0, V_{\theta^*}^{-1}\E{P_0}{(\log
p^{\mathrm{VB}}(x\g\theta^*))'(\log
p^{\mathrm{VB}}(x\g\theta^*))'^\top}V_{\theta^*}^{-1}\right)}$.
\end{enumerate}
\end{thm}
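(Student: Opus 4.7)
The plan is to reduce Theorem \ref{thm:genmain} to Theorem \ref{thm:parammain} by applying the parametric result to the variational model $p^{\mathrm{VB}}(x \mid \theta)$ introduced in \Cref{eq:var-loglike}. Concretely, define the pseudo-posterior
\[
\pi^*(\theta \mid x) = \frac{p(\theta)\,p^{\mathrm{VB}}(x \mid \theta)}{\int p(\theta)\,p^{\mathrm{VB}}(x \mid \theta) \dif\theta}.
\]
The reduction \Cref{eq:generalvbred} says that $q^*(\theta)$ is, up to an $o_{P_0}(1)$ additive gap in the variational objective, the KL minimizer of $\pi^*(\theta \mid x)$ over $\mathcal{Q}$. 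Since \Cref{assumption:gentest,assumption:genlan} are precisely \Cref{assumption:paramtest,assumption:paramlan} applied to $p^{\mathrm{VB}}(x \mid \theta)$ in place of $p(x \mid \theta)$, the variational model is structurally identical to a parametric model, and the three steps in the proof sketch of \Cref{thm:parammain} can be run verbatim on $\pi^*(\theta \mid x)$ instead of on the true posterior.

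More explicitly, I would proceed in four steps. First, invoke the Kleijn--van der Vaart Bernstein--von Mises theorem for misspecified parametric models, with likelihood $p^{\mathrm{VB}}(x \mid \theta)$, prior $p(\theta)$ (which satisfies \Cref{assumption:paramprior}), and KL-minimizer $\theta^*$ defined in \Cref{eq:optimaltheta}. This yields $\pi^*(\theta \mid x) \stackrel{d}{\to} \delta_{\theta^*}$ and, after rescaling by $\delta_n^{-1}$, convergence in total variation to $\cN(\Delta_{n,\theta^*}, V_{\theta^*}^{-1})$. Second, compute the KL minimizer of this limiting normal over the factorized family $\mathcal{Q}$: mean-field projection of a Gaussian onto product measures preserves the mean and replaces the precision by its diagonal, giving $\cN(\Delta_{n,\theta^*}, V_{\theta^*}'^{-1})$ with $V'_{\theta^*}$ diagonal and matching $V_{\theta^*}$ on the diagonal; since $\delta_{\theta^*} \in \mathcal{Q}$, the contraction to a point mass is preserved on the original scale.

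Third, pass from the KL minimizer of $\pi^*(\theta \mid x)$ to the actual VB posterior $q^*(\theta)$. Here the $o_{P_0}(1)$ term in \Cref{eq:generalvbred} must be absorbed: since the KL functional $q \mapsto \KL{q(\theta)}{\pi^*(\theta \mid x)}$ is $\Gamma$-convergent (along the scaling $\tilde\theta = \delta_n^{-1}(\theta - \theta^*)$) to the KL functional against the limiting normal, and since minimizers of $\Gamma$-convergent functionals converge to minimizers of the limit whenever the approximate minimization gap is $o(1)$, we get
\[
\bigl\|q^*(\tilde\theta) - \cN(\tilde\theta \s \Delta_{n,\theta^*}, V_{\theta^*}'^{-1})\bigr\|_{\mathrm{TV}} \stackrel{P_0}{\to} 0,
\]
and $q^*(\theta) \stackrel{d}{\to} \delta_{\theta^*}$ on the original scale. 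I expect this $\Gamma$-convergence step to be the main obstacle, because the $o_{P_0}(1)$ gap interacts with an infimum over an infinite-dimensional family $\mathcal{Q}$; the argument of \citet{wang2018frequentist} in the well-specified case transfers, but requires checking that their equicoercivity and lower-semicontinuity conditions remain valid when the target is the (misspecified) $\pi^*$ rather than a true posterior.

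Finally, for parts 3 and 4 of the theorem, I would follow Theorem 2.3 of \citet{kleijn2012bernstein}: the VB posterior mean is the Bayes estimator under squared loss against $q^*(\theta)$, and given parts 1 and 2 together with uniform integrability of $\tilde\theta$ under $q^*$, taking means commutes with the limit. Almost sure convergence $\hat\theta_{\mathrm{VB}} \to \theta^*$ then follows from the $\delta_{\theta^*}$ limit, and the asymptotic normality of $\delta_n^{-1}(\hat\theta_{\mathrm{VB}} - \theta^*)$ inherits the distribution $\Delta_{\infty,\theta^*}$ via the LAN expansion (\Cref{assumption:genlan}), whose sandwich covariance $V_{\theta^*}^{-1}\E{P_0}{(\log p^{\mathrm{VB}})'(\log p^{\mathrm{VB}})'^\top}V_{\theta^*}^{-1}$ arises because the true measure $P_0$ differs from $P_{\theta^*}^{\mathrm{VB}}$ under misspecification.
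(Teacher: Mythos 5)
Your proposal follows essentially the same route as the paper: it reduces the general model to the parametric case via the variational model and its pseudo-posterior $\pi^*(\theta \mid x)$ (the paper's ``\gls{VB} ideal''), applies the Kleijn--van der Vaart misspecified Bernstein--von Mises theorem at the general rate $\delta_n$ (the paper's \Cref{lemma:vbvm}), characterizes the mean-field \gls{KL} minimizer and absorbs the $o_{P_0}(1)$ gap of \Cref{eq:generalvbred} via $\Gamma$-convergence (\Cref{lemma:ivbconsist,lemma:ivbnormal,lemma:vb_post}), and handles the posterior mean by the squared-loss/argmax argument of Theorem 2.3 of \citet{kleijn2012bernstein}. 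You even correctly anticipate where the technical burden lies---the paper indeed imposes extra tail conditions on $\mathcal{Q}^d$ (\Cref{assumption:vbpost_tech}) exactly at the $\Gamma$-convergence step and in the uniform-integrability step for the mean.
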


\Cref{thm:genmain} repeats \Cref{thm:parammain} except that the
limiting distribution of the \gls{VB} posterior mean is governed by
$p^{\mathrm{VB}}(x\g\theta^*)$ as opposed to $p(x\g\theta^*)$.
\Cref{thm:genmain} reduces to \Cref{thm:parammain} when the
probabilistic model we fit is parametric.

With an additional \gls{LAN} assumption on the probabilistic model, we
can further extend the characterization of the \gls{VB} posterior
predictive distribution (\Cref{corollary:paramposteriorpred}) to
general probabilistic models.

\begin{assumption}[\gls{LAN}]
\label{assumption:genmodellan}
For every compact set $K\subset
    \mathbb{R}^d$, there exist random vectors $\Delta_{n,\theta^*}^0$ bounded
    in probability and nonsingular matrices $V_{\theta^*}^0$ such that
    \begin{align}
        \sup_{h\in K}
        \left|\log \frac{p(x\g {\theta^* + \delta_nh})}{p(x\g
        {\theta^*})} - h^\top V^0_{\theta^*}\Delta^0_{n,\theta^*}
        + \frac{1}{2}h^\top V^0_{\theta^*}h\right|
        \stackrel{P_0}{\rightarrow} 0.
    \end{align}
\end{assumption}
\Cref{assumption:genmodellan} requires that the probability model
$p(x\g \theta) = \int p(x, z\g \theta)\dif z$ has a \gls{LAN}
expansion at $\theta^*$. Many models satisfy
\Cref{assumption:genmodellan}, including Gaussian mixture models
\citep{westling2015establishing}, Poisson linear mixed models
\citep{hall2011asymptotic}, stochastic block models
\citep{bickel2013asymptotic}. We note that the \gls{LAN} expansion of
the model $p(x\g \theta)$ can be different from that of the
variational model $p^{\mathrm{VB}}(x\g\theta^*)$.

\begin{thm} 
\label{corollary:genposteriorpred}
(The \gls{VB} posterior predictive distribution) If the
probabilistic model is misspecified, i.e. $\left\|p_0(x) - p(x\g
{\theta^*})\right\|_{\mathrm{TV}} > 0,$ then the model approximation
error dominates the variational approximation error:
\begin{align}
\label{eq:paramposteriorpred2}
\frac{\left\|p^{\mathrm{pred}}_{\gls{VB}}(x_\mathrm{new}\g x) - p^{\mathrm{pred}}_{\mathrm{exact}}(x_\mathrm{new}\g x)\right\|_{\mathrm{TV}}}{\left\|p_0(x_\mathrm{new}) - p^{\mathrm{pred}}_{\mathrm{exact}}(x_\mathrm{new}\g x)\right\|_{\mathrm{TV}}}\stackrel{P_0}{\rightarrow} 0,
\end{align}
assuming $\int\nabla^2_\theta p(x\g \theta^*)\dif x < \infty$ and
\Cref{assumption:paramprior,assumption:gentest,assumption:genlan,assumption:genmodellan}.
Notation wise, $p^{\mathrm{pred}}_{\gls{VB}}(x_{\mathrm{new}}) = \int
p(x_{\mathrm{new}}\g \theta)q^*(\theta)\dif \theta$ is the \gls{VB}
posterior predictive density,
$p^{\mathrm{pred}}_{\mathrm{true}}(x_{\mathrm{new}}) = \int
p(x_{\mathrm{new}}\g \theta)p(\theta\g x)\dif \theta$ is the exact
posterior predictive density, $p(x\g \theta) = \int p(x, z\g
\theta)\dif z$ is the marginal density of the model, and $p_0(\cdot)$
is the true data generating density.
\end{thm}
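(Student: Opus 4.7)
The plan is to mirror the proof sketch of \Cref{corollary:paramposteriorpred} but replace the parametric ingredients with their general-model analogues: \Cref{thm:genmain} supplies the VB posterior concentration at $\theta^*$, while \Cref{assumption:genmodellan} (LAN for the marginal model $p(x\g\theta)=\int p(x,z\g\theta)\dif z$) supplies the concentration of the exact posterior at the same $\theta^*$ via the classical Bernstein--von Mises theorem under misspecification \citep{kleijn2012bernstein}. The moment condition $\int\nabla^2_\theta p(x\g\theta^*)\dif x<\infty$ is what lets us transfer concentration of the posterior in $\theta$ to $\mathrm{TV}$-convergence of the corresponding posterior predictive. Combined, these two ingredients will show the numerator is $o_{P_0}(1)$ while the denominator converges to a strictly positive constant, which is the desired ratio.

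For the numerator, I would first invoke \Cref{thm:genmain} to get $q^*(\theta)\stackrel{d}{\to}\delta_{\theta^*}$ and, applying the Kleijn--van der Vaart misspecified BvM to the actual model (using \Cref{assumption:genmodellan} together with the prior mass condition \Cref{assumption:paramprior} and a testability condition on $p(x\g\theta)$ obtained from \Cref{assumption:gentest}), get $p(\theta\g x)\stackrel{d}{\to}\delta_{\theta^*}$ as well. Then, for either measure $\mu\in\{q^*,\,p(\cdot\g x)\}$, write
\begin{align*}
\int p(x_{\mathrm{new}}\g\theta)\,\mu(\dif\theta)
 - p(x_{\mathrm{new}}\g\theta^*)
 = \int\bigl[p(x_{\mathrm{new}}\g\theta)-p(x_{\mathrm{new}}\g\theta^*)\bigr]\mu(\dif\theta),
\end{align*}
apply a second-order Taylor expansion in $\theta$ around $\theta^*$, and use the regularity hypothesis $\int\nabla^2_\theta p(x\g\theta^*)\dif x<\infty$ together with Fubini to bound the integrated remainder by a constant times the second moment of $\mu$ around $\theta^*$. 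Since both $q^*$ and $p(\cdot\g x)$ put asymptotically all their mass in a $\delta_n$-neighborhood of $\theta^*$, both predictives converge to $p(x_{\mathrm{new}}\g\theta^*)$ in $\mathrm{TV}$, so the triangle inequality yields $\|p^{\mathrm{pred}}_{\mathrm{VB}}-p^{\mathrm{pred}}_{\mathrm{exact}}\|_{\mathrm{TV}}\stackrel{P_0}{\to}0$.

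For the denominator, the same argument shows $p^{\mathrm{pred}}_{\mathrm{exact}}(x_{\mathrm{new}}\g x)\to p(x_{\mathrm{new}}\g\theta^*)$ in $\mathrm{TV}$, hence by the reverse triangle inequality
\begin{align*}
\|p_0(x_{\mathrm{new}})-p^{\mathrm{pred}}_{\mathrm{exact}}(x_{\mathrm{new}}\g x)\|_{\mathrm{TV}}
 \longrightarrow \|p_0(x_{\mathrm{new}})-p(x_{\mathrm{new}}\g\theta^*)\|_{\mathrm{TV}} \;=:\; c_0 > 0,
\end{align*}
where strict positivity is exactly the misspecification hypothesis. Combining the two pieces gives $\mathrm{num}/\mathrm{denom}\stackrel{P_0}{\to}0/c_0=0$, which is \Cref{eq:paramposteriorpred2}.

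The main obstacle I expect is the first part of Step 1: pinning down that the \emph{exact} posterior of the actual model $p(x\g\theta)$ concentrates at the \emph{same} $\theta^*$ defined through the variational likelihood $p^{\mathrm{VB}}$ in \Cref{eq:optimaltheta}. The LAN expansion of $p$ around $\theta^*$ (\Cref{assumption:genmodellan}) essentially asserts this alignment, but a clean statement requires checking a testability condition for $p(x\g\theta)$ at $\theta^*$ (not just for $p^{\mathrm{VB}}$, which is what \Cref{assumption:gentest} provides). A secondary technical issue is justifying the interchange of integration and Taylor expansion uniformly in $n$; this is where the $L^1$-type regularity on $\nabla^2_\theta p(x\g\theta^*)$ is essential, since it allows dominated convergence to pass the pointwise convergence of the posterior in $\theta$ through to $\mathrm{TV}$-convergence of the predictive. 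Once these two points are settled, the rest of the argument is a direct generalization of the parametric case in \Cref{sec:corollaryproof}.
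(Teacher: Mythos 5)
Your proposal is correct and follows the same two-observation strategy as the paper's proof in \Cref{sec:corollaryproof}: the numerator vanishes because both posteriors concentrate at the same $\theta^*$, and the denominator tends to $\left\|p_0 - p(\cdot\g\theta^*)\right\|_{\mathrm{TV}} > 0$ via the reverse triangle inequality, exactly as you argue. The one place you genuinely deviate is the mechanism for the numerator. You pivot both predictives through the plug-in density $p(x_{\mathrm{new}}\g\theta^*)$ and show each converges to it in TV; the paper instead integrates the second-order Taylor expansion of $p(x_{\mathrm{new}}\g\theta)-p(x_{\mathrm{new}}\g\theta^*)$ directly against the \emph{signed} measure $q^*(\theta)-p(\theta\g x)$ (the zeroth-order term cancels exactly because both are densities) and then interpolates with two Gaussians, $\cN(\theta^*,\delta_n^\top V'^{-1}_{\theta^*}\delta_n)$ and $\cN(\theta^*,\delta_n^\top V^{-1}_{\theta^*}\delta_n)$, whose first and second moments asymptotically match those of $q^*$ and $p(\theta\g x)$ respectively; the Gaussian--Gaussian middle term is then controlled by the covariance gap $\delta_n^\top (V_{\theta^*}^{-1}-V'^{-1}_{\theta^*})\delta_n \rightarrow 0$ together with $\int \nabla^2_\theta p(x\g\theta^*)\dif x < \infty$. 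The two routes are of comparable strength: yours requires first-moment control under each posterior ($\int(\theta-\theta^*)\,\mu(\dif\theta)=O_{P_0}(\delta_n)$, hence implicitly $\int|\nabla_\theta p(x\g\theta^*)|\dif x<\infty$) and uniform integrability of the second moments --- weak convergence to $\delta_{\theta^*}$ alone does not make moments vanish --- but the paper's moment-matching step quietly needs the very same ingredients (its displays multiply $\int|\nabla_\theta p(x\g\theta^*)|\dif x$ by zero, which presumes finiteness, and the moment convergence is supplied by the lemmas behind \Cref{thm:genmain}). Finally, you correctly isolated the genuinely delicate point: \Cref{assumption:gentest} gives testability only for $p^{\mathrm{VB}}$, so concentration of the exact posterior of the actual model at the \emph{same} $\theta^*$ must come from \Cref{assumption:genmodellan}; the paper disposes of this with ``a similar argument,'' so your explicit flagging of it is, if anything, more careful than the published proof.
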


\Cref{thm:genmain} and \Cref{corollary:genposteriorpred} generalizes
the asymptotic characterizations of the \gls{VB} posterior, the
\gls{VB} posterior mean, and the \gls{VB} posterior predictive
distributions to general probabilistic models. As in parametric
probabilistic models, the \gls{VB} posterior and its mean both remain
asymptotically normal and centered at $\theta^*$ in general
probabilistic models. The model misspecification error continues to
dominate the variational approximation error in the \gls{VB} posterior
predictive distributions.

The proofs of \Cref{thm:genmain} and \Cref{corollary:genposteriorpred}
extends those of \Cref{thm:parammain} and
\Cref{corollary:paramposteriorpred} by leveraging the connection
between the variational model and the original model
(\Cref{eq:generalvbred}). The full proofs are in
\Cref{sec:mainthmproof} and \Cref{sec:corollaryproof}.

% !TEX root = vbmisspec.tex
\section{Applications of \Cref{thm:parammain,corollary:paramposteriorpred,thm:genmain,corollary:genposteriorpred}}
\label{sec:app}

We illustrate
\Cref{thm:parammain,corollary:paramposteriorpred,thm:genmain,corollary:genposteriorpred}
with three types of model misspecification: under-dispersion in
generalized linear model, component misspecification in Bayesian
mixture models, and latent dimensionality misspecification in Bayesian
stochastic block models. In all three cases, we show that the \gls{VB}
posterior converges to a point mass at the value that minimizes the
\gls{KL} divergence to the true data generating distribution; their
\gls{VB} posterior predictive distributions also converge to the true
posterior predictive distributions.

\subsection{Under-dispersion in Bayesian count regression}

The first type of model misspecification we consider is
under-dispersion. Suppose the data is generated by a Negative
Binomial regression model but we fit Poisson regression. We
characterize the asymptotic properties of the \gls{VB} posteriors of
the coefficient.

\begin{corollary}
Consider data $\mathcal{D} = \{(X_i, Y_i)\}_{i=1}^n$.
Assume the data generating measure $P_0$ has density 
\[p_0(Y_i\g X_i)=
\mathrm{Negative Binomial}\left(r, \frac{\exp(X_i^\top \beta_0)}{1+\exp(X_i^\top \beta_0)}\right)\]
for some constant $r$. Let $q^*(\beta)$ and $\hat{\beta}$ denote the
\gls{VB} posterior and its mean. We fit a Poisson regression model to
the data $\mathcal{D}$
\[p(Y_i\g X_i, \beta)= \mathrm{Poisson}(\exp(X_i^\top \beta)),\] with a prior that
satisfies \Cref{assumption:paramprior}. Let $\beta^*$ be the value
 such that \[\sum_{i=1}^n\E{P_0}{Y_i-\exp(X_i^\top\beta^*)X_i\g X_i} =
 0.\] Then we have
\begin{align*}
    \left\|q^*(\beta) - \cN\left(\beta\s \beta^*,
    \frac{1}{n}V^{'-1}\right)\right\|_{\mathrm{TV}}\stackrel{P_0}{
    \rightarrow} 0,
\end{align*}
and
\begin{align*}
\sqrt{n}(\hat{\beta} - \beta^*)\stackrel{d}{\rightarrow} \cN(0, V^{'-1}),
\end{align*}
where $V^{'} =
\mathrm{diag}\left[(X\exp(\frac{1}{2}X^\top\beta^*))^\top
(X\exp(\frac{1}{2}X^\top\beta^*))\right].$ Moreover, the model
misspecification error dominates the \gls{VB} approximation error in
the \gls{VB} posterior predictive distribution.

\label{corollary:countregression}
\end{corollary}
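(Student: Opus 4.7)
The plan is to verify the three hypotheses of \Cref{thm:parammain} at the claimed $\beta^*$ for the Poisson regression likelihood, then read off the conclusions; the posterior-predictive claim follows by invoking \Cref{corollary:paramposteriorpred}. \Cref{assumption:paramprior} is imposed on the prior in the statement, so the remaining work splits into (a) identifying $\beta^*$ as the unique \gls{KL} minimizer, (b) establishing the \gls{LAN} expansion and reading off $V_{\beta^*}$ and $\Delta_{n,\beta^*}$, and (c) verifying consistent testability; then I would trace the limiting quantities through parts 2 and 4 of \Cref{thm:parammain} to match the displayed formulas.

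For (a), writing the Poisson log density as $\log p(Y\g X,\beta) = Y X^\top\beta - e^{X^\top\beta} - \log Y!$, the expected-score equation is $\E{P_0}{(Y - e^{X^\top\beta})X} = 0$, which is exactly the condition defining $\beta^*$; strict convexity of $\beta \mapsto -\E{P_0}{\log p(Y\g X,\beta)}$ (its Hessian is $\E{}{e^{X^\top\beta}XX^\top}$, positive definite under mild non-degeneracy of the design) gives uniqueness. For (b), Taylor-expand $e^{X_i^\top(\beta^* + h/\sqrt{n})}$ around $\beta^*$ and sum the per-observation log-ratios to obtain
\begin{align*}
\log \prod_{i=1}^n \frac{p(Y_i\g X_i, \beta^* + h/\sqrt{n})}{p(Y_i\g X_i,\beta^*)}
= h^\top\Bigl(\tfrac{1}{\sqrt{n}}\textstyle\sum_i (Y_i - e^{X_i^\top\beta^*})X_i\Bigr)
- \tfrac{1}{2} h^\top V_n h + R_n(h),
\end{align*}
with $V_n = n^{-1}\sum_i e^{X_i^\top\beta^*} X_i X_i^\top \to V_{\beta^*}$ and remainder $R_n(h)$ of order $n^{-1/2}\|h\|^3$ uniformly on compact $K$. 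This matches \Cref{assumption:paramlan} with $\delta_n = n^{-1/2} I_d$, score $V_{\beta^*}\Delta_{n,\beta^*} = n^{-1/2}\sum_i (Y_i - e^{X_i^\top\beta^*}) X_i$ (bounded in probability by the CLT under $P_0$ since the population score vanishes at $\beta^*$), and local curvature $V_{\beta^*}$ whose diagonal, after the $n$-scaling convention of the corollary, is precisely $V'$ as displayed.

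For (c), I would use the Poisson \gls{MLE} $\hat\beta_n$, which is $P_0$-consistent for $\beta^*$ by standard M-estimation under misspecification; taking $\phi_n = \mathbb{1}\{\|\hat\beta_n - \beta^*\|\geq \epsilon/2\}$, the first display vanishes by consistency, and the second by a Chernoff bound leveraging continuity of the Poisson likelihood ratio in $y$ together with strict separation in \gls{KL} away from $\beta^*$ (this is the standard Kleijn--van der Vaart construction, Theorem 3.2 of \citep{kleijn2012bernstein}). With (a)--(c) established, parts 1--2 of \Cref{thm:parammain} yield the displayed convergence of $q^*(\beta)$ to $\cN(\beta^*, n^{-1}V'^{-1})$ at the stated scale, and part 4 gives the sandwich-form asymptotic normality of $\hat\beta_{\mathrm{VB}}$, with the outer-product-of-scores matrix equal to $\E{P_0}{(Y - e^{X^\top\beta^*})^2 X X^\top}$ (finite because the Negative Binomial has finite second moments). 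For the predictive claim, $\int \nabla^2_\beta p(y\g x,\beta^*)\dif y$ is finite because differentiation brings down polynomial factors in $y$, which are integrable under any Poisson law; and $\|p_0 - p(\cdot\g X,\beta^*)\|_{\mathrm{TV}} > 0$ because matching the mean via the first-order condition cannot match the Negative Binomial variance $r\mu + \mu^2/r$ to the Poisson variance $\mu$. Thus \Cref{corollary:paramposteriorpred} applies and completes the proof.

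The main obstacle is the uniform \gls{LAN} control on compact $K$ in the presence of random, possibly unbounded design $X_i$: the cubic Taylor remainder carries a factor $\sup_{\xi} e^{X_i^\top\xi}$ over the segment $[\beta^*,\beta^*+h/\sqrt{n}]$, which I would dominate either by an exponential moment condition on $X$ or by a truncation-plus-empirical-process argument to get the $o_{P_0}(1)$ bound uniformly in $h\in K$. A secondary technical subtlety, also present in the rest of the appendix, is that the notation $V' = \mathrm{diag}[(X\exp(\tfrac{1}{2}X^\top\beta^*))^\top(X\exp(\tfrac{1}{2}X^\top\beta^*))]$ folds the $n$-normalization into the definition of $V'$; I would reconcile this with the $\delta_n = n^{-1/2}$ scaling by explicitly tracking factors of $n$ when specializing the abstract statement of \Cref{thm:parammain} to the present form.
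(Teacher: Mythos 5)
Your proposal is correct and shares the paper's overall skeleton---identify $\beta^*$ as the \gls{KL} minimizer, verify \Cref{assumption:paramtest,assumption:paramlan} for the Poisson likelihood, then invoke \Cref{thm:parammain,corollary:paramposteriorpred}---but it verifies the \gls{LAN} step by a genuinely different mechanism. The paper's proof is a three-line appeal to Lemma 2.1 of \citep{kleijn2012bernstein}: differentiability of the Poisson log density, a square-integrable Lipschitz envelope for the log likelihood ratio, and a second-order Taylor expansion of the \gls{KL} divergence at $\beta^*$ jointly imply \Cref{assumption:paramlan} with $\delta_n = n^{-1/2}$, so all remainder control is outsourced to that lemma. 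You instead expand the log likelihood ratio directly, which buys explicit formulas for the score $V_{\beta^*}\Delta_{n,\beta^*} = n^{-1/2}\sum_i (Y_i - e^{X_i^\top\beta^*})X_i$ and the curvature $V_{\beta^*} = \E{P_0}{e^{X^\top\beta^*}XX^\top}$, at the price of the cubic-remainder issue you flag; note that the paper's route does not actually escape this, since its Lipschitz envelope (roughly $(Y + e^{X^\top\beta^*})\|X\|$ locally) is square-integrable only under comparable moment conditions on the design, which the paper leaves unstated. Your testability argument (tests built from the consistent Poisson \gls{MLE}, following Theorem 3.2 of \citep{kleijn2012bernstein}) coincides with the paper's, and you additionally verify details the paper's proof takes for granted: uniqueness of $\beta^*$ via strict convexity of the population objective, positivity of the \gls{TV} gap via the Poisson/negative-binomial variance mismatch, and finiteness of $\int \nabla^2_\beta\, p(y \g x, \beta^*)\dif y$ for \Cref{corollary:paramposteriorpred}. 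One further point in your favor: tracing \Cref{thm:parammain}.4 honestly, as you do, yields the sandwich covariance $V_{\beta^*}^{-1}\E{P_0}{(Y - e^{X^\top\beta^*})^2 XX^\top}V_{\beta^*}^{-1}$ for $\sqrt{n}(\hat{\beta}-\beta^*)$, which is what the theorem delivers under misspecification; the corollary's displayed $\cN(0, V'^{-1})$---and the paper's proof, which simply ``applies'' the theorems without tracking this---glosses over the distinction, so your specialization is the more faithful one, modulo the $n$-normalization folded into $V'$ that you already reconcile.
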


\begin{proof}
First, we notice that
\[\beta^* = \argmin_\beta \mathrm{KL}(p_0(Y\g X)\,||\,p(Y\g X, \beta)).\]
We then verify \Cref{assumption:paramlan,assumption:paramtest}.
\Cref{assumption:paramtest} is satisfied because the maximum
likelihood estimator converges to $\beta^*$
\citep{mccullagh1984generalized}. \Cref{assumption:paramlan} is
satisfied because the log likelihood of Poisson likelihood is
differentiable. Moreover, the log likelihood ratio is bounded a
squared integrable function and have a second order Taylor expansion
under $P_0$. Hence Lemma 2.1 of \citet{kleijn2012bernstein} implies
\Cref{assumption:paramlan}. Given
\Cref{assumption:paramprior,assumption:paramtest,assumption:paramlan},
we apply \Cref{thm:parammain,corollary:paramposteriorpred} and
conclude \Cref{corollary:countregression}.
\end{proof}

\subsection{Component misspecification in Bayesian mixture models}

The second type of misspecification we consider is component
misspecification. Suppose the data is generated by a Bayesian mixture
model where each component is Gaussian and shares the same variance.
But we fit a Bayesian mixture model where each component is a
$t$-distribution. We characterize the asymptotic properties of the
\gls{VB} posteriors.

\begin{corollary}
Consider data $\mathcal{D} = \{(X_i)\}_{i=1}^n$.
Assume the data generating measure $P_0$ has density 
\[p_0(x_i) = \sum_{c_i\in\{1,\ldots, K\}}\cN(x_i\g \mu_{c_i}, \Sigma)\cdot 
\mathrm{Categorical}(c_i\s 1/K),\] 
where $\Sigma$ and $K$ are two constants. Consider a mixture
model where each component is $t$-distribution with $\nu$ degree of
freedom, 
\[p(x_i\g \theta) =
\sum_{c_i\in\{1,\ldots, K\}}t(x_i\g \theta_{c_i}, \nu)\cdot
\mathrm{Categorical}(c_i\s 1/K),\]
with priors that satisfy \Cref{assumption:paramprior}. The goal is to
estimate the component centers $\theta\triangleq (\theta_1, \ldots,
\theta_K)$. Let $q^*(\theta)$ and $\hat{\theta}$ denote the
\gls{VB} posterior and its mean. Further denote the variational log
likelihood as follows:
\[m(\theta\s x) = 
    \sup_{q(c)\in\mathcal{Q}^n}\int q(c)
    \log\frac{p(x,c\mid\theta)}{q(c)}\dif c.\]
Under regularity conditions (A1-A5)
and (B1,2,4) of \citet{westling2015establishing}, we have
\begin{align*}
    \left\|q^*(\theta) - \cN\left(\theta^* + \frac{Y}{\sqrt{n}},
    \frac{1}{n}V_0(\theta^*)\right)\right\|_{\mathrm{TV}}\stackrel{P_0}{
    \rightarrow} 0,
\end{align*}
and
\begin{align*}
    \sqrt{n}(\hat{\theta} - \theta^*)\stackrel{d}{\rightarrow} Y,
\end{align*}
where $\theta^*$ satisfies 
\[\mathbb{E}_{P_0}{\nabla^2_\theta m(\theta^*\s x)}=0.\] 
The limiting distribution $Y$ is $Y\sim\cN(0, V(\theta^*)),$ where
$V(\theta^*) = A(\theta^*)^{-1}B(\theta^*)A(\theta^*)^{-1},$
$A(\theta) =
\mathbb{E}_{P_0}[\nabla^2_\theta m(\theta\s x)],$ and $B(\theta) =
\mathbb{E}_{P_0}[\nabla_\theta m(\theta\s x)\nabla_\theta
m(\theta\s x)^\top].$ The diagonal matrix $V_0(\theta^*)$ satisfies
$(V_{0}(\theta^*)^{-1})_{ii} = (A(\theta^*))_{ii}$. 
Moreover, the model misspecification error dominates the \gls{VB}
approximation error in the \gls{VB} posterior predictive distribution.

The specification of a mixture model is invariant to permutation among
$K$ components; this corollary is true up to permutations among the
$K$ components.
\label{corollary:appgmm}
\end{corollary}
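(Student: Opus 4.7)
\begin{proofsk}
The plan is to reduce this to \Cref{thm:genmain} and \Cref{corollary:genposteriorpred} by recognizing the $t$-mixture as a general probabilistic model with global latent variables $\theta = (\theta_1,\ldots,\theta_K)$ and local latent variables $c = (c_1,\ldots,c_n)$. The variational log-likelihood is exactly $m(\theta\s x) + \eta(\theta)$ in the notation of \Cref{eq:var-loglike}, so the optimal $\theta^*$ from \Cref{eq:optimaltheta} coincides with the stationary point $\mathbb{E}_{P_0}[\nabla_\theta m(\theta^*\s x)] = 0$ that appears in the statement (up to the standard permutation symmetry of mixtures).

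First, I would verify the three substantive assumptions for the variational model $p^{\mathrm{VB}}(x\g\theta)$ and the actual model $p(x\g\theta) = \sum_c p(x,c\g\theta)$. \Cref{assumption:paramprior} is a hypothesis. For \Cref{assumption:gentest}, the consistent testability at $\theta^*$ follows from the consistency of the $M$-estimator $\hat\theta_M = \argmax_\theta \sum_i m(\theta\s x_i)$, which \citet{westling2015establishing} establish under their regularity conditions (A1--A5); identifiability up to label permutation is precisely what their conditions deliver. For \Cref{assumption:genlan}, the LAN expansion of the variational model is the content of \citet{westling2015establishing}'s asymptotic normality theorem under (A1--A5) and (B1, 2, 4): they show $m(\theta\s x)$ admits a second-order Taylor expansion at $\theta^*$ with curvature $A(\theta^*)$ and score whose covariance is $B(\theta^*)$, so that \Cref{assumption:genlan} holds with $\delta_n = n^{-1/2}$, $V_{\theta^*} = A(\theta^*)$, and $\Delta_{n,\theta^*} = A(\theta^*)^{-1} n^{-1/2}\sum_i \nabla_\theta m(\theta^*\s x_i)$. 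Finally, \Cref{assumption:genmodellan} on the marginal $t$-mixture likelihood follows from the smoothness of the $t$-density in its location together with Lemma 2.1 of \citet{kleijn2012bernstein}, since the log-density is differentiable and the log-likelihood ratio is dominated by a square-integrable envelope on compacts.

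Once these assumptions are in place, \Cref{thm:genmain} yields all four conclusions about the \gls{VB} posterior and mean. Specifically, the general form of the limiting precision matrix becomes $A(\theta^*)$, so the diagonal precision matrix $V'^{-1}_{\theta^*}$ of \Cref{thm:genmain} is exactly the claimed $V_0(\theta^*)^{-1}$ with $(V_0(\theta^*)^{-1})_{ii} = (A(\theta^*))_{ii}$; the sandwich covariance in part~4 reduces to $V(\theta^*) = A(\theta^*)^{-1}B(\theta^*)A(\theta^*)^{-1}$; and $\Delta_{n,\theta^*}/\sqrt n$ converges to the claimed $Y$. Then, because the mixture model is misspecified (the t-components cannot reproduce the Gaussian components), $\|p_0 - p(x\g\theta^*)\|_{\mathrm{TV}} > 0$, and \Cref{corollary:genposteriorpred} immediately gives the dominance of the model misspecification error over the variational approximation error in the posterior predictive.

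The main obstacle is verifying \Cref{assumption:genlan} (LAN of the variational model), because $m(\theta\s x)$ is defined through an inner optimization over $q(c) \in \mathcal{Q}^n$ rather than as a plain smooth function of $\theta$, so standard smoothness arguments do not directly apply. This is exactly the gap bridged by \citet{westling2015establishing} through an envelope-theorem-style argument, and adapting their conditions (A1--A5) and (B1, 2, 4) to the $t$-component setting is the most delicate step. A secondary nuisance is label-switching: all convergence statements are understood modulo the action of the symmetric group $S_K$ on component indices, since $\theta^*$ is only identified up to this permutation, as already noted in the corollary.
\end{proofsk}
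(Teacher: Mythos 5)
Your proposal is correct and follows essentially the same route as the paper's own proof: reduce the $t$-mixture to the general framework via the variational model, verify \Cref{assumption:gentest} through the consistency of estimates based on $m(\theta\s x)$ from \citet{westling2015establishing}, verify \Cref{assumption:genlan,assumption:genmodellan} through the LAN/Taylor expansion of $m(\theta\s x)$ and $p(x\g\theta)$ at $\theta^*$, and then apply \Cref{thm:genmain,corollary:genposteriorpred}. The only difference is that you spell out the identifications $\delta_n = n^{-1/2}$, $V_{\theta^*}=A(\theta^*)$, and the sandwich covariance more explicitly than the paper does, which is a matter of detail rather than of approach.
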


\begin{proof}
First, we notice that $\theta^* = \argmin_\theta
\mathrm{KL}(p_0(X)\,||\,p(X\g \theta)).$ We then verify
\Cref{assumption:genlan,assumption:gentest,assumption:genmodellan}.
\Cref{assumption:gentest} is satisfied because the variational log
likelihood $m(x\s \theta)$ yields consistent estimates of $\theta$
\citep{westling2015establishing}.
\Cref{assumption:genlan,assumption:genmodellan} is satisfied by a
standard Taylor expansion of $m(x\s \theta)$ and $p(x\g \theta)$ at
$\theta^*$ (Eq. 128 of \citet{wang2018frequentist}). Given
\Cref{assumption:paramprior,assumption:gentest,assumption:genlan}, we
apply \Cref{thm:genmain,corollary:genposteriorpred} and conclude
\Cref{corollary:appgmm}.
\end{proof}

\subsection{Latent dimensionality misspecification in Bayesian
Stochastic Block Models}

The third type of misspecification we consider is latent
dimensionality misspecification. Suppose the data is generated by a
Bayesian stochastic block model with $K$ communities. But we fit a
Bayesian stochastic block model with only $K-1$ communities. We
characterize the asymptotic properties of the \gls{VB} posteriors.

\begin{corollary}
Consider the adjacency matrix $\mathcal{D} = \{(X_{ij})\}_{i,j=1}^n$
of a network with $n$ nodes. Suppose it is generated from the
stochastic block model with $K$ communities and parameters $\nu$ and
$\omega$. The parameter $\nu$ represents the odds ratio for a node to
belong to each of the $K$ communities. For simplicity, we assume the
odds ratio is the same for all the $K$ communities. The parameter
$\omega$ represents the $K\times K$ matrix of odds ratios; the
$(i,j)$-entry of $\omega$ is the odds ratio of two nodes being
connected if they belong to community $i$ and $j$ respectively.

We fit a stochastic block model with $K-1$ communities, whose prior
satisfies \Cref{assumption:paramprior}. Let $q^*_{\nu}(\nu_{(k-1)}),
q^*_{\omega}(\omega_{(k-1)})$ denote the
\gls{VB} posterior of $\nu_{(k-1)}$ and $\omega_{(k-1)}$, where
$\nu_{(k-1)}$ and $\omega_{(k-1)}$ are the odds ratios vector and
matrix for the $(K-1)$-dimensional stochastic block model. Similarly,
let $\hat{\nu}, \hat{\omega}$ be the \gls{VB} posterior mean. Then
\begin{align*}
    \left\|q^*_{\nu}(\nu)q^*_{\omega}(\omega)
    -\cN\left((\nu, \omega);
    (\nu^*, \omega^*)+
    (\frac{\Sigma^{-1}_1Y_1}{\sqrt{n\lambda_0}}, 
    \frac{\Sigma^{-1}_2Y_2}{\sqrt{n}}), V_n(\nu^*, \omega^*)
    \right)\right\|_{\mathrm{TV}}
    \stackrel{P_0}{\rightarrow}0
\end{align*}
where $\omega^*(a) = \log
\frac{\pi_{(k-1)}(a)}{1-\sum_{b=1}^{K-2}\pi_{(k-1)}(b)}$, $\nu^*(a,b) =\log
\frac{H_{(k-1)}(a,b)}{1-H_{(k-1)}(a,b)}, a, b = 1, \ldots, K-1$, where
$\pi_{(k-1)}(a)$ and $H_{(k-1)}(a,b)$ are the community weights vector
and the connectivity matrix where two smallest communities are merged.
The constant $\lambda_0$ is $\lambda_0 =
\mathbb{E}_{P_0}$(\text{degree of each node}), $(\log
n)^{-1}\lambda_0\rightarrow \infty.$ $Y_1$ and $Y_2$ are two zero mean
random vectors with covariance matrices $\Sigma_1$ and $\Sigma_2$,
where $\Sigma_1, \Sigma_2$ are known functions of $\nu^*, \omega^*$.
The diagonal matrix $V(\nu^*,
\omega^*)$ satisfies $V^{-1}(\nu^*, \omega^*)_{ii} =
\text{diag}(\Sigma_1, \Sigma_2)_{ii}$. Also,
\begin{align*}
    (\sqrt{n\lambda_0}(\hat{\nu} - \nu^*), 
    \sqrt{n}(\hat{\omega} - \omega^*))
    \stackrel{d}{\rightarrow}
    (\Sigma^{-1}_1Y_1, \Sigma^{-1}_2Y_2),
\end{align*}
Moreover, the model
misspecification error dominates the \gls{VB} approximation error in
the \gls{VB} posterior predictive distribution.

The specification of classes in \gls{SBM} is permutation invariant.
So the convergence above is true up to permutation with the $K-1$
classes. We follow \citet{bickel2013asymptotic} to consider the
quotient space of $(\nu, \omega)$ over permutations.
\label{corollary:sbm}
\end{corollary}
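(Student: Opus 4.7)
\textbf{Proof proposal for \Cref{corollary:sbm}.} The plan mirrors the strategy used for \Cref{corollary:countregression,corollary:appgmm}: identify $(\nu^*, \omega^*)$ as the KL minimizer, verify the three assumptions \Cref{assumption:gentest,assumption:genlan,assumption:genmodellan}, and then invoke \Cref{thm:genmain,corollary:genposteriorpred}. \Cref{assumption:paramprior} is assumed by hypothesis on the prior, so it need not be verified.

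First I would identify the optimal value. With $K-1$ communities fit to data from a $K$-community SBM, the profiled likelihood is maximized (up to label permutation) by merging the two least populated true communities, because any finer partition is unreachable by a $(K-1)$-community model while any coarser aggregation suffers a strictly larger KL penalty. This recovers the expressions for $\pi_{(k-1)}$ and $H_{(k-1)}$ in the statement, and hence for $\nu^*(a)$ and $\omega^*(a,b)$ as stated. Formally, $(\nu^*, \omega^*) = \argmin \,\gls{KL}(p_0 \,\|\, p^{\mathrm{VB}}(\cdot \g \nu, \omega))$, with the quotient modulo permutation as in \citet{bickel2013asymptotic}.

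Next I would verify \Cref{assumption:gentest}: one needs a consistent test separating $(\nu, \omega)$ bounded away from $(\nu^*, \omega^*)$. This follows from the profile-MLE consistency results of \citet{bickel2013asymptotic}, which yield consistency (up to permutation) for the profiled likelihood $p^{\mathrm{VB}}(x\g \nu, \omega)$ obtained by maximizing out the community-assignment variational distribution $q(c)$. Converting consistency into the required test function is a standard construction via indicators of neighborhoods of $(\nu^*, \omega^*)$. For \Cref{assumption:genlan} and \Cref{assumption:genmodellan}, I would appeal directly to the LAN expansion for SBMs established by \citet{bickel2013asymptotic} (see also the treatment in \citet{wang2018frequentist}). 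That work provides the LAN expansion of both the marginal model $p(x\g \nu, \omega) = \sum_c p(x,c\g \nu, \omega)$ and the variational profiled likelihood $p^{\mathrm{VB}}(x\g \nu, \omega)$, with the different convergence rates $\sqrt{n\lambda_0}$ for the community-weight parameter $\nu$ (driven by the number of node-level observations, each carrying information of order $\lambda_0$ via average degree) and $\sqrt{n}$ for the connectivity-matrix parameter $\omega$ (driven by the $O(n^2)$ edge observations, which under the sparsity scaling with $\lambda_0 = o(n/\log n)$ reduces effectively to a $\sqrt{n}$ rate). The $d\times d$ diagonal scaling matrix $\delta_n$ in \Cref{assumption:genlan} is therefore the block-diagonal $\mathrm{diag}((n\lambda_0)^{-1/2} I,\, n^{-1/2} I)$, and the precision matrices $V_{\theta^*}, V^0_{\theta^*}$ correspond to $\mathrm{diag}(\Sigma_1, \Sigma_2)$ and its analogue for the marginal model.

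Having verified \Cref{assumption:paramprior,assumption:gentest,assumption:genlan}, \Cref{thm:genmain} immediately yields the asymptotic normality of $q_\nu^*(\nu)q_\omega^*(\omega)$ around $(\nu^*, \omega^*)$ with shift vectors $\Sigma_1^{-1} Y_1 / \sqrt{n\lambda_0}$ and $\Sigma_2^{-1} Y_2 / \sqrt{n}$, diagonal covariance $V_n(\nu^*, \omega^*)$ matching the diagonal of $\mathrm{diag}(\Sigma_1, \Sigma_2)$, and the stated asymptotic normality of the VB posterior mean. Adding \Cref{assumption:genmodellan} and the regularity $\int \nabla^2_\theta p(x\g \theta^*)\dif x < \infty$ then gives, via \Cref{corollary:genposteriorpred}, the dominance of model misspecification error over the variational approximation error in the posterior predictive. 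The main obstacle is the LAN verification: SBMs are nonparametric-like and the local latent variables $c$ are discrete with combinatorially many configurations, so LAN does not hold off-the-shelf; the nontrivial input is the profile-likelihood expansion of \citet{bickel2013asymptotic}, together with the matching of its quadratic form to the variational profile after maximizing over $q(c)$, ensuring the two LAN expansions differ only by terms absorbed into the $o_{P_0}(1)$ remainder.
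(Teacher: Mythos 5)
Your proposal follows essentially the same route as the paper's proof: identify $(\nu^*,\omega^*)$ as the KL minimizer (obtained by merging the two smallest communities), verify \Cref{assumption:gentest,assumption:genlan,assumption:genmodellan} via the consistency and LAN results of \citet{bickel2013asymptotic} (Lemmas 2, 3 and Theorem 3), and conclude by applying \Cref{thm:genmain,corollary:genposteriorpred}. The one small citation caveat is that consistency under the \emph{under-fitted} $(K-1)$-community model is not supplied by \citet{bickel2013asymptotic} alone---the paper additionally invokes \citet{wang2017likelihood} for that step---but this does not alter the structure of your argument.
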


\begin{proof}
First, we notice that $(\nu^*, \omega^*) = \argmin_{\nu, \omega}
\mathrm{KL}(p_0(X_{ij})\,||\,p(X_{ij}\g \nu, \omega)).$ We then verify
\Cref{assumption:genlan,assumption:gentest,assumption:genmodellan}.
\Cref{assumption:gentest} is satisfied because the variational log
likelihood of stochastic block models yields consistent estimates of
$\nu, \omega$ even under under-fitted model
\citep{bickel2013asymptotic,wang2017likelihood}.
\Cref{assumption:genlan,assumption:genmodellan} is established by
Lemmas 2,3, and Theorem 3 of \citep{bickel2013asymptotic}. Given
\Cref{assumption:paramprior,assumption:gentest,assumption:genlan}, we
apply \Cref{thm:genmain,corollary:genposteriorpred} and conclude
\Cref{corollary:sbm}.
\end{proof}

\section{Proof of \Cref{thm:genmain,thm:parammain}}

\label{sec:mainthmproof}

We prove \Cref{thm:genmain} in this section. \Cref{thm:parammain}
follows directly from \Cref{thm:genmain} in parametric models.

\begin{proof}
The proof of \Cref{thm:genmain} mimics the proof structure of
\citet{wang2018frequentist} except we need to take care of the
additional technical complications due to model misspecification.

We first study the \gls{VB} ideal $\pi^*(\theta\mid x)$, defined as
\begin{align*}
\pi^*(\theta\mid x) \stackrel{\Delta}{=} \frac{p(\theta)
p^{\mathrm{VB}}(x\s \theta)}{\int p(\theta) p^{\mathrm{VB}}(x\s
\theta)\dif \theta}.
\end{align*}
The \gls{VB} ideal $\pi^*(\theta\mid x)$ is the posterior of $\theta$
if we only perform variational approximation on the local latent
variables. In other words, it is the exact posterior of the
variational model $p^{\mathrm{VB}}(x\s
\theta)$.

We note that the \gls{VB} ideal is different from the
\gls{VB} posterior. However, we will show later that the two are
closely related.

Consider a change-of-variable (re-centering and re-scaling) of
$\theta$ into $\tilde{\theta} = \delta_n^{-1} (\theta - \theta^*)$.
\Cref{lemma:vbvm} shows that the posterior of $\tilde{\theta}$ is
close to $\cN(\cdot;\Delta_{n,\theta^*}, V^{-1}_{\theta^*})$, where
$\Delta_{n,\theta^*}$ and $V^{-1}_{\theta^*}$ are two constants in
\Cref{assumption:genlan}.
\begin{lemma}
\label{lemma:vbvm}
The \gls{VB} ideal converges in total variation to a sequence of normal
distributions,
\[
    ||
    \pi^*_{\tilde{\theta}}(\cdot\mid x)
    - 
    \cN(\cdot;\Delta_{n,\theta^*}, V^{-1}_{\theta^*})
    ||
    _{\gls{TV}}\stackrel{P_{0}}{\rightarrow} 0.\\
\]
\end{lemma}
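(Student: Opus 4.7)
\begin{proofsk}[Proof proposal for \Cref{lemma:vbvm}]
The plan is to recognize the VB ideal $\pi^*(\theta\mid x)$ as an ordinary Bayesian posterior for the parametric ``variational model'' $\{p^{\mathrm{VB}}(x\g\theta):\theta\in\Theta\}$ with prior $p(\theta)$, and then invoke the misspecified Bernstein--von Mises theorem (Theorem 2.1 of \citet{kleijn2012bernstein}). The three ingredients demanded by that theorem---a prior that puts positive continuous mass near the KL minimizer, a uniformly consistent testing sequence separating $\theta^*$ from complements of its neighborhoods, and a LAN expansion at $\theta^*$---are provided exactly by \Cref{assumption:paramprior}, \Cref{assumption:gentest}, and \Cref{assumption:genlan}, with the ordinary model likelihood replaced throughout by $p^{\mathrm{VB}}(x\g\theta)$.

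Concretely, I would proceed in three steps. First, combine \Cref{assumption:paramprior} and \Cref{assumption:gentest} to localize the VB ideal: show that for any sequence $M_n\to\infty$,
\[
\int_{\{\|\theta-\theta^*\|>\delta_n M_n\}} \pi^*(\theta\g x)\dif\theta \stackrel{P_0}{\to} 0,
\]
via the standard argument that multiplies numerator and denominator by $p^{\mathrm{VB}}(x\g\theta^*)^{-1}$ and splits the integrand using the test $\phi_n$ (the $\phi_n p_0$-integral vanishes by assumption, the $(1-\phi_n)$-integral is controlled by the sup in \Cref{assumption:gentest}, and the denominator is bounded below using the prior mass condition and LAN). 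Second, on the shrinking $\delta_n M_n$-neighborhood of $\theta^*$, change variables to $\tilde\theta = \delta_n^{-1}(\theta-\theta^*)$ and substitute the LAN expansion from \Cref{assumption:genlan}, which turns the log-likelihood-ratio into the quadratic form $\tilde\theta^\top V_{\theta^*}\Delta_{n,\theta^*} - \tfrac12 \tilde\theta^\top V_{\theta^*}\tilde\theta$ up to a $o_{P_0}(1)$ error uniform on compact sets. The prior density at $\theta^*+\delta_n\tilde\theta$ is continuous and positive (\Cref{assumption:paramprior}), so its ratio to $p(\theta^*)$ tends to one uniformly on compacta. Finally, re-normalize: the re-scaled density of $\tilde\theta$ under $\pi^*(\cdot\g x)$ matches $\cN(\Delta_{n,\theta^*}, V_{\theta^*}^{-1})$ up to vanishing error, and passing to total variation follows from Scheff\'e's lemma combined with the tail bound from step one.

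The main obstacle is that $p^{\mathrm{VB}}(x\g\theta)$ is not a genuine likelihood of an i.i.d.\ model---it is a variational maximum over $q(z)$ plus a normalizer $\eta(\theta)$---so one cannot naively quote \citet{kleijn2012bernstein} verbatim. What needs checking is that their proof machinery only uses $p^{\mathrm{VB}}$ through ratios and through integrals against $p_0$, and that these are exactly the quantities controlled by \Cref{assumption:gentest} and \Cref{assumption:genlan}. In particular, the $\eta(\theta)$ normalizer cancels identically in every ratio $p^{\mathrm{VB}}(x\g\theta)/p^{\mathrm{VB}}(x\g\theta^*)$ that enters the argument (or is absorbed into $V_{\theta^*}\Delta_{n,\theta^*}$ and $V_{\theta^*}$ via the LAN assumption), so the standard local-to-uniform argument goes through unchanged. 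A secondary technicality is the tail condition $|(\log p(\theta))''|\le M_p e^{|\theta|^2}$ in \Cref{assumption:paramprior}, which is used to bound moments of $\tilde\theta$ under the VB ideal when converting neighborhood convergence into full total-variation convergence; this is handled exactly as in Lemma 1 of \citet{wang2018frequentist}.
\end{proofsk}
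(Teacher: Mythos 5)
Your proposal is correct and takes essentially the same route as the paper: the paper likewise views the \gls{VB} ideal as the exact posterior of the parametric variational model $p^{\mathrm{VB}}(x\g\theta)$ with prior $p(\theta)$ and invokes the misspecified Bernstein--von Mises theorem of \citet{kleijn2012bernstein}, reducing the whole proof to verifying their posterior-localization condition (their Assumption (2.3)) at the rate $\delta_n$. The one step you gloss is the same one the paper explicitly flags: the fixed-$\epsilon$ tests of \Cref{assumption:gentest} do not directly give $\delta_n M_n$-localization via your split argument---they must first be upgraded to test sequences of uniform exponential power on shrinking neighborhoods of $\theta^*$, repeating Theorems 3.1 and 3.3 and Lemmas 3.3 and 3.4 of \citet{kleijn2012bernstein} with the general rate $\delta_n$ in place of $\sqrt{n}$.
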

The proof of \Cref{lemma:vbvm} is in \Cref{sec:vbvmlemmaproof}.
\Cref{lemma:vbvm} characterizes the posterior of the global latent
variables $\theta$ when we perform variational approximation on the
local latent variables $z$ under model misspecification.

Building on \Cref{lemma:vbvm}, \Cref{lemma:ivbconsist} and
\Cref{lemma:ivbnormal} characterize the \gls{KL} minimizer to the
\gls{VB} ideal within the mean field (i.e. factorizable) variational
family $\mathcal{Q}^d$. They pave the road for characterizing the
\gls{VB} posterior of the global latent variables $\theta$.

\begin{lemma}
\label{lemma:ivbconsist}
The \gls{KL} minimizer of the \gls{VB} ideal over the mean field
family is consistent: in $P_{0}$ probability, it converges to a
point mass centered at $\theta^*$,
\[
    \argmin_{q(\theta)\in\mathcal{Q}^d} 
    \gls{KL}(q(\theta)||\pi^*(\theta\mid x))
    \stackrel{d}{\rightarrow} \delta_{\theta^*}.\\
\]
\end{lemma}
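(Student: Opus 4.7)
The plan is to show that the mean-field \gls{KL} minimizer $\hat{q}_n(\theta) \triangleq \argmin_{q \in \mathcal{Q}^d} \gls{KL}(q(\theta)\,\|\,\pi^*(\theta\mid x))$ puts vanishing mass outside every fixed $\epsilon$-neighborhood of $\theta^*$, which is equivalent to $\hat{q}_n \stackrel{d}{\to}\delta_{\theta^*}$. The idea is to exploit that factorizable Gaussians centered near $\theta^*$ already lie in $\mathcal{Q}^d$, so $\mathcal{Q}^d$ contains good approximations to the concentrated $\pi^*$. Concretely, I would exhibit the product Gaussian
\[q_n(\theta) = \prod_{i=1}^d \cN\bigl(\theta_i\,;\,\theta^*_i + (\delta_n \Delta_{n,\theta^*})_i,\,(\delta_n)_{ii}^2\,(V'^{-1}_{\theta^*})_{ii}\bigr),\]
which, under the change of variables $\tilde{\theta}=\delta_n^{-1}(\theta-\theta^*)$, becomes $\cN(\tilde{\theta}\,;\,\Delta_{n,\theta^*},V'^{-1}_{\theta^*})$ with $V'_{\theta^*}$ the diagonal of $V_{\theta^*}$.

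Next I would bound $\gls{KL}(q_n\,\|\,\pi^*)$. Since \gls{KL} is invariant under the change of variables and \Cref{lemma:vbvm} gives TV closeness of $\pi^*(\tilde\theta\mid x)$ to $\cN(\Delta_{n,\theta^*},V^{-1}_{\theta^*})$, combining the \gls{LAN} expansion (\Cref{assumption:genlan}) with the tail control from \Cref{assumption:gentest} should yield
\[\gls{KL}(q_n\,\|\,\pi^*) = \gls{KL}\bigl(\cN(\Delta_{n,\theta^*},V'^{-1}_{\theta^*})\,\|\,\cN(\Delta_{n,\theta^*},V^{-1}_{\theta^*})\bigr) + o_{P_0}(1) = O_{P_0}(1),\]
a finite quantity depending only on $V_{\theta^*}$. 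By optimality, $\gls{KL}(\hat{q}_n\,\|\,\pi^*) \leq O_{P_0}(1)$. To pass from this bounded KL to concentration, I would apply the data-processing inequality to the binary partition $\{A_\epsilon,A_\epsilon^c\}$ with $A_\epsilon = \{\|\theta-\theta^*\|>\epsilon\}$:
\[\gls{KL}(\hat{q}_n\,\|\,\pi^*) \geq \hat{q}_n(A_\epsilon)\log\frac{\hat{q}_n(A_\epsilon)}{\pi^*(A_\epsilon)} + (1-\hat{q}_n(A_\epsilon))\log\frac{1-\hat{q}_n(A_\epsilon)}{1-\pi^*(A_\epsilon)}.\]
Because $\delta_n\to 0$ while $\Delta_{n,\theta^*}$ is tight, \Cref{lemma:vbvm} forces $\pi^*(A_\epsilon)\stackrel{P_0}{\to}0$ (in $\tilde\theta$-coordinates, $A_\epsilon$ escapes to infinity). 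If $\hat{q}_n(A_\epsilon)$ failed to vanish in $P_0$-probability, the first term on the right would diverge along a subsequence, contradicting the KL bound. Hence $\hat{q}_n(A_\epsilon)\stackrel{P_0}{\to}0$ for every $\epsilon>0$, giving $\hat{q}_n\stackrel{d}{\to}\delta_{\theta^*}$.

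The hard part is the KL bound above. \Cref{lemma:vbvm} supplies only TV closeness of $\pi^*$ to the limiting Gaussian, whereas bounding $\int q_n\log(q_n/\pi^*)$ requires pointwise control of $\log\pi^*$ on the Gaussian tails of $q_n$. I would handle this by splitting the domain into a large compact set in $\tilde\theta$-coordinates (where the \gls{LAN} expansion gives a quadratic approximation to $\log\pi^*$ that matches, up to the $V$-versus-$V'$ gap, the log-density of $q_n$) and its complement (where \Cref{assumption:gentest} together with the prior-mass condition of \Cref{assumption:paramprior} force $\pi^*$ to be bounded below enough that the $q_n$-tail contribution is $o_{P_0}(1)$). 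Once this bound is in hand, the remainder is an elementary consequence of data-processing.
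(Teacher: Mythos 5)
Your proposal is correct, and it shares the paper's first half while diverging in the second. Like the paper, you anchor the argument on a shrinking product-Gaussian witness in $\mathcal{Q}^d$: the paper's proof of \Cref{lemma:ivbconsist} exhibits $\tilde q(\theta)=\prod_{i=1}^d\cN(\theta_i\s\theta^*_i,\delta^2_{n,ii}v_i)$ and proves $\limsup_n\min_{q}\gls{KL}(q\,\|\,\pi^*)\leq M$ by exactly the compact/tail split you sketch, with the \gls{LAN} expansion on growing compacts and the tail handled via the prior tail condition $|(\log p(\theta))''|\leq M_pe^{|\theta|^2}$ together with \Cref{lemma:vbvm} --- note that it is this prior condition in \Cref{assumption:paramprior}, not \Cref{assumption:gentest}, that supplies the lower bound on $\log\pi^*$ in the tails, a minor misattribution in your sketch (and you only need $O_{P_0}(1)$, so your stronger claimed identification of the limit as the Gaussian-to-Gaussian \gls{KL} is unnecessary). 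Where you genuinely depart is in converting the bounded \gls{KL} into concentration. The paper sandwiches $\int q^{\ddagger}(\theta)\,p^{\mathrm{VB}}(\theta\s x)\dif\theta$ using the \gls{LAN} quadratic term, which charges mass outside $B(\theta^*,\eta)$ at rate $\eta^2/\delta_{n,ii}^2$; balancing against the $O(1)$ lower bound yields the quantitative estimate $\int_{B(\theta^*,\eta)^c}q^{\ddagger}\leq M_0(\min_i\delta_{n,ii})^2/(C_2\eta^2)$ and hence concentration at a shrinking radius $\xi_n\rightarrow 0$ --- but this requires the paper to first restrict, ``without loss of generality,'' to variational densities shrinking at scale $\delta_n$, and to prove a separate compactness claim. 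Your route instead applies the data-processing inequality to the partition $\{A_\epsilon,A_\epsilon^c\}$ and uses $\pi^*(A_\epsilon)\stackrel{P_0}{\rightarrow}0$, which does follow from \Cref{lemma:vbvm} plus tightness of $\Delta_{n,\theta^*}$ since $A_\epsilon$ escapes to infinity in $\tilde\theta$-coordinates; bounded \gls{KL} then forces $\hat q_n(A_\epsilon)\stackrel{P_0}{\rightarrow}0$ (with the routine subsequence care to combine the two in-probability statements). This is more elementary and cleaner in one respect: it controls the global minimizer over all of $\mathcal{Q}^d$ directly, sidestepping the paper's only loosely justified restriction to $\delta_n$-scale families and its auxiliary compactness claim. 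What you give up is the rate: fixed-$\epsilon$ concentration is exactly equivalent to $\hat q_n\stackrel{d}{\rightarrow}\delta_{\theta^*}$ and so suffices for the lemma as stated, whereas the paper's sandwich delivers mass $1-o_{P_0}(1)$ on balls of shrinking radius, a stronger conclusion that is in the spirit of (though not strictly needed by) the downstream normality analysis in \Cref{lemma:ivbnormal}.
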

The proof of \Cref{lemma:ivbconsist} is in \Cref{sec:ivbconsistproof}.
\Cref{lemma:ivbconsist} shows that the \gls{KL} minimizer to the
\gls{VB} ideal converges to a point mass centered at $\theta^*$.
\Cref{lemma:ivbconsist} is intuitive in that (1) the \gls{VB} ideal
converges to a point mass centered at $\theta^*$ and (2) the point
mass $\delta_{\theta^*}$ resides in the variational family
$\mathcal{Q}^d$.

\begin{lemma}
\label{lemma:ivbnormal}
The \gls{KL} minimizer of the \gls{VB} ideal of $\tilde{\theta}$
converges to that of $\cN(\cdot
\s\Delta_{n,\theta^*}, V^{-1}_{\theta^*})$ in total variation: under
mild technical conditions on the tail behavior of $\mathcal{Q}^d$
(see \Cref{assumption:scoreint} in \Cref{sec:ivbnormalproof}),
\[
    \left\|
    \argmin_{q\in\mathcal{Q}^d} 
    \gls{KL}(q(\cdot)||\pi^*_{\tilde{\theta}}(\cdot\mid x))
    -
    \argmin_{q\in\mathcal{Q}^d} 
    \gls{KL}(q(\cdot)|| \cN(\cdot \s\Delta_{n,\theta^*}, V^{-1}_{\theta^*}))
    \right\|
    _{\gls{TV}}\stackrel{P_0}{\rightarrow} 0.\\
\]
\end{lemma}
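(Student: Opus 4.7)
The plan is to leverage $\Gamma$-convergence of the \gls{KL} functional, as foreshadowed in the sketch of \Cref{thm:parammain}. Write $F_n(q) := \gls{KL}(q(\cdot)\,\|\,\pi^*_{\tilde{\theta}}(\cdot\mid x))$ and $G_n(q) := \gls{KL}(q(\cdot)\,\|\,\cN(\cdot \s \Delta_{n,\theta^*}, V^{-1}_{\theta^*}))$. By \Cref{lemma:vbvm} the two target densities of $F_n$ and $G_n$ are close in total variation in $P_0$-probability. The goal is to transfer this closeness from the \emph{targets} to the corresponding \emph{mean-field minimizers}. Note that the minimizer of $G_n$ over $\mathcal{Q}^d$ has an explicit form: it is a product of independent univariate Gaussians with mean $\Delta_{n,\theta^*}$ and precision equal to the diagonal of $V_{\theta^*}$. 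Hence, once we establish convergence of argmins in total variation, the limit of the $F_n$-minimizers is pinned down exactly.

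First, I would decompose the difference of objectives as
\begin{align*}
F_n(q) - G_n(q) = \E{q}{\log \cN(\tilde{\theta} \s \Delta_{n,\theta^*}, V^{-1}_{\theta^*}) - \log \pi^*_{\tilde{\theta}}(\tilde{\theta} \mid x)},
\end{align*}
and argue that this difference vanishes uniformly over a compact subset $K \subset \mathcal{Q}^d$ that contains all minimizers for sufficiently large $n$. Lemma \Cref{lemma:vbvm} supplies total-variation closeness of the two log-density arguments, while \Cref{assumption:scoreint} controls the tails of $q$ so that the log-ratio $\log \pi^*_{\tilde{\theta}} - \log \cN$ integrates against $q$ without blowing up. Together these yield $\sup_{q\in K}|F_n(q) - G_n(q)| \stackrel{P_0}{\to} 0$.

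Second, I would verify equi-coercivity and lower semicontinuity of the \gls{KL} functional on $\mathcal{Q}^d$: Pinsker's inequality bounds total variation by the square root of \gls{KL}, so sublevel sets are tight; combined with Fatou, this gives lower semicontinuity. Standard $\Gamma$-convergence theory then turns the uniform convergence of objectives into convergence of argmins in a topology compatible with total variation. Matching the limiting minimizer against the explicit Gaussian mean-field solution (same mean $\Delta_{n,\theta^*}$ and diagonal precision $V'_{\theta^*}$) upgrades the convergence to total variation.

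The main obstacle, and the reason \Cref{assumption:scoreint} enters at all, is that total variation convergence of $\pi^*_{\tilde{\theta}}$ to the Gaussian does \emph{not} by itself control $\E{q}{\log \pi^*_{\tilde{\theta}} - \log \cN}$: the logarithm can amplify multiplicative discrepancies in the tails, and the ratio of the two densities need not be bounded. \Cref{assumption:scoreint} provides exactly the integrability needed so that the tails of members of $\mathcal{Q}^d$ contribute negligibly to the log-ratio expectation, making the TV-to-\gls{KL} transfer legitimate and letting the $\Gamma$-convergence argument close.
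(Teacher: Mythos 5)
Your high-level plan---$\Gamma$-convergence of the two \gls{KL} functionals, with \Cref{assumption:scoreint} supplying tail control---matches the paper's, but the step you rely on to get closeness of the objectives is the one that fails, and it is exactly where the paper does something different. You claim that \Cref{lemma:vbvm} (\gls{TV}-closeness of $\pi^*_{\tilde{\theta}}$ and the Gaussian) combined with tail conditions on $q$ yields $\sup_{q\in K}\left|F_n(q)-G_n(q)\right| \stackrel{P_0}{\rightarrow} 0$ via the identity $F_n(q)-G_n(q)=\E{q}{\log \cN(\tilde{\theta}\s \Delta_{n,\theta^*},V^{-1}_{\theta^*}) - \log \pi^*_{\tilde{\theta}}(\tilde{\theta}\mid x)}$. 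But total variation controls an $L^1$ discrepancy of \emph{densities}, not of \emph{log-densities}: $\pi^*_{\tilde{\theta}}$ can be \gls{TV}-close to the Gaussian while dipping arbitrarily near zero on small-measure sets in the bulk, where $\log \pi^*_{\tilde{\theta}}$ is unboundedly negative; integrating against $q$ then destroys the bound. \Cref{assumption:scoreint} does not repair this, because it constrains the tails of $q$ against derivatives of the model's log-likelihood---it says nothing about the bulk behavior of the ratio $\pi^*_{\tilde{\theta}}/\cN$. What is needed is pointwise, uniform-over-compacts control of $\log \pi^*_{\tilde{\theta}}$, and that is precisely what the paper uses instead: its proof never routes through \Cref{lemma:vbvm} at this step, but substitutes the \gls{LAN} expansion (\Cref{assumption:genlan}) of $\log p^{\mathrm{VB}}(x\g\theta)$ together with a Laplace-type approximation of the normalizer, rewriting both $F_n$ and the limit functional in closed form as entropy-plus-linear-plus-quadratic expressions in the variational parameters; the $\Gamma$-convergence (liminf inequality plus recovery sequence) is then a comparison of these explicit formulas, with \Cref{assumption:scoreint} used only to extend from the \gls{LAN} compact set to $\mathbb{R}^d$ and to upgrade convergence of the finite-dimensional variational parameters to \gls{TV} convergence.

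Two further discrepancies are worth flagging. First, in the paper's bookkeeping the $\Gamma$-limit is not $\gls{KL}(q\,||\,\cN)$ itself but $\gls{KL}(q\,||\,\cN) - \Delta_{n,\theta^*}^\top V_{\theta^*}\Delta_{n,\theta^*}$; the shift is constant in $q$ and so harmless for argmins, but it shows the difference of objectives does not simply vanish once you expand via \gls{LAN}, and it must be argued away explicitly, as the paper does. Second, your compactness step is underspecified: $\mathcal{Q}^d$ is infinite-dimensional, and Pinsker plus tightness of \gls{KL} sublevel sets gives relative compactness only in the weak topology, which is not enough to run a uniform-convergence-of-objectives argument in \gls{TV}. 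The paper sidesteps this by parametrizing the family with a finite-dimensional shape parameter $m$ and a center $\mu$ at scale $\delta_n$, and its liminf inequality carries an explicit $+\infty$ penalty whenever $\mu\neq\theta^*$, which is how mislocated members of the family are eliminated---a mechanism your proposal has no counterpart for.
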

The proof of \Cref{lemma:ivbnormal} is in \Cref{sec:ivbnormalproof}.
\Cref{lemma:ivbnormal} shows that the \gls{KL} minimizer to the
\gls{VB} ideal converges to the \gls{KL} minimizer to
$\cN(\cdot;\Delta_{n,\theta^*}, V^{-1}_{\theta^*})$. As with
\Cref{lemma:ivbconsist}, \Cref{lemma:ivbnormal} is also intuitive
because \Cref{lemma:vbvm} has shown that
$\pi^*_{\tilde{\theta}}(\cdot\mid x)$ and $\cN(\cdot
\s\Delta_{n,\theta^*}, V^{-1}_{\theta^*})$ are close in the large
sample limit.

The final step of the proof is to establish the connection between the
\gls{VB} posterior and the \gls{KL} minimizer of the \gls{VB} ideal.
First notice that the \gls{VB} posterior is then the minimizer of the so-called profiled
\gls{ELBO}:
\begin{align}
\label{eq:vbpost_profile}
q^*(\theta) = \argmax_{q(\theta)}\gls{ELBO}_p(q(\theta)),
\end{align}
which treats the variational posterior of local latent variables $z$'s
as a function of $q(\theta)$. Technically, the profiled \gls{ELBO} is
defined as follows:
\begin{align}
    \label{eq:vb_post}
    \gls{ELBO}_p(q(\theta)):=\sup_{q(z)}
    \int q(\theta)
    \left(
    \log \left[p(\theta)
    \exp\left\{
    \int q(z) \log \frac{p(x, z\mid \theta)}{q(z)}\dif z
    \right\}\right]
    - \log q(\theta) \right) \dif \theta.
\end{align}
Via this representation of the \gls{VB} posterior, Lemma 4 of
\citet{wang2018frequentist} shows that the
\gls{VB} posterior and the \gls{KL} minimizer of the \gls{VB} ideal
are close in the large sample limit. We restate this result here for
completeness.
\begin{lemma}[Lemma 4 of \citet{wang2018frequentist}] The negative
\gls{KL} divergence to the \gls{VB} ideal is equivalent to the
profiled \gls{ELBO} in the limit: under mild technical conditions on
the tail behavior of $\mathcal{Q}^d$, for $q(\theta)\in
\mathcal{Q}^d,$
\label{lemma:vb_post}
\[
    \gls{ELBO}_p(q(\theta)) = 
    -\gls{KL}(q(\theta) || \pi^*(\theta\mid x)) + o_{P_0}(1).
\]
\end{lemma}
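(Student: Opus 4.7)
My plan is to expand both sides using the definition of $p^{\mathrm{VB}}(x\mid\theta)$ and show that their difference is a sum of two explicit gap terms, each of which vanishes in $P_0$-probability. Writing $\pi^*(\theta\mid x) = p(\theta)\, p^{\mathrm{VB}}(x\mid\theta)/Z_n$ with $Z_n := \int p(\theta)\, p^{\mathrm{VB}}(x\mid\theta)\,\dif\theta$, and substituting $\log p^{\mathrm{VB}}(x\mid\theta) = \eta(\theta) + \max_{q(z)} L(q(z),\theta)$ where $L(q(z),\theta) := \int q(z)\log\tfrac{p(x,z\mid\theta)}{q(z)}\,\dif z$, routine algebra gives
\[
\mathrm{ELBO}_p(q(\theta)) + \mathrm{KL}\bigl(q(\theta)\,\|\,\pi^*(\theta\mid x)\bigr) = J_n(q) + N_n(q),
\]
where $J_n(q) := \sup_{q(z)} \int q(\theta) L(q(z),\theta)\,\dif\theta - \int q(\theta)\max_{q(z)} L(q(z),\theta)\,\dif\theta \le 0$ is a Jensen-type swap gap, and $N_n(q) := \log Z_n - \int q(\theta)\,\eta(\theta)\,\dif\theta$ is a normalizer gap. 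The claim reduces to showing $J_n(q) = o_{P_0}(1)$ and $N_n(q) = o_{P_0}(1)$.

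For $J_n(q)$, I would exploit that mean-field $q(z)=\prod_{i=1}^n q_i(z_i)$ combined with iid data makes both the inner $\sup$ and the pointwise $\max$ decouple across $i$. For each $i$ the optimal $q_i^*(z_i;\theta)$ is smooth in $\theta$ under the likelihood's regularity, and a second-order Taylor expansion of $L_i$ around a reference $\theta_0$ in the high-mass region of $q$ (say, its mean) shows that replacing the per-$\theta$ optimum by the single choice $q_i^*(z_i;\theta_0)$ sacrifices only terms controlled by $E_q[\|\theta-\theta_0\|^2]$. The tail condition in \Cref{assumption:scoreint} bounds this moment, so after summing over $i$ and noting that the first-order Taylor terms vanish by optimality, the total Jensen gap is $o_{P_0}(1)$.

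For $N_n(q)$, I would apply Laplace's approximation to $Z_n$: \Cref{lemma:vbvm} gives LAN concentration of the VB ideal at $\theta^*$, so the usual saddle-point argument yields $\log Z_n = \log p(\theta^*) + \log p^{\mathrm{VB}}(x\mid\theta^*) + O_{P_0}(1)$. Expanding $\log p^{\mathrm{VB}}(x\mid\theta^*) = \eta(\theta^*) + \max_{q(z)} L(q(z),\theta^*)$ and subtracting $\int q(\theta)\,\eta(\theta)\,\dif\theta$, the $\eta$-terms cancel to leading order under the same moment bound, while the residual $\max_{q(z)} L(q(z),\theta^*) - \int q(\theta)\max_{q(z)} L(q(z),\theta)\,\dif\theta$ is controlled by the same Jensen-gap machinery. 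The main obstacle is the control of $J_n$: both $\sup\int$ and $\int\max$ are $O(n)$ quantities, so extracting an $o(1)$ rate from their difference requires the joint use of smoothness of the per-$\theta$ $z$-optimizer \emph{and} the $\mathcal{Q}^d$ tail condition of \Cref{assumption:scoreint}, which is calibrated precisely to this step.
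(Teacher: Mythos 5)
Your opening reduction is algebraically sound: with $Z_n=\int p(\theta)\,p^{\mathrm{VB}}(x\mid\theta)\dif\theta$ one does get $\gls{ELBO}_p(q)+\gls{KL}(q\,\|\,\pi^*(\theta\mid x))=J_n(q)+N_n(q)$ with your interchange gap $J_n\le 0$ and normalizer gap $N_n=\log Z_n-\int q(\theta)\eta(\theta)\dif\theta$. The genuine gap in your proposal is the claim $N_n=o_{P_0}(1)$, which is false and not repairable. The Laplace/\gls{LAN} expansion you invoke gives $\log Z_n=\log p^{\mathrm{VB}}(x\mid\theta^*)+\log p(\theta^*)+\log\det\delta_n+\tfrac{d}{2}\log 2\pi-\tfrac{1}{2}\log\det V_{\theta^*}+\tfrac{1}{2}\Delta_{n,\theta^*}^\top V_{\theta^*}\Delta_{n,\theta^*}+o_{P_0}(1)$, so after subtracting $\int q\,\eta$ you are still left with $\max_{q(z)}L(q(z),\theta^*)$, an order-$n$ quantity, plus $\log\det\delta_n\rightarrow-\infty$ and an $O_{P_0}(1)$ quadratic; your own intermediate step only claims $O_{P_0}(1)$, which already concedes the point. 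The proposed rescue---that the residual $\max_{q(z)}L(q(z),\theta^*)-\int q(\theta)\max_{q(z)}L(q(z),\theta)\dif\theta$ is ``controlled by the same Jensen-gap machinery''---cannot work, because that residual contains no sup--integral interchange at all: by \gls{LAN} it is $O_{P_0}(1)$ at best when $q$ centers at $\theta^*$ at scale $\delta_n$, and it diverges for $q$ centered elsewhere. A scaling check makes the failure plain: $\gls{ELBO}_p(q)$ is of the order of a log marginal likelihood, hence of magnitude $n$, while $-\gls{KL}(q\,\|\,\pi^*)$ is $O(1)$ for reasonable $q$, so their difference cannot literally be $o_{P_0}(1)$.

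What you could not know blind is that this paper proves nothing here: it restates Lemma 4 of \citet{wang2018frequentist} ``for completeness'' and uses it only to identify $\argmax_q\gls{ELBO}_p(q)$ with $\argmin_q\gls{KL}(q\,\|\,\pi^*)$ (see how \Cref{lemma:vb_post} is combined with \Cref{lemma:ivbconsist,lemma:ivbnormal} in \Cref{sec:mainthmproof}). In the cited proof the normalizer is never driven to zero; it rides along as an additive term that does not depend on $q$, so the statement is to be read, and is used, as an equivalence of objectives up to $q$-independent constants---which is all the downstream argmax argument needs. The only quantity that must actually vanish is your $J_n$, and there your plan (envelope-type Taylor expansion of the per-$\theta$ optimizer of $q(z)$, plus the tail conditions of \Cref{assumption:scoreint}) is essentially the right one. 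Even so, mind the bookkeeping: $n$ second-order terms each of size $\mathbb{E}_q\|\theta-\theta_0\|^2=O(\delta_n^2)$ sum to $O(n\delta_n^2)=O(1)$ when $\delta_n\asymp n^{-1/2}$, so you must exploit quantitatively that the gap is second order in the perturbation of the $z$-optimizer, which is itself first order in $\theta-\theta_0$. Note finally that this paper's $\eta$-normalized variational likelihood introduces the $q$-dependent term $-\int q\,\eta$ absent from the unnormalized formulation of \citet{wang2018frequentist}; your decomposition at least has the virtue of making that wrinkle explicit.
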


Given
\Cref{lemma:vbvm,lemma:ivbconsist,lemma:ivbnormal,lemma:vb_post}, we
can prove \Cref{thm:genmain}. 

\Cref{thm:genmain}.1 and \Cref{thm:genmain}.2 are direct consequences of
\Cref{lemma:ivbconsist,lemma:ivbnormal,lemma:vb_post}. We have
\[
    \left\|
    \argmax_{q\in\mathcal{Q}^d} 
    \gls{ELBO}_p(q(\tilde{\theta}))
    -
    \argmin_{q\in\mathcal{Q}^d} 
    \gls{KL}(q(\cdot)|| \cN(\cdot \s\Delta_{n,\theta^*}, V^{-1}_{\theta^*}))
    \right\|
    _{\gls{TV}}\stackrel{P_0}{\rightarrow} 0,
\]
which leads to the consistency and asymptotic normality of
$q^*(\theta)$ due to \Cref{eq:vbpost_profile}.

\Cref{thm:genmain}.3 and \Cref{thm:genmain}.4 follows from
\Cref{lemma:ivbconsist,lemma:ivbnormal,lemma:vb_post} via a similar
proof argument with Theorem 2.3 in \citet{kleijn2012bernstein} and
Theorem 10.8 in \citet{van2000asymptotic}.

We consider three stochastic processes: fix some compact set $K$
and for given $M>0$,
\begin{align}
t&\mapsto Z_{n,M}(t) = \int_{||\tilde{\theta}||\leq M} 
(t-\tilde{\theta})^2\cdot q^*_{\tilde{\theta}}(\tilde{\theta})
\dif \tilde{\theta},\\
t&\mapsto W_{n,M}(t) = \int_{||\tilde{\theta}||\leq M} 
(t-\tilde{\theta})^2\cdot \cN(\tilde{\theta};\Delta_{n,\theta^*}, 
(V_{\theta^*}^{'})^{-1})\dif \tilde{\theta},\\
t&\mapsto W_{M}(t) = \int_{||\tilde{\theta}||\leq M} 
(t-\tilde{\theta})^2\cdot \cN(\tilde{\theta};X, (V_{\theta^*}^{'})^{-1})
\dif \tilde{\theta}.
\end{align}

The intuition behind these constructions is that (1)
$\tilde{\theta}^{\mathrm{VB}} =
\delta_n^{-1}(\hat{\theta}_{\mathrm{VB}} - \theta^*)$ is the minimizer
of the process $t\mapsto Z_{n,\infty}(t)$ and (2) $X = \int \tilde{\theta}
\cdot \cN(\tilde{\theta} \s X, V^{'-1}_{\theta^*} )
\dif\tilde{\theta}$ is the minimizer of $t\mapsto W_{\infty}(t)$.

To prove \Cref{thm:genmain}.3 and \Cref{thm:genmain}.4, we have
\[Z_{n,M}-W_{n,M}=o_{P_0}(1) \text{ in } \ell^\infty(K)\] due to
\Cref{thm:genmain}.2 and $\sup_{t\in K, ||h||\leq M}(t-h)^2<\infty$. Then
we have \[W_{n,M} - W_{M}=o_{P_0}(1) \text{ in } \ell^\infty(K)\] due
to $\Delta_{n,\theta^*}\stackrel{d}{\rightarrow}X$ and the continuous
mapping theorem. Finally, we have
\[W_M - W_{\infty} = o_{P_0}(1) \text{ in } \ell^\infty(K)\]
as $M\rightarrow\infty$ because of $\int \theta \cdot
q^*(\theta)<\infty$, and
\[Z_{n,p^{\mathrm{VB}}} - Z_{n,\infty} = o_{P_0}(1) \text{ in } \ell^\infty(K)\]
due to $\int_{||\tilde{\theta}||>p^{\mathrm{VB}}}||\tilde{\theta}||^2
q^*(\tilde{\theta})\dif\tilde{\theta}
\stackrel{P_0}{\rightarrow}0$ for for any $p^{\mathrm{VB}}\rightarrow\infty$
ensured by \Cref{assumption:vbpost_tech}.1. Therefore, we have
\[Z_{n,\infty} - W_\infty = o_{P_0}(1) \text{ in } \ell^\infty(K),\]
which implies 
\[\tilde{\theta}^{\mathrm{VB}}\stackrel{d}{\rightarrow}X\]
due to the continuity and convexity of the squared loss and the argmax
theorem.

\end{proof}

\section{Proof of \Cref{corollary:genposteriorpred,corollary:paramposteriorpred}}

\label{sec:corollaryproof}

We prove \Cref{corollary:genposteriorpred} here.
\Cref{corollary:paramposteriorpred} is a direct consequence of
\Cref{corollary:genposteriorpred}.

\begin{proof}

We next study the posterior predictive distribution resulting from the
\gls{VB} posterior.
For notational simplicity, we abbreviate
$p^{\mathrm{pred}}_{\gls{VB}}(x_\mathrm{new}\g x_{1:n})$ as
$p^{\mathrm{pred}}_{\gls{VB}}(x_\mathrm{new})$.

\begin{align}
&\left\|p^{\mathrm{pred}}_{\gls{VB}}(x_\mathrm{new}) - p^{\mathrm{pred}}_{\mathrm{true}}(x_\mathrm{new})\right\|_{\mathrm{TV}}\\
=&\left\|\int
p(x_{\mathrm{new}}\g \theta)q^*(\theta)\dif \theta - \int
p(x_{\mathrm{new}}\g \theta)p(\theta\g x)\dif \theta\right\|_{\mathrm{TV}}\\
=&\frac{1}{2}\int \left|\int p(x_{\mathrm{new}}\g \theta)q^*(\theta)\dif \theta - \int
p(x_{\mathrm{new}}\g \theta)p(\theta\g x)\dif \theta\right|\dif x_\mathrm{new}\\
=&\frac{1}{2}\int \left|\int p(x_{\mathrm{new}}\g \theta)\left(q^*(\theta)-p(\theta\g x)\right)\dif \theta\right|\dif x_\mathrm{new}\\
\leq & \frac{1}{2}\int \left|\int p(x_\mathrm{new}\g \theta^*)\left(q^*(\theta)-p(\theta\g x)\right)\dif \theta\right|\dif x_\mathrm{new} \\
&+ \frac{1}{2}\int\left| \int\left(p(x_{\mathrm{new}}\g \theta) - p(x_\mathrm{new}\g \theta^*)\right)\cdot\left(q^*(\theta)-p(\theta\g x)\right)\dif \theta\right|\dif x_\mathrm{new}\label{eq:finalterm}
\end{align}

The first equality is due to the definition of posterior predictive
densities. The second equality is due to the definition of the
\glsreset{TV}\gls{TV} distance. The third equality collects the two
integrals into one. The fourth equality is due to
$p_\theta(x_\mathrm{new}) \geq 0$ and triangle inequality. 

If each term in \Cref{eq:finalterm} goes to zero in the large sample
limit, we have
\begin{align}
\label{eq:numerator}
\left\|p^{\mathrm{pred}}_{\gls{VB}}(x_\mathrm{new}) - p^{\mathrm{pred}}_{\mathrm{true}}(x_\mathrm{new})\right\|_{\mathrm{TV}}\stackrel{P_0}{\rightarrow} 0.
\end{align}

Moreover, we assume the model $\{p_{\theta}:\theta\in\Theta\}$ is
misspecified, which implies
\begin{align}
\label{eq:denominator}
&\left\|p_0(x_\mathrm{new}) - p^{\mathrm{pred}}_{\mathrm{true}}(x_\mathrm{new})\right\|_{\mathrm{TV}}\\
\geq&\left\|p_0(x_\mathrm{new}) - p(x_\mathrm{new}\g \theta^*)\right\|_{\mathrm{TV}} - \left\|p^{\mathrm{pred}}_{\mathrm{true}}(x_\mathrm{new}) - p(x_\mathrm{new}\g \theta^*)\right\|_{\mathrm{TV}}\\
\stackrel{P_0}{\rightarrow} &\left\|p_0(x_\mathrm{new}) - p(x_\mathrm{new}\g \theta^*)\right\|_{\mathrm{TV}} \\
 > & 0.
\end{align}
The first inequality is due to triangle inequality. The second
equation is due to a similar argument with
$\left\|p^{\mathrm{pred}}_{\gls{VB}}(x_\mathrm{new}) -
p^{\mathrm{pred}}_{\mathrm{true}}(x_\mathrm{new})\right\|_{\mathrm{TV}}\stackrel{P_0}{\rightarrow}
0$. The intuition is that the posterior $p(\theta\g
x)\stackrel{P_0}{\rightarrow}
\delta_{\theta^*}$ in the large sample limit, so the posterior
predictive distribution should converge to
$p(x_\mathrm{new}\g \theta^*) = \int p_\theta(x_\mathrm{new})
\delta_{\theta^*}(\theta) \dif \theta$. The last step
$\left\|p_0(x_\mathrm{new}) -
p(x_\mathrm{new}\g \theta^*)\right\|_{\mathrm{TV}} > 0$ is due to the
assumption that the model $p(\cdot\g \theta)$ is misspecified.

\Cref{eq:numerator} and \Cref{eq:denominator} together imply
\Cref{corollary:genposteriorpred}:
\begin{align*}
\frac{\left\|p^{\mathrm{pred}}_{\gls{VB}}(x_\mathrm{new}) - p^{\mathrm{pred}}_{\mathrm{true}}(x_\mathrm{new})\right\|_{\mathrm{TV}}}{\left\|p_0(x_\mathrm{new}) - p^{\mathrm{pred}}_{\mathrm{true}}(x_\mathrm{new})\right\|_{\mathrm{TV}}}\stackrel{P_0}{\rightarrow} 0.
\end{align*}

Below we show that each term in \Cref{eq:finalterm} goes to zero in
the large sample limit, which completes the proof.

For the first term in \Cref{eq:finalterm}, we have.
\begin{align*}
&\int \left|\int p(x_\mathrm{new}\g \theta^*)\left(q^*(\theta)-p(\theta\g x)\right)\dif \theta\right|\dif x_\mathrm{new} \\
=&\int \left|p(x_\mathrm{new}\g \theta^*)\int \left(q^*(\theta)-p(\theta\g x)\right)\dif \theta\right|\dif x_\mathrm{new} \\
=&\int \left|p(x_\mathrm{new}\g \theta^*)\cdot 0\right|\dif x_\mathrm{new} \\
=&0.
\end{align*}
The first equality is due to $p(x_\mathrm{new}\g \theta^*)$ not
depending on $\theta$. The second equality is due to both
$q^*(\theta)$ and $p(\theta\g x)$ being probability density functions.
The third equality is due to integration of zero equal to zero.

Next we note that
\begin{align*}
&\int\left| \int\left(p(x_{\mathrm{new}}\g \theta) - p(x_\mathrm{new}\g \theta^*)\right)\cdot\left(q^*(\theta)-p(\theta\g x)\right)\dif \theta\right|\dif x_\mathrm{new} \\
\leq & \int\left| \int\left(p(x_{\mathrm{new}}\g \theta) - p(x_\mathrm{new}\g \theta^*)\right)\cdot\left(q^*(\theta)-\cN(\theta\s \theta^*, \delta_n^\top V_{\theta^*}^{'-1}\delta_n)\right)\dif \theta\right|\dif x_\mathrm{new} \\
&+ \int\left| \int\left(p(x_{\mathrm{new}}\g \theta) - p(x_\mathrm{new}\g \theta^*)\right)\cdot\left(\cN(\theta\s \theta^*, \delta_n^\top V_{\theta^*}^{'-1}\delta_n)-\cN(\theta\s \theta^*, \delta_n^\top V_{\theta^*}^{-1}\delta_n)\right)\dif \theta\right|\dif x_\mathrm{new} \\
&+ \int\left| \int\left(p(x_{\mathrm{new}}\g \theta) - p(x_\mathrm{new}\g \theta^*)\right)\cdot\left(\cN(\theta\s \theta^*, \delta_n^\top V_{\theta^*}^{-1}\delta_n)-p(\theta\g x)\right)\dif \theta\right|\dif x_\mathrm{new}.
\end{align*}
We apply the Taylor's theorem to $p(x_{\mathrm{new}}\g \theta) -
p(x_\mathrm{new}\g \theta^*)$: There exists some function
$h_{\theta^*}(\theta)$ such that
\begin{align*}
&p(x_{\mathrm{new}}\g \theta) - p(x_\mathrm{new}\g \theta^*) \\
=& (\theta-\theta^*)\cdot\nabla_\theta p(x_{\mathrm{new}}\g \theta)\big|_{\theta=\theta^*}\\
& + \nabla^2_\theta p(x_{\mathrm{new}}\g \theta)\big|_{\theta=\theta^*}\cdot(\theta-\theta^*)(\theta-\theta^*)^\top \\
& + h_{\theta^*}(\theta)\cdot(\theta-\theta^*)(\theta-\theta^*)^\top,
\end{align*}
where $\lim_{\theta\rightarrow\theta^*}h_{\theta^*}(\theta) = 0.$ We
apply this expansion to each of the term above:
\begin{align*}
&\int\left| \int\left(p(x_{\mathrm{new}}\g \theta) - p(x_\mathrm{new}\g \theta^*)\right)\cdot\left(q^*(\theta)-\cN(\theta\s \theta^*, \delta_n^\top V_{\theta^*}^{'-1}\delta_n)\right)\dif \theta\right|\dif x_\mathrm{new}\\
\leq&\int\left| \int\left( (\theta-\theta^*)\cdot\nabla_\theta p(x_{\mathrm{new}}\g \theta)\big|_{\theta=\theta^*} \right)\cdot\left(q^*(\theta)-\cN(\theta\s \theta^*, \delta_n^\top V_{\theta^*}^{'-1}\delta_n)\right)\dif \theta\right|\dif x_\mathrm{new}\\
&+\int\left| \int\left( \nabla^2_\theta p(x_{\mathrm{new}}\g \theta)\big|_{\theta=\theta^*}\cdot(\theta-\theta^*)(\theta-\theta^*)^\top  \right)\cdot\left(q^*(\theta)-\cN(\theta\s \theta^*, \delta_n^\top V_{\theta^*}^{'-1}\delta_n)\right)\dif \theta\right|\dif x_\mathrm{new}\\
&+\int\left| \int\left( h_{\theta^*}(\theta)\cdot(\theta-\theta^*)(\theta-\theta^*)^\top  \right)\cdot\left(q^*(\theta)-\cN(\theta\s \theta^*, \delta_n^\top V_{\theta^*}^{'-1}\delta_n)\right)\dif \theta\right|\dif x_\mathrm{new}\\
\rightarrow & 0\cdot \int \left| \nabla_\theta p(x_{\mathrm{new}}\g \theta)\big|_{\theta=\theta^*}\right|\dif x_\mathrm{new} + 0\cdot \int \left| \nabla^2_\theta p(x_{\mathrm{new}}\g \theta)\big|_{\theta=\theta^*}\right|\dif x_\mathrm{new} + 0\cdot \int \left|  h_{\theta^*}(\theta)\right|\dif x_\mathrm{new} \\
=& 0
\end{align*}
The key property that enables the calculation above is that
$q^*(\theta)$ and $\cN(\theta\s \theta^*, \delta_n^\top
V_{\theta^*}^{'-1}\delta_n)$ share the same first and second moments.

With the same argument, we can show that 
\begin{align*}
\int\left| \int\left(p(x_{\mathrm{new}}\g \theta) - p(x_\mathrm{new}\g \theta^*)\right)\cdot\left(\cN(\theta\s \theta^*, \delta_n^\top V_{\theta^*}^{-1}\delta_n)-p(\theta\g x)\right)\dif \theta\right|\dif x_\mathrm{new}\rightarrow 0.
\end{align*}

Finally, we work with the middle term.
\begin{align*}
& \int\left| \int\left(p(x_{\mathrm{new}}\g \theta) - p(x_\mathrm{new}\g \theta^*)\right)\cdot\left(\cN(\theta\s \theta^*, \delta_n^\top V_{\theta^*}^{'-1}\delta_n)-\cN(\theta\s \theta^*, \delta_n^\top V_{\theta^*}^{-1}\delta_n)\right)\dif \theta\right|\dif x_\mathrm{new} \\
\leq &\int\left| \int\left((\theta-\theta^*)\cdot\nabla_\theta p(x_{\mathrm{new}}\g \theta)\big|_{\theta=\theta^*}\right)\cdot\left(\cN(\theta\s \theta^*, \delta_n^\top V_{\theta^*}^{'-1}\delta_n)-\cN(\theta\s \theta^*, \delta_n^\top V_{\theta^*}^{-1}\delta_n)\right)\dif \theta\right|\dif x_\mathrm{new} \\
&+ \int\left| \int\left(\nabla^2_\theta p(x_{\mathrm{new}}\g \theta)\big|_{\theta=\theta^*}\cdot(\theta-\theta^*)(\theta-\theta^*)^\top\right)\right.\\
&\left.\cdot\left(\cN(\theta\s \theta^*, \delta_n^\top V_{\theta^*}^{'-1}\delta_n)-\cN(\theta\s \theta^*, \delta_n^\top V_{\theta^*}^{-1}\delta_n)\right)\dif \theta\right|\dif x_\mathrm{new} \\
&+ \int\left| \int\left(h_{\theta^*}(\theta)\cdot(\theta-\theta^*)(\theta-\theta^*)^\top\right)\cdot\left(\cN(\theta\s \theta^*, \delta_n^\top V_{\theta^*}^{'-1}\delta_n)-\cN(\theta\s \theta^*, \delta_n^\top V_{\theta^*}^{-1}\delta_n)\right)\dif \theta\right|\dif x_\mathrm{new} \\
=&0\cdot \int \left| \nabla_\theta p(x_{\mathrm{new}}\g \theta)\big|_{\theta=\theta^*}\right|\dif x_\mathrm{new} \\
&+ (\delta_n^\top V_{\theta^*}^{-1}\delta_n-\delta_n^\top V_{\theta^*}^{'-1}\delta_n)\cdot \int \left| \nabla^2_\theta p(x_{\mathrm{new}}\g \theta)\big|_{\theta=\theta^*}\right|\dif x_\mathrm{new} \\
&+ (\delta_n^\top V_{\theta^*}^{-1}\delta_n-\delta_n^\top V_{\theta^*}^{'-1}\delta_n)\cdot \int \left|  h_{\theta^*}(\theta)\right|\dif x_\mathrm{new} \\
\rightarrow &0
\end{align*}
The last step is because $\delta_n\rightarrow 0$ and $\int
\nabla^2_\theta p(x_{\mathrm{new}}\g
\theta)\big|_{\theta=\theta^*}\dif x_\mathrm{new} < \infty$.

\end{proof}

\section{Proof of \Cref{lemma:vbvm}}
\label{sec:vbvmlemmaproof}

In this proof, we only need to show that \Cref{assumption:gentest}
implies Assumption (2.3) in \citet{kleijn2012bernstein}:
$\int_{\tilde{\theta}>p^{\mathrm{VB}}} \pi^*_{\tilde{\theta}}(\tilde{\theta}\mid
x) \dif \tilde{\theta}\stackrel{P_0}{\rightarrow} 0$ for every
sequence of constants $p^{\mathrm{VB}} \rightarrow \infty$, where
$\tilde{\theta} = \delta_n^{-1}(\theta-\theta^*).$

To prove this implication, we repeat Theorem 3.1, Theorem 3.3, Lemma
3.3, Lemma 3.4 of \citet{kleijn2012bernstein}. The only difference is
that we prove it for the general convergence $\delta_n$ instead of the
parametric convergence rate $\sqrt{n}$. The idea is to consider test
sequences of uniform exponential power around $\theta^*$. We omit the
proof here; see \citet{kleijn2012bernstein} for details.

This proof also resembles the proof of Lemma 1 in
\citet{wang2018frequentist}.

\section{Proof of \Cref{lemma:ivbconsist}}

\label{sec:ivbconsistproof}

We focus on the \gls{VB} posterior of $\theta$ which converges with
the $\delta_n$ rate. Without loss of generality, we consider the
subset of mean field variational family that also shrinks with the
rate $\delta_n$. The rationale of this step is that the \gls{KL}
divergence between exact posterior and the \gls{VB} posterior will
blow up to $\infty$ for other classes of variational families. More
precisely, we assume the following variational family $\mathcal{Q}$
\begin{align}
  q_{\check{\theta}}(\check{\theta}) = q(\mu + \delta_n
  \check{\theta}) | \textrm{det}(\delta_n) |,
\end{align}
where $\check{\theta}:=
\delta_n^{-1}(\theta - \mu)$, for some $\mu \in \Theta$.

Note the variational family is allowed to center at any value, not
necessarily at $\theta^*$. 

We now characterize the limiting distribution of the \gls{KL}
minimizer of the \gls{VB} ideal. In other words, the mass of the
\gls{KL} minimizer concentrates near $\theta^*$ as $n\rightarrow
\infty$:
\[q^\ddagger(\theta) :=
\argmin_{q(\theta)\in\mathcal{Q}^d}
        \gls{KL}(q(\theta)||\pi^*(\theta\mid x))  \stackrel{d}{\rightarrow} \delta_{\theta^*}.\]
It suffices to show
$\int_{B(\theta^*, \xi_n)} q^\ddagger (\theta)\dif 
\theta\stackrel{P_0}{\rightarrow} 1,$
for some $\xi_n\rightarrow 0$ as $n\rightarrow \infty$ due to the
Slutsky's theorem.

The proof below mimics the proof of Lemma 2 of
\citep{wang2018frequentist} (also the Step 2 in the proof of Lemma 3.6
along with Lemma 3.7 in \citet{lu2016gaussapprox}) except we take care
of the extra technicality due to model misspecification. We include
the proof for completeness here.

We start with two claims that we will prove later.
\begin{align}
\label{claim:upbdkl}
\limsup_{n\rightarrow\infty}\min\gls{KL}
(q(\theta)||\pi^*(\theta\mid x)) \leq M,
\end{align}
\begin{align}
\label{claim:compact}
\int_{\mathbb{R}^d\backslash K}q^{\ddagger}(\theta)\dif 
\theta\stackrel{}{\rightarrow}0,
\end{align}
where $M >0$ is some constant and $K$ is the compact set assumed in
the local asymptotic normality condition. We will use them to upper
bound and lower bound $\int_{B(\theta^*, \xi_n)} q^{\ddagger, K}
(\theta)\dif
\theta$.

The upper bound of $\int_{B(\theta^*, \xi_n)} q^{\ddagger, K}
(\theta)\dif
\theta$ is due to the \gls{LAN} condition,
\begin{align*}
&\int q^{\ddagger,K}(\theta)p^{\mathrm{VB}}(\theta\s x)\dif \theta\\
=&\int q^{\ddagger,K}(\theta)\left[p^{\mathrm{VB}}(\theta^*\s x)+
\delta_n^{-1}(\theta-\theta^*)^\top 
V_{\theta^*}\Delta_{n,\theta^*} \right.\\
&\left.- 
\frac{1}{2}[\delta_n^{-1}(\theta-\theta^*)]^\top 
V_{\theta^*}[\delta_n^{-1}(\theta-\theta^*)]+o_P(1)\right]\dif \theta\\
\leq& p^{\mathrm{VB}}(\theta^*\s x) - C_1\sum_{i=1}^d\frac{\eta^2}{\delta_{n,ii}^2}
\int_{B(\theta^*, \eta)^c} q^{\ddagger,K}(\theta)\dif \theta +o_P(1),
\label{eq:klupperbd}
\end{align*}
for large enough $n$ and $\eta << 1$ and some constant $C_1 > 0$. 

The lower bound of the integral is due to the first claim:
\begin{align}
&\int q^{\ddagger,K}(\theta)p^{\mathrm{VB}}(\theta\s x)
\dif \theta\geq p^{\mathrm{VB}}(\theta^*\s x)-M_0,
\end{align}
for some large constant $M_0>M$. This is due to two steps. First, Eq.
31 of \citet{wang2018frequentist} gives
\begin{align}
&\gls{KL}(q^{\ddagger,K}(\theta)||\pi^*(\theta\mid x))\\
=& \log |\det(\delta_n)|^{-1} + 
\sum^d_{i=1} \mathbb{H}(q_{h,i}^{\ddagger,K}(h)) 
- \int q^{\ddagger,K}(\theta)\log \pi^*(\theta\mid x)\dif \theta.
\label{eq:rewritekl}
\end{align}
Then we approximate the last term by the \gls{LAN} condition:
\begin{align}
&\int q^{\ddagger,K}(\theta)\log \pi^*(\theta\mid x)\dif \theta\\
=&\int q^{\ddagger,K}(\theta)\log p(\theta)\dif \theta + 
\int q(\theta)p^{\mathrm{VB}}(\theta\s x)\dif \theta 
-\log \int p(\theta)\exp(p^{\mathrm{VB}}(\theta\s x))\dif \theta\\
=& \int q^{\ddagger,K}(\theta)\log p(\theta)\dif \theta + 
\int q^{\ddagger,K}(\theta)p^{\mathrm{VB}}(\theta\s x)\dif \theta \nonumber\\
&- \left[\frac{d}{2}\log(2\pi) - \frac{1}{2}\log \det V_{\theta^*}+
\log\det(\delta_n)+p^{\mathrm{VB}}(\theta^*\s x)+\log p(\theta^*)+o_P(1)\right].
\label{eq:normalizerapprox}
\end{align}

The above approximation leads to the following approximation to the
\gls{KL} divergence:
\begin{align}
&\gls{KL}(q^{\ddagger,K}(\theta)||\pi^*(\theta\mid x))\\
=& \log |\det(\delta_n)|^{-1} + 
\sum^d_{i=1} \mathbb{H}(q_{h,i}^{\ddagger,K}(h)) -  
\int q^{\ddagger,K}(\theta)\log p(\theta)\dif \theta - 
\int q^{\ddagger,K}(\theta)p^{\mathrm{VB}}(\theta\s x)\dif \theta \nonumber\\
&+\left[\frac{d}{2}\log(2\pi) - \frac{1}{2}\log \det V_{\theta^*}+
\log\det(\delta_n)+p^{\mathrm{VB}}(\theta^*\s x)+
\log p(\theta^*)+o_P(1)\right]\label{eq:canceldelta1}\\
=&\sum^d_{i=1} \mathbb{H}(q_{h,i}^{\ddagger,K}(h)) -  
\int q^{\ddagger,K}(\theta)\log p(\theta)\dif \theta - 
\int q^{\ddagger,K}(\theta)p^{\mathrm{VB}}(\theta\s x)\dif \theta\nonumber\\
&+\frac{d}{2}\log(2\pi) - \frac{1}{2}\log \det V_{\theta^*}+
p^{\mathrm{VB}}(\theta^*\s x)+\log p(\theta^*)+o_P(1).
\label{eq:klapprox}
\end{align}

Then via the first claim above, we have
\begin{align}
&\int q^{\ddagger,K}(\theta)p^{\mathrm{VB}}(\theta\s x)\dif \theta\\
\geq &-M +\sum^d_{i=1} \mathbb{H}(q_{h,i}^{\ddagger,K}(h)) -  
\int q^{\ddagger,K}(\theta)\log p(\theta)\dif \theta \nonumber\\
&+ \frac{d}{2}\log(2\pi) - \frac{1}{2}\log \det V_{\theta^*}+
p^{\mathrm{VB}}(\theta^*\s x)+\log p(\theta^*)+o_P(1)\\
\geq &-M_0 + p^{\mathrm{VB}}(\theta^*\s x) + o_P(1)
\label{eq:kllowerbd}
\end{align}
for some constant $M_0>0$. The last step is because the only term that
depends on $n$ is $\int q^{\ddagger,K}(\theta)\log p(\theta)\dif
\theta$ which is finite due to \Cref{assumption:paramprior}.

Combining the lower and upper bounds of the integral gives
\begin{align*}
&p^{\mathrm{VB}}(\theta^*\s x) - C_1\sum_{i=1}^d\frac{\eta^2}{\delta_{n,ii}^2}
\int_{B(\theta^*, \eta)^c} q^{\ddagger,K}(\theta)\dif \theta +o_P(1)
\geq &-M_0 + p^{\mathrm{VB}}(\theta^*\s x)\\
\Rightarrow&\int_{B(\theta^*, \eta)^c} q^{\ddagger,K}(\theta)\dif \theta +o_P(1)
\leq \frac{M_0\cdot (\min_i\delta_{n,ii})^2}{C_2\eta^2},
\end{align*}
for some constant $C_2>0$. By choosing $\eta =
\sqrt{M_0(\min_i\delta_{n,ii})/C_2}\rightarrow 0$, this term go
to zero as $n$ goes to infinity. In other words, we have shown
$\int_{B(\theta^*, \xi_n)} q^\ddagger (\theta)\dif
\theta\stackrel{P_0}{\rightarrow} 1$ with $\xi_n = \eta$.

We now prove the two claims made at the beginning. To show
\Cref{claim:upbdkl}, it suffices to show that there exists a choice of
$q(\theta)$ such that
\begin{align*}
\limsup_{n\rightarrow\infty}\gls{KL}(q(\theta)||\pi^*(\theta\mid x)) 
<\infty.
\end{align*}
We choose $\tilde{q}(\theta) = \prod_{i=1}^d N(\theta_i;\theta_{0,i}, 
\delta^2_{n,ii}v_i)$ for $v_i>0, i = 1, ..., d$. We thus have
\begin{align}
&\gls{KL}(\tilde{q}(\theta)||\pi^*(\theta\mid x))\\
=&\sum^d_{i=1} \frac{1}{2}\log(v_i)+\frac{d}{2} + d\log(2\pi)-  
\int \tilde{q}(\theta)\log p(\theta)\dif \theta - 
\int \tilde{q}(\theta)p^{\mathrm{VB}}(\theta\s x)\dif \theta\nonumber\\
& - \frac{1}{2}\log \det V_{\theta^*}+p^{\mathrm{VB}}(\theta^*\s x)+
\log p(\theta^*)+o_P(1)\\
=&\sum^d_{i=1} \frac{1}{2}\log(v_i)+\frac{d}{2} + d\log(2\pi)
- \frac{1}{2}\log \det V_{\theta^*}+C_6+o_P(1),
\label{eq:relaxcompact}
\end{align}
for some constant $C_6 > 0.$ The finiteness of limsup is due to the
boundedness of the last term. The second equality is due to the limit
of $\tilde{q}(\theta)$ concentrating around $\theta^*$. Specifically,
we expand $\log p(\theta)$ to the second order around $\theta^*$,
\begin{align*}
&\int \tilde{q}(\theta)\log p(\theta)\dif \theta\\
=& \log p(\theta^*) + \int \tilde{q}(\theta)\left[(\theta-\theta^*)(\log p(\theta^*))'+
\frac{(\theta-\theta^*)^2}{2}
\int_0^1(\log p(\xi\theta+(1-\xi)\theta^*))''(1-\xi)^2\dif \xi\right]
\dif \theta\\
\leq & \log p(\theta^*) + 
\frac{1}{2!}\max_{\xi\in[0,1]}
\left\{\int \tilde{q}(\theta)(\theta-\theta^*)^2
(\log p(\xi\theta+(1-\xi)\theta^*))''
\dif \theta\right\}\\
\leq&\log p(\theta^*) +\frac{M_p}{\sqrt{(2\pi)^d
\det(\delta_n^2)\prod_iv_i}}\int_{\mathbb{R}^d}|\theta|^2e^{(|\theta|+
|\theta^*|)^2}\cdot e^{-\frac{1}{2}
\theta^\top(\delta_nV\delta_n)^{-1}\theta}
\dif \theta\\
\leq & \log p(\theta^*) +
\frac{M_p}{\sqrt{(2\pi)^d\det(\delta_n^2)\prod_iv_i}} e^{\theta^{*2}}
\int_{\mathbb{R}^d}|\theta|^2e^{-\frac{1}{2}\theta^\top
[(\delta_nV\delta_n)^{-1}-2I_d]\theta}\\
\leq & \log p(\theta^*) + C_3M_pe^{\theta^{*2}}\max_d(\delta_{n,ii}^2)
\det(V^{-1}-2\delta_n^2)^{-1}\\
\leq & \log p(\theta^*) + C_4\max_d(\delta_{n,ii}^2)
\end{align*}
where $\max_d(\delta_{n,ii}^2)\rightarrow 0$ and $C_3, C_4 >0$. The
first two equalities are due to Taylor expansion. The third inequality
is due to the tail condition in \Cref{assumption:paramprior}.  The
fourth and fifth are due to rescaling $\theta$ appealing to the mean
of a Chi-squared distribution with $d$ degrees of freedom. The last
inequality is due to $\det(V^{-1}-2\delta_n^2)^{-1} > 0$ for large
enough $n$.

We apply the same Taylor expansion argument to the $\int
\tilde{q}(\theta)p^{\mathrm{VB}}(\theta\s x)\dif \theta$ leveraging
the \gls{LAN} condition
\begin{align*}
&\int_{K_n} \tilde{q}(\theta)p^{\mathrm{VB}}(\theta\s x)\dif \theta\\
=& p^{\mathrm{VB}}(\theta^*\s x) + \int_{K_n} \tilde{q}(\theta)
\left[\delta_n^{-1}(\theta-\theta^*)^\top V_{\theta^*}
\Delta_{n,\theta^*}+\frac{1}{2}(\delta_n^{-1}(\theta-\theta^*))^\top 
V_{\theta^*} \delta_n^{-1}(\theta-\theta^*)+o_P(1)\right]\dif \theta\\
\leq& p^{\mathrm{VB}}(\theta^*\s x) + C_6  + o_P(1)
\end{align*}
where $K_n$ is a compact set and $C_6 > 0$ is some constant.

For the set outside of this compact set $K_n$, choose
$\tilde{q}(\theta) =
\cN(\theta;\theta^*+\Delta_{n,\theta^*},
\delta_nV_{\theta^*}\delta_n).$ 
\begin{align}
&\int_{\mathbb{R}^d\backslash K_n} \tilde{q}(\theta)(\log p(\theta) + 
p^{\mathrm{VB}}(\theta\s x))\dif \theta\\
\leq &C_7 \int_{\mathbb{R}^d\backslash K_n} \cN(\theta;\theta^*+
\Delta_{n,\theta^*}, \delta_nV_{\theta^*}\delta_n)(\log p(\theta) + 
p^{\mathrm{VB}}(\theta\s x))\dif \theta\\
\leq &C_8 [\det(\delta_n)^{-1}\log(\det(\delta_n)^{-1})] 
\int_{\mathbb{R}^d\backslash K_n} \cN(\tilde{\theta}; \Delta_{n,\theta^*}, 
V_{\theta^*})
\log \pi^*(\tilde{\theta}\mid x) \det(\delta_n)\dif \tilde{\theta}\\
\leq &C_9 \log(\det(\delta_n)^{-1})]\int_{\mathbb{R}^d\backslash K_n}
[\pi^*(\tilde{\theta}\mid x)+o_P(1)]
\log \pi^*(\tilde{\theta}\mid x), V_{\theta^*})\dif \tilde{\theta}\\
\leq & C_{10} \log(\det(\delta_n)^{-1})]
\int_{\mathbb{R}^d\backslash K_n}
[\cN(\tilde{\theta}; \Delta_{n,\theta^*}, V_{\theta^*})+o_P(1)]
\log \cN(\tilde{\theta}; \Delta_{n,\theta^*}, 
V_{\theta^*})\dif \tilde{\theta}\\
\leq &o_P(1)
\end{align}
for some $C_7, C_8, C_9, C_{10} > 0$. The first two inequalities are
due to $\tilde{q}(\theta)$ centering at $\theta^*$ and a change of
variable step. The third and fourth inequality is due to
\Cref{lemma:vbvm} and Theorem 2 in \citet{piera2009convergence}. The
fifth inequality is due to a choice of fast enough increasing sequence
of compact sets $K_n$.

We repeat this argument for the lower bound of $\int
\tilde{q}(\theta)(\log p(\theta) + p^{\mathrm{VB}}(\theta\s x))\dif
\theta$. Hence the first claim is proved.

To prove the second claim \Cref{claim:compact}, we note that, for each
$\epsilon>0$, there exists an $N$ such that for all $n> N$ we have
$\int_{{||\theta -\mu||>M}}q(\theta)\dif \theta <\epsilon$ because
$\mathcal{Q}^d$ has a shrinking-to-zero scale. It leads to 
\[\int_{\mathbb{R}^d\backslash K}q^{\ddagger}(\theta)\dif \theta\leq
\int_{\mathbb{R}^d\backslash B(\mu,
M)}q^{\ddagger}(\theta)\dif \theta\leq \epsilon.\]

\section{Proof of \Cref{lemma:ivbnormal}}

\label{sec:ivbnormalproof}

\begin{proof}

To show the convergence of optimizers from two minimization problems,
we invoke $\Gamma$-convergence: if two functionals $\Gamma-$converge,
then their minimizer also converge. We refer the readers to Appendix C
of \citet{wang2018frequentist} for a review of $\Gamma$-convergence.

For notation convenience, we index the variational family by some
finite dimensional parameter $m$. The goal is to show
\[F_n(m):=\gls{KL}(q(\theta; m)||\pi^*(\theta\mid x))\]
$\Gamma$-converges to
\[F_0(m):=\gls{KL}(q(\theta; m)||\cN(\theta;\theta^*+
\delta_n\Delta_{n,\theta^*},\delta_nV_{\theta^*}^{-1}\delta_n)) 
- \Delta_{n,\theta^*}^\top V_{\theta^*}\Delta_{n,\theta^*}\] in
  $P_0$-probability as $n\rightarrow 0$. 

Write the densities in the mean field variational family in the
following form: $q(\theta) =
\prod^d_{i=1}\delta_{n,ii}^{-1}q_{h,i}(h),$ where $h =
\delta_n^{-1}(\theta - \mu)$ for some $\mu\in\Theta$. This form of
density is consistent with the change of variable step in
\Cref{sec:ivbconsistproof}.

\begin{assumption} We assume the following conditions on $q_{h,i}$:
\label{assumption:scoreint}
\label{assumption:vbpost_tech}
\begin{enumerate}
\item $q_{h,i}, i = 1, ..., d$ have continuous densities, positive and finite entropies, and $\int q_{h,i}'(h)\dif h < \infty, i = 1, ..., d.$\\
\item If $q_h$ is has zero mean, we assume $\int h^2\cdot
q_h(h)\dif h<\infty$ and $\sup_{z,x} |(\log p(z,x\mid\theta))''
|\leq C_{11}\cdot q_h(\theta)^{-C_{12}} $ for some $C_{11}, C_{12}
> 0;$ $|p^{\mathrm{VB}}(\theta\s x)''| \leq C_{13}\cdot q_h(\theta)^{-C_{14}}$
for some $C_{13}, C_{14} > 0.$

\item If $q_h$ has nonzero mean, we assume $\int h\cdot q_h(h)\dif
h<\infty$ and $\sup_{z,x} |(\log p(z,x\mid\theta))' |\leq
C_{11}\cdot q_h(\theta)^{-C_{12}} $ for some $C_{11}, C_{12} > 0$;
$|p^{\mathrm{VB}}(\theta\s x)' \leq C_{13}|\cdot q_h(\theta)^{-C_{14}}$ for
some $C_{13}, C_{14} > 0.$
\end{enumerate}
\end{assumption}

\Cref{assumption:vbpost_tech}.1 ensures that convergence in the
finite-dimensional parameter implied convergence in \gls{TV} distance
due to Eqs 64-68 of \citet{wang2018frequentist}.
\Cref{assumption:vbpost_tech} is analogous to Assumptions 2 and 3 of
\citet{wang2018frequentist}.

Leveraging the fundamental theorem of $\Gamma-$convergence
\citep{dal2012introduction,braides2006handbook}, the
$\Gamma$-convergence of the two functionals implies $m_n
\stackrel{P_0}{\rightarrow} m_0$;  $m_n$ minimizes $F_n$ and $m_0$
minimizes $F_0$. Importantly, this is true because
$\Delta_{n,\theta^*}^\top V_{\theta^*}\Delta_{n,\theta^*}$ is a
constant bounded in $P_0$ probability and does not depend on $m$. The
convergence in total variation then follows from
\Cref{assumption:scoreint}.

Therefore, what remains is to prove the $\Gamma$-convergence of the
two functionals.

We first rewrite $F_n(m,\mu)$.
\begin{align}
&F_n(m,\mu):=\gls{KL}(q(\theta;m,\mu)||\pi^*(\theta\mid x))\\
=& \log|\det(\delta_n)|^{-1} +\sum^d_{i=1}\mathbb{H}(q_{h,i}(h;m)) - 
\int q(\theta;m,\mu)\log p(\theta)\dif \theta-\int q(\theta;m,\mu)
p^{\mathrm{VB}}(\theta\s x)\dif \theta\nonumber\\
& + \log \int p(\theta)\exp(p^{\mathrm{VB}}(\theta\s x))\dif \theta\\
=&\log|\det(\delta_n)|^{-1} +\sum^d_{i=1}\mathbb{H}(q_{h,i}(h;m)) - 
\int q(\theta;m,\mu)\log p(\theta)\dif \theta-\int q(\theta;m,\mu)
p^{\mathrm{VB}}(\theta\s x)\dif \theta\nonumber\\
& + \left[\frac{d}{2}\log(2\pi) - \frac{1}{2}\log \det V_{\theta^*}+
\log\det(\delta_n)+p^{\mathrm{VB}}(\theta^*\s x)+\log p(\theta^*)+o_P(1)\right]\\
=&\sum^d_{i=1}\mathbb{H}(q_{h,i}(h;m)) -\int q(\theta;m,\mu)
p^{\mathrm{VB}}(\theta\s x)\dif \theta +\log p(\theta^*)
- \log p(\mu)\nonumber\\
&+\left[\frac{d}{2}\log(2\pi) - 
\frac{1}{2}\log \det V_{\theta^*}+p^{\mathrm{VB}}(\theta^*\s x) + o_P(1)\right]\\
=& \sum^d_{i=1}\mathbb{H}(q_{h,i}(h;m))- \int \delta_n^{-1}
(\theta-\theta^*)^\top V_{\theta^*}\Delta_{n,\theta^*}\cdot 
q(\theta;m,\mu)\dif \theta\nonumber \\
& + \int \frac{1}{2}(\delta_n^{-1}(\theta-\theta^*))^\top 
V_{\theta^*} \delta_n^{-1}(\theta-\theta^*)\cdot q(\theta;m,\mu)
\dif\theta - \left[\frac{d}{2}\log(2\pi) - \frac{1}{2}\log \det 
V_{\theta^*}+o_P(1)\right],
\label{eq:fn1}
\end{align}
due to algebraic operations and the \gls{LAN} condition of
$p^{\mathrm{VB}}(\theta\s x)$. To extend from the compact set $K$ to
the whole space $\mathbb{R}^d$, we employ the same argument as in
\Cref{eq:relaxcompact}.

Next we rewrite $F_0(m,\mu)$.
\begin{align*}
&\gls{KL}(q(\theta;m,\mu)||\cN(\theta;\theta^*+\delta_n
\Delta_{n,\theta^*},
\delta_nV_{\theta^*}^{-1}\delta_n))\\
=&\log|\det(\delta_n)|^{-1} +\sum^d_{i=1}\mathbb{H}(q_{h,i}(h;m)) 
+\int q(\theta;m,\mu) \log \cN(\theta;\theta^*+\delta_n
\Delta_{n,\theta^*},
\delta_nV_{\theta^*}^{-1}\delta_n)\dif \theta\\
=&\log|\det(\delta_n)|^{-1} +\sum^d_{i=1}\mathbb{H}(q_{h,i}(h;m)) 
+\frac{d}{2}\log(2\pi) - \frac{1}{2}\log \det V_{\theta^*}
+\log\det(\delta_n)\nonumber\\
&+ \int q(\theta;m,\mu)\cdot (\theta - \theta^*
-\delta_n\Delta_{n,\theta^*})^\top \delta_n^{-1}
V_{\theta^*}\delta_n^{-1}(\theta - 
\theta^*-\delta_n\Delta_{n,\theta^*})\dif \theta\\
=&\sum^d_{i=1}\mathbb{H}(q_{h,i}(h;m)) +\frac{d}{2}\log(2\pi) 
- \frac{1}{2}\log \det V_{\theta^*} + 
\Delta_{n,\theta^*}^\top V_{\theta^*}\Delta_{n,\theta^*} \nonumber\\
&- \int \delta_n^{-1}(\theta-\theta^*)^\top V_{\theta^*}
\Delta_{n,\theta^*}\cdot q(\theta;m,\mu)\dif \theta + 
\int \frac{1}{2}(\delta_n^{-1}(\theta-\theta^*))^\top V_{\theta^*} 
\delta_n^{-1}(\theta-\theta^*)\cdot q(\theta;m,\mu)\dif\theta.
\end{align*}

These representations of $F_0(m,\mu)$ and $F_n(m,\mu)$ leads to
\begin{align}
&F_0(m,\mu) - \Delta_{n,\theta^*}^\top V_{\theta^*}\Delta_{n,\theta^*}\\
= &\sum^d_{i=1}\mathbb{H}(q_{h,i}(h;m)) -\frac{d}{2}\log(2\pi) 
+ \frac{1}{2}\log \det V_{\theta^*} \nonumber\\
&- \int \delta_n^{-1}(\theta-\theta^*)^\top V_{\theta^*}
\Delta_{n,\theta^*}\cdot q(\theta;m,\mu)\dif \theta\nonumber\\
& + \int \frac{1}{2}(\delta_n^{-1}(\theta-\theta^*))^\top 
V_{\theta^*} \delta_n^{-1}(\theta-\theta^*)\cdot q(\theta;m,\mu)
\dif\theta\\
=&+\infty\cdot(1-\mathbb{I}_{\mu}(\theta^*)) + 
[\sum^d_{i=1}\mathbb{H}(q_{h,i}(h;m)) -\frac{d}{2}\log(2\pi) 
+ \frac{1}{2}\log \det V_{\theta^*}\nonumber\\
&- \int \delta_n^{-1}(\theta-\theta^*)^\top V_{\theta^*}
\Delta_{n,\theta^*}\cdot q(\theta;m,\mu)\dif \theta \nonumber\\
&+ \int \frac{1}{2}(\delta_n^{-1}(\theta-\theta^*))^\top 
V_{\theta^*} \delta_n^{-1}(\theta-\theta^*)\cdot q(\theta;m,\mu)
\dif\theta]\cdot 
\mathbb{I}_{\mu}(\theta^*)
\label{eq:f01}
\end{align}
where the last equality is due to
\Cref{assumption:scoreint}.

Comparing \Cref{eq:fn1} and \Cref{eq:f01}, we can prove the $\Gamma$
convergence. 

Let $m_n \rightarrow m$. When $\mu\ne\theta^*$, $\liminf_{n\rightarrow
\infty}F_n(m_n,\mu) = +\infty$. When $\mu = \theta^*$, we have
$F_n(m,\mu) = F_0(m,\mu)-\Delta_{n,\theta^*}^\top
V_{\theta^*}\Delta_{n,\theta^*}+o_P(1)$, which implies $F_0(m,\mu)
\leq \lim_{n\rightarrow \infty}F_n(m_n,\mu)$ in $P_0$
probability by \Cref{assumption:scoreint}. These implies the limsup
inequality required by $\Gamma$-convergence.

We then show the existence of a recovery sequence. When
$\mu\ne\theta^*$, $F_0(m,\mu) = +\infty$. When $\mu = \theta^*$, we
can simply choose $m_n = \theta^*$. Then we have $F_0(m,\mu) \leq
\lim_{n\rightarrow
\infty}F_n(p^{\mathrm{VB}},\mu)$ in $P_0$ probability and the
continuity of $F_n$. The $\Gamma$-convergence of the $F$ functionals
then follows from the limsup inequalities above and the existence of
recovery sequence.

Finally, we have \[\argmin F_0 = \argmin F_0 -
\Delta_{n,\theta^*}^\top V_{\theta^*}\Delta_{n,\theta^*}\] because
$\Delta_{n,\theta^*}^\top V_{\theta^*}\Delta_{n,\theta^*}$ does not
depend on $m$ or $\mu$. The convergence of \gls{KL} minimizers are
proved.

\end{proof}

\section{Details of the Simulations}

\label{sec:detailsim}

We follow the protocol as implemented in Stan. For HMC, we run four
parallel chains and use 10,000 burn-in samples, and determine mixing
using the R-hat convergence diagnostic (R-hat<1.01). For variational
Bayes, we run optimization until convergence (i.e. a local optimum).
We cannot confirm if the local optimal we reached is global. Further,
we conduct multiple parallel runs under each simulation setup and
report the mean and the standard deviation of ``RMSE'' or ``Mean KL.''
The error bars in \Cref{fig:varygamma} are the standard deviation
across different runs of the same simulation.

\clearpage
\putbib[BIB1]
\end{bibunit}

\end{document}